\tikzstyle{block} = [rectangle, draw, fill=blue!20, 
\tikzstyle{line} = [draw, -latex']
\tikzstyle{cloud} = [draw, ellipse,fill=red!20, node distance=3cm,
\newtheorem{thm}{Theorem}
\newtheorem{prop}{Proposition}
\newtheorem{cor}{Corollary}
\newtheorem{lemma}{Lemma}
\newtheorem{defn}{Definition}
\newmdtheoremenv{mhyp}{Hypothesis} 
\newmdtheoremenv{mthm}{Theorem}
\newmdtheoremenv{mtheorem}{Theorem}
\newmdtheoremenv{mprop}{Proposition}
\newmdtheoremenv{mcor}{Corollary}
\newmdtheoremenv{mlemma}{Lemma}
\newmdtheoremenv{mdefn}{Definition}
\newmdtheoremenv{mmydef}{Definition}
\newmdtheoremenv{mconj}{Conjecture}
\newmdtheoremenv{mex}{Example}
\newmdtheoremenv{mexercise}{Exercise}
\DeclareMathAlphabet\mathbfcal{OMS}{cmsy}{b}{n}
\DeclareMathOperator*{\argmax}{arg\,max}
\DeclareMathOperator*{\argmin}{arg\,min}
\def \P{\mathbf{P}}
\def \PD{\text{PD}}
\newcommand{\bigcdot}{\boldsymbol{\cdot}}
\def \bx{\boldsymbol{x}}
\def \by{\boldsymbol{y}}
\def \bz{\boldsymbol{z}}
\def \bh{\boldsymbol{h}}
\def \bb{b}
\def \br{\boldsymbol{r}}
\def \bA{\boldsymbol{A}}
\def \bB{\boldsymbol{B}}
\def \bC{\boldsymbol{C}}
\def \bW{W}
\def\R{{\mathbb R}}
\def \Epsilon{\mathcal{E}}
\def \EPSILON{E}
\newcommand{\qq}{\vspace*{-2mm}}
\newcommand {\richb}[1]{{\color{blue}\sf{[richb: #1]}}}
\newcommand  {\randall}[1]{{\color{red}\sf{[rb: #1]}}}
\title{
The Geometry of Deep Networks: \\ Power Diagram Subdivision
}
\author{%
  Randall Balestriero\\
  ECE Department\\
  Rice University\\
   \And
  Romain Cosentino\\
  ECE Department\\
  Rice University\\
   \And
  Behnaam Aazhang\\
  ECE Department\\
  Rice University\\
   \And
  Richard Baraniuk\\
  ECE Department\\
  Rice University\\
}
\begin{document}

\maketitle

\begin{abstract}
\noindent
We study the geometry of deep (neural) networks (DNs) with piecewise affine and convex nonlinearities. 
The layers of such DNs have been shown to be {\em max-affine spline operators} (MASOs) that partition their input space and apply a region-dependent affine mapping to their input to produce their output.
We demonstrate that each MASO layer's input space partitioning corresponds to a {\em power diagram} (an extension of the classical Voronoi tiling) with a number of regions that grows exponentially with respect to the number of units (neurons).
We further show that a composition of MASO layers (e.g., the entire DN) produces a progressively subdivided power diagram and provide its analytical form. 
The subdivision process constrains the affine maps on the (exponentially many) power diagram regions to greatly reduce their complexity.
For classification problems, we obtain a formula for a MASO DN's decision boundary in the input space plus a measure of its curvature that depends on the DN's nonlinearities, weights, and architecture. 
Numerous numerical experiments support and extend our theoretical results.

\end{abstract}

\section{Introduction}
\label{sec:intro}

Deep learning has significantly advanced our ability to address
a wide range of difficult machine learning and signal
processing problems. Today’s machine learning landscape
is dominated by deep (neural) networks (DNs), which are
compositions of a large number of simple parameterized
linear and nonlinear transformations. Deep networks perform
surprisingly well in a host of applications; however,
surprisingly little is known about why or how they work so
well.

Recently, \cite{reportRB,balestriero2018spline} connected a
large class of DNs to a special kind of spline, which enables
one to view and analyze the inner workings of a DN using
tools from approximation theory and functional analysis.
In particular, when the DN is constructed using convex
and piecewise affine nonlinearities (such as ReLU, leaky-
ReLU, max-pooling, etc.), then its layers can be written
as {\em Max-Affine Spline Operators} (MASOs). An important
consequence for DNs is that each layer partitions its input
space into a set of regions and then processes inputs via a
simple affine transformation that changes from region to
region. 
{\em Understanding the geometry of the layer partition
regions – and how the layer partition regions combine into
a global input partition for the entire DN – is thus key to
understanding the operation of DNs.}

There has only been limited work in the geometry of deep
networks. The originating MASO work of \cite{reportRB,balestriero2018spline} focused on the analytical form of the
region-dependent affine maps and empirical statistics on
the partition. The work of \cite{wang2018a} empirically
studied this partitioning highlighting the fact that knowledge
of the DN partitioning alone is sufficient to reach high
performance. Other works have focused on the properties
of the partitioning, such as upper bounding the number of
regions \cite{montufar2014number,raghu2017expressive,hanin2019complexity}. An explicit
characterization of the input space partitioning of one
hidden layer DNs with ReLU activation has been proposed
in \cite{zhang2016understanding} by means of tropical geometry.

In this paper, we adopt a computational and combinatorial
geometry \cite{pach2011combinatorial,preparata2012computational} perspective of MASO-based DNs to derive the analytical
form of the input-space partition of a DN unit, a DN
layer, and an entire end-to-end DN. We demonstrate that
each MASO DN layer partitions its input feature map space partitioning according to a {\em power
diagram} (PD) (also known as a Laguerre–Voronoi diagram) \cite{aurenhammer1988geometric} with an exponentially large number
of regions.
Furthermore, the composition of the several MASO layers comprising a DN effects a {\em subdivision} process that creates the overall DN input-space partition.

Our complete, analytical characterization of the input-space and feature map
partition of MASO DNs opens up new avenues to study the
geometrical mechanisms behind their operation.

We summarize our contributions, which apply to any DN
employing piecewise affine and convex nonlinearities such
as fully connected, convolutional, with residual connections:
\begin{enumerate}

    \item We demonstrate that a DN partitions its input feature map space according to a {\em PD subdivision} (Sections~\ref{sec:layer},~\ref{sec:DEEP}). 
    We derive the analytical formula for a DN's PDs and point out their most interesting geometrical properties.

    \item We study the  computational and combinatorial geometric properties of the layer and DN partitioning (Section~\ref{sec:cell_properties}). In particular, a DN can infers the PD region to which any input belongs with a computational complexity that is asymptotically logarithmic in the number of regions.

    \item We demonstrate how the centroids of the layer PDs can be efficiently computed via backpropagation (Section~\ref{sec:computation}), which permits ready visualization of a PD.

    \item In the classification setting, we derive an analytical formula for the DN's decision boundary in term of the DN input space partitioning (Section~\ref{sec:boundary}). 
    The analytical formula enables us to characterize certain geometrical properties of the boundary.
    
\end{enumerate}

Additional background information plus proofs of the main
results are provided in several appendices. 

\qq
\section{Background on Deep Networks and Max-Affine Spline Operators}
\label{sec:back}
\qq

A deep network (DN) is an operator $f_\Theta$ with parameters $\Theta$ that maps an input signal $\bx\in\R^D$ to the output prediction $\widehat{y}\in \R^C$. Current DNs can be written as a  composition of $L$ intermediate {\em layer} mappings $f^{(\ell)}: \mathcal{X}^{(\ell-1)} \rightarrow \mathcal{X}^{(\ell)}$ ($\ell=1,\dots, L$) with $\mathcal{X}^{(\ell)}\subset \mathbb{R}^{D(\ell)}$ that transform an input {\em feature map} $\bz^{(\ell-1)}$ into the output feature map $\bz^{(\ell)}$ with the initializations $\bz^{(0)}(x):=x$ and $D(0)=D$. The feature maps $\bz^{(\ell)}$ can be viewed equivalently as signals, flattened vectors, or tensors. 

DN layers can be constructed from a range of different linear and nonlinear operators. 
One important linear operator is the \textbf{{\em fully connected operator}} that performs an arbitrary affine transformation by multiplying its input by the dense matrix $\bW^{(\ell)} \in \mathbb{R}^{D(\ell) \times D(\ell-1)}$ and adding the arbitrary bias vector $\bb_\bW^{(\ell)} \in \mathbb{R}^{D(\ell)}$ as in $f^{(\ell)}_\bW \!\left(\bz^{(\ell-1)}(\bx)\right):=   \bW^{(\ell)}\bz^{(\ell-1)}(\bx)+\bb_\bW^{(\ell)}$. Further examples are provided in \cite{goodfellow2016deep}. 
Given the collection of linear and nonlinear operators making up a DN, the following definition yields a single, unique layer decomposition. 

\medskip
\begin{defn}
A DN {\bf layer} $f^{(\ell)}$ comprises a single nonlinear DN operator composed with any (if any) preceding linear operators that lie between it and the preceding nonlinear operator.
\label{def:layer}
\end{defn}

Work from \cite{reportRB,balestriero2018spline} connects DN layers with {\em max-affine spline operators} (MASOs) . 
A MASO is an operator $S[A,B]:\mathbb{R}^{D}\rightarrow \mathbb{R}^{K}$ that 
concatenates $K$ independent {\em max-affine splines} \cite{magnani2009convex,hannah2013multivariate}, with each spline formed from $R$ affine mappings.
The MASO parameters consist of the ``slopes'' $A \in \mathbb{R}^{K \times R \times D}$ and the ``offsets/biases'' $B \in \mathbb{R}^{K\times R}$.\footnote{
The three subscripts of the slopes tensor $[A]_{k,r,d}$ correspond to output $k$, partition region $r$, and input signal index $d$.
The two subscripts of the offsets/biases tensor $[B]_{k,r}$ correspond to output $k$ and partition region $r$.
}
Given the input $\bx$, a MASO produces the output $\bz$ via 
\begin{align}
[\bz]_k\hspace{-0.1cm}
&=
\left[S[A,B](\bx)\right]_k=\hspace{-0.02cm}
\max_{r} \left(\left\langle [A]_{k,r,\bigcdot}, \bx \right\rangle\hspace{-0.08cm}+\hspace{-0.08cm}[B]_{k,r} \right),
\label{eq:MASO}
\end{align}
where $[\bz]_k$ denotes the $k^{\rm th}$ dimension of $\bz$.
The key background result for this paper is that a very large class of DNs are constructed from MASOs layers.

\medskip
\begin{thm}
Any DN layer $f^{(\ell)}$ constructed from operators that are piecewise-affine and convex can be written as a MASO with parameters $A^{(\ell)},B^{(\ell)}$ and output dimension $K=D(\ell)$.
Hence, a DN is a composition of $L$ MASOs \cite{reportRB,balestriero2018spline}.
\label{thm:MASO}
\end{thm}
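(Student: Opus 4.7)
The plan is to reduce the statement to a coordinate-wise representation result for convex piecewise-affine scalar functions, then absorb any preceding linear operators into the max-affine parameters.

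First I would invoke the classical representation theorem for convex piecewise-affine functions: any function $\phi: \mathbb{R}^m \to \mathbb{R}$ that is convex and piecewise-affine with finitely many pieces can be written as a pointwise maximum of finitely many affine functions,
\begin{equation*}
\phi(\bv) = \max_{r=1,\dots,R} \bigl(\langle \balpha_r, \bv \rangle + \beta_r\bigr).
\end{equation*}
This follows because the epigraph of such a $\phi$ is a closed convex polyhedron, hence the intersection of finitely many closed half-spaces, each of which corresponds to one of the supporting affine pieces lying below $\phi$.

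Next I would unpack Definition~\ref{def:layer}. Every layer decomposes as $f^{(\ell)} = g \circ T$, where $T: \mathbb{R}^{D(\ell-1)} \to \mathbb{R}^{m}$ is an affine map $T(\bu) = M\bu + \boldsymbol{c}$ obtained by collapsing the preceding linear operators (with $T$ the identity when none are present), and $g: \mathbb{R}^m \to \mathbb{R}^{D(\ell)}$ is the single nonlinear operator. By hypothesis, $g$ is piecewise-affine and convex in each output coordinate, so the scalar lemma yields, for every $k = 1,\dots,D(\ell)$,
\begin{equation*}
[g(\bv)]_k = \max_{r=1,\dots,R_k}\bigl(\langle \balpha_{k,r}, \bv\rangle + \beta_{k,r}\bigr),
\end{equation*}
and setting $R := \max_k R_k$ (with repeated pieces used to pad shorter splines) gives a uniform index set. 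Composing the two stages gives, for each $k$,
\begin{align*}
[f^{(\ell)}(\bu)]_k &= [g(M\bu + \boldsymbol{c})]_k = \max_{r}\bigl(\langle \balpha_{k,r}, M\bu + \boldsymbol{c}\rangle + \beta_{k,r}\bigr) \\
&= \max_{r}\bigl(\langle M^{\top} \balpha_{k,r},\, \bu\rangle + \langle \balpha_{k,r}, \boldsymbol{c}\rangle + \beta_{k,r}\bigr),
\end{align*}
which matches the MASO form~\eqref{eq:MASO} with slopes $[A^{(\ell)}]_{k,r,\bigcdot} := M^{\top} \balpha_{k,r}$, biases $[B^{(\ell)}]_{k,r} := \langle \balpha_{k,r}, \boldsymbol{c}\rangle + \beta_{k,r}$, and output dimension $K = D(\ell)$. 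The second claim is then immediate: since $f_\Theta = f^{(L)} \circ \cdots \circ f^{(1)}$ and each factor is a MASO, the whole network is a composition of $L$ MASOs.

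The main obstacle is not any of the algebra above, which is routine, but the supporting verifications that make the scalar lemma applicable in practice: one must check that each standard nonlinear operator really is convex and piecewise-affine per output coordinate. For instance, ReLU is $\max(0,u)$, leaky-ReLU is $\max(\eta u, u)$, absolute value is $\max(u,-u)$, spatial or channel max-pooling is a max of coordinate projections (a max of affine functions, hence convex piecewise-affine), and average pooling is outright affine (a trivial one-piece MASO). These checks are individually short case analyses, but they are what upgrades the algebraic max-affine lemma into the claimed universality statement for the operator zoo used in contemporary DNs.
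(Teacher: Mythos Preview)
Your argument is correct and is essentially the standard route to this representation result: invoke the max-of-affines characterization of convex piecewise-affine scalar functions, compose with the preceding affine map, and read off the slope/bias tensors. Note, however, that the paper does not supply its own proof of Theorem~\ref{thm:MASO}; the statement is imported verbatim as a background result from \cite{reportRB,balestriero2018spline}, so there is no in-paper proof to compare against beyond the illustrative leaky-ReLU example given immediately after the theorem. Your sketch is exactly what those references do, and the per-operator verifications you flag (ReLU, leaky-ReLU, absolute value, max-pooling, average pooling) are carried out explicitly there.
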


For example, a layer made of a fully connected operator followed by a leaky-ReLU with leakiness $\eta$ has parameters $[A^{(\ell)}]_{k,1,\bigcdot}=[W^{(\ell)}]_{k,\bigcdot},[A^{(\ell)}]_{k,2,\bigcdot}=\eta [W^{(\ell)}]_{k,\bigcdot}$ for the slope parameter and $[B^{(\ell)}]_{k,1,\bigcdot}=[b^{(\ell)}]_{k},[B^{(\ell)}]_{k,2}=\eta [b^{(\ell)}]_{k}$ for the bias.
%
%
A DN comprising $L$ MASO layers is a continuous affine spline operator with an input space partition and a partition-region-dependent affine mapping.
%
{\em However, little is known analytically about the input-space partition.}

{\em This paper characterizes the geometry of the MASO partitions of the input space and the feature map spaces $\mathcal{X}^{(\ell)}$.}
We proceed by first studying the geometry of a single layer (Section \ref{sec:layer}) and then the composition of $L$ layers that forms a complete DN (Section \ref{sec:DEEP}).
Voronoi diagrams and their generalization, Power diagrams, play a key r\^{o}le in our analysis, and we turn to these next.

\section{Background on Voronoi and Power Diagrams}

A {\em power diagram} (PD), also known as a Laguerre–Voronoi diagram
\cite{aurenhammer1988geometric}, is a generalization of the classical Voronoi diagram.
\medskip

\begin{defn}
A PD partitions a space $\mathcal{X}$ into $R$ disjoint regions $\Omega=\{\omega_1,\dots,\omega_R\}$ such that $\cup_{r=1}^R \omega_r=\mathcal{X}$, where each cell is obtained via $\omega_r = \{ \bx \in \mathcal{X} :r(\bx)=r\}, r=1,\dots,R$, with
\begin{align}
    r(\bx)=\argmin_{k=1,\dots,R} \|\bx - [\mu]_{k,\bigcdot} \|^2-[{\rm rad}]_k ,\label{eq:cell_pd}
\end{align}
\end{defn}

The parameter $[\mu]_{k,\bigcdot}$ is called the {\em centroid}, while $[{\rm rad}]_k$ is called the {\em radius}.
The PD is a generalization of the {\em Voronoi diagram} (VD) by introduction of the external radius term to the $\ell_2$ distance, leading to the {\em Laguerre distance} \cite{imai1985voronoi}.
See Figure~\ref{fig:PD} for two equivalent geometric interpretations of a PD.

In general, a PD is defined with nonnegative radii to provide additional geometric interpretations (see Appendix~\ref{appendix:extra_geometric}). However, the PD is the same under global shifting as $\argmin_k \|\bx - [\mu]_{k,\bigcdot} \|^2-[{\rm rad}]_k =\argmin_k \|\bx - [\mu]_{k,\bigcdot} \|^2-([{\rm rad}]_k +Q)$. Thus, we allow for arbitrary radius since it can always be shifted back to nonnegative by setting $Q=\min_k [{\rm rad}]_k$.
For additional geometric insights on VDs and PDs see \cite{preparata2012computational} and Appendix~\ref{appendix:extra_geometric}.


\begin{figure}[t!]
    \centering
    \begin{minipage}{0.55\linewidth}
    \centering
    \begin{minipage}{0.49\linewidth}
    \vspace{0.79cm}
    \includegraphics[width=1\linewidth]{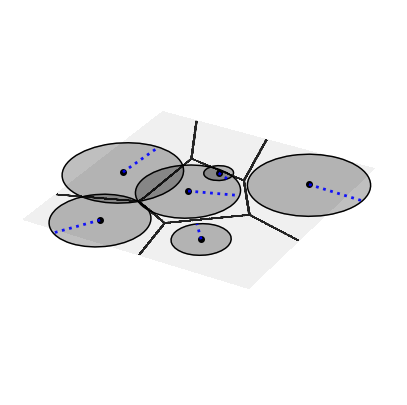}
    \end{minipage}
    \begin{minipage}{0.49\linewidth}
    \includegraphics[width=1\linewidth]{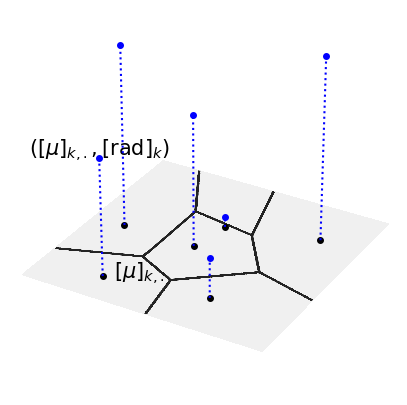}
    \end{minipage}
    \end{minipage}
    \begin{minipage}{0.44\linewidth}
    \caption{\small Two equivalent representations of a power diagram (PD).  {\bf Left:}~The grey circles have center $[\mu]_{k,\bigcdot}$ and radii $[{\rm rad}]_{k}$; each point $\bx$ is assigned to a specific region according to the Laguerre distance defined as the length of the segment tangent to and starting on the circle and reaching $\bx$.
    {\bf Right:}~
    A PD in $\mathbb{R}^D$ (here $D=2$) is constructed by lifting the centroids $[\mu]_{k,\bigcdot}$ up into an additional dimension in $\mathbb{R}^{D+1}$ by the distance $[{\rm rad}]_{k}$ and then finding the Voronoi diagram (VD) of the augmented centroids 
    $([\mu]_{k,\bigcdot},[{\rm rad}]_{k})$ in $\mathbb{R}^{D+1}$.
    The intersection of this higher-dimensional VD with the originating space $\mathbb{R}^D$ yields the PD.
    %
    \normalsize}
    \label{fig:PD}
    \end{minipage}
\end{figure}

\section{Input Space Power Diagram of a MASO Layer}
\label{sec:layer}

Like any spline, it is the interplay between the (affine) spline
mappings and the input space partition that work the magic
in a MASO DN. 
Indeed, the partition opens up new geometric
avenues to study how a MASO-based DN clusters and
organizes signals in a hierarchical fashion. 
However, little
is known analytically about the input-space partition other
than in the simplest case of a single unit with a constrained
bias value \cite{reportRB,balestriero2018spline}. 

We now embark on a programme to fully characterize the geometry of the input
space partition of a MASO-based DN. We will proceed in
three steps by studying the partition induced by i) one unit
of a single DN layer (Section~\ref{sec:unit}), ii) the combination of
all units in a single layer (Section~\ref{sec:layer}), iii) the composition of
L layers that forms the complete DN (Section~\ref{sec:net}).

\subsection{MAS Unit Power Diagram}
\label{sec:unit}


A MASO layer combines $K$ max affine spline (MAS) units $z_k$ to produce the layer output $\bz(\bx)=(z_1(\bx),\dots,z_K(\bx))$ given an input $\bx \in \mathcal{X}$. Denote each MAS computation from (\ref{eq:MASO}) as
\begin{align}
z_k(\bx)=\max_{r=1,\dots,R} \left\langle [A]_{k,r,\bigcdot},\, \bx \right\rangle +[B]_{k,r}=\max_{r=1,\dots,R} \Epsilon_{k,r}(\bx),
\label{eq:unit_formula}
\end{align}
where $\Epsilon_{k,r}(\bx)$ is the projection of $\bx$ onto the hyperplane parameterized by the slope $[A]_{k,r,\bigcdot}$ and offset $[B]_{k,r}$.
By defining the following half-space consisting of the set of points above the hyperplane 
\begin{equation}
    \Epsilon_{k,r}^{+} = \{(\bx,y) \in \mathcal{X}\times \mathbb{R} : y \geq \Epsilon_{k,r}(\bx) \},
    \label{eq:epsilon_unit}
\end{equation}
we obtain the following geometric interpretation of the unit output.

\begin{prop}
The $k^{\text{th}}$ MAS unit maps its input space onto the boundary of the convex polytope  $\mathcal{P}_k = \cap_{r=1}^R \Epsilon_{k,r}^{+}$, leading to
\begin{equation}
    \mathcal{X} \times z_k(\mathcal{X}) = \partial \mathcal{P}_k
    \label{eq:polytope}
\end{equation}
where we remind that $z_k(\mathcal{X})={\rm Im}(z_k) = \{ z_k(\bx), \bx \in \mathcal{X} \}$ is the image of $z_k$.
\label{prop:singleunit}
\end{prop}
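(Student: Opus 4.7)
The plan is to recognize that the polytope $\mathcal{P}_k$ is nothing other than the epigraph of $z_k$, and that the graph of a continuous function coincides with the boundary of its epigraph. With this in mind the proof is essentially a one-line unpacking of definitions, so my write-up will mainly make sure the set-theoretic manipulations and the elementary topological fact about the boundary are both explicit.

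First I would rewrite membership in $\mathcal{P}_k$. By definition $(\bx,y)\in \mathcal{P}_k$ if and only if $(\bx,y)\in \Epsilon_{k,r}^+$ for every $r=1,\dots,R$, which by (\ref{eq:epsilon_unit}) is equivalent to $y\geq \Epsilon_{k,r}(\bx)$ for every $r$. Taking the maximum over $r$ on the right and comparing with (\ref{eq:unit_formula}) yields
\begin{equation*}
    \mathcal{P}_k \;=\; \{(\bx,y)\in \mathcal{X}\times\mathbb{R} : y\geq z_k(\bx)\} \;=\; \mathrm{epi}(z_k).
\end{equation*}
As a finite intersection of closed half-spaces, $\mathcal{P}_k$ is automatically a closed convex polyhedron, which also gives the ``convex polytope'' portion of the statement for free.

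Next I would compute the topological boundary of $\mathrm{epi}(z_k)$. Because $z_k$ is a pointwise maximum of finitely many affine functions it is continuous (indeed convex and piecewise-affine) on $\mathcal{X}$, so the strict sublevel set $\{y>z_k(\bx)\}$ is open in $\mathcal{X}\times \mathbb{R}$ and equals the interior of $\mathcal{P}_k$, while the epigraph itself is closed. It follows that
\begin{equation*}
    \partial \mathcal{P}_k \;=\; \mathcal{P}_k \setminus \mathrm{int}(\mathcal{P}_k) \;=\; \{(\bx,y)\in\mathcal{X}\times\mathbb{R} : y=z_k(\bx)\}.
\end{equation*}
The right-hand side is precisely the graph of $z_k$, which under the (slightly abusive) notation of the statement is written $\mathcal{X}\times z_k(\mathcal{X})$. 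Chaining this equality with the previous display establishes (\ref{eq:polytope}).

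There is essentially no hard step here: the argument is just ``intersection of upper half-spaces above affine hyperplanes $=$ epigraph of their max, whose boundary is the graph''. The only point that requires a little care, and which I would flag explicitly in the proof, is the argument that $\mathrm{int}(\mathrm{epi}(z_k))=\{y>z_k(\bx)\}$; this uses continuity of $z_k$ and the fact that the ambient space is $\mathcal{X}\times \mathbb{R}$ with the product topology (so one can always perturb $y$ downward slightly while remaining above $z_k$ at nearby $\bx$). If $\mathcal{X}$ is not assumed open in $\mathbb{R}^D$, boundary should be understood relative to $\mathcal{X}\times\mathbb{R}$, which matches how $\Epsilon_{k,r}^+$ was defined.
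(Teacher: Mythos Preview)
Your argument is correct and is exactly the natural way to see the statement: recognize $\mathcal{P}_k$ as the epigraph of $z_k$ and identify the boundary of the epigraph with the graph via continuity. The paper does not actually supply a proof of this proposition (it is stated and then used), so your write-up provides the details the paper leaves implicit; your remark about the abuse of notation $\mathcal{X}\times z_k(\mathcal{X})$ for the graph, and about taking the boundary relative to $\mathcal{X}\times\mathbb{R}$, are both apt clarifications.
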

To provide further intuition, we highlight the role of $\mathcal{P}_k$ in term of input space partitioning.

\begin{lemma}
The vertical projection on the input space $\mathcal{X}$ of the faces of the polytope $\mathcal{P}_k$ from (\ref{eq:polytope}) define the cells of a PD.
\label{prop:vertical_unit}
\end{lemma}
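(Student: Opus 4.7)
The plan is to translate the vertical projection of the facets of $\mathcal{P}_k$ into the argmax form of the max-affine function and then show that an argmax over affine functions is equivalent, after completing the square, to an argmin over Laguerre distances. I would start by observing that since $\mathcal{P}_k=\cap_{r=1}^R \Epsilon_{k,r}^{+}$ is the intersection of upper half-spaces, its lower boundary $\partial \mathcal{P}_k$ is precisely the graph of $\bx \mapsto \max_r \Epsilon_{k,r}(\bx)=z_k(\bx)$. A facet of $\partial \mathcal{P}_k$ is a maximal subset of this boundary on which a single constraint $y=\Epsilon_{k,r}(\bx)$ is tight while all others are satisfied, i.e.\ the portion where index $r$ alone attains the maximum. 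Therefore the vertical projection $\pi:(\bx,y)\mapsto \bx$ of the facet associated to $r$ yields exactly
\begin{equation*}
\omega_r=\{\bx \in \mathcal{X}: \Epsilon_{k,r}(\bx)\geq \Epsilon_{k,r'}(\bx),\ \forall r'\neq r\},
\end{equation*}
and these cells tile $\mathcal{X}$ since the max is defined everywhere.

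Next I would exhibit centroids and radii that realize this partition in the PD form of (\ref{eq:cell_pd}). For any candidate $\mu_r \in \mathbb{R}^D$ and scalar $\mathrm{rad}_r$, expanding the Laguerre distance gives
\begin{equation*}
\|\bx-\mu_r\|^2-\mathrm{rad}_r=\|\bx\|^2-2\langle \bx,\mu_r\rangle+\|\mu_r\|^2-\mathrm{rad}_r.
\end{equation*}
The term $\|\bx\|^2$ is independent of $r$, so minimizing the left-hand side over $r$ is equivalent to maximizing $\langle \bx, 2\mu_r\rangle+(\mathrm{rad}_r-\|\mu_r\|^2)$ over $r$. Matching this with $\langle \bx, [A]_{k,r,\bigcdot}\rangle+[B]_{k,r}$ gives the explicit identification
\begin{equation*}
\mu_r=\tfrac{1}{2}[A]_{k,r,\bigcdot},\qquad \mathrm{rad}_r=[B]_{k,r}+\tfrac{1}{4}\|[A]_{k,r,\bigcdot}\|^2,
\end{equation*}
under which $\argmax_r \Epsilon_{k,r}(\bx)=\argmin_r (\|\bx-\mu_r\|^2-\mathrm{rad}_r)$. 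Hence the projected cells $\omega_r$ coincide with the PD cells defined by $\{\mu_r,\mathrm{rad}_r\}_{r=1}^R$, proving the lemma.

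There is essentially no hard step; the main subtlety is only conceptual, namely making sure one identifies the correct boundary of $\mathcal{P}_k$ (the lower envelope of the graph, which is where the max is achieved) and that ties on a measure-zero set between multiple maximizers correspond to lower-dimensional faces shared between adjacent cells, so the resulting partition is well defined up to boundaries. Note also that the radii $\mathrm{rad}_r$ produced by the construction may be negative, which is harmless because, as observed in Section 3, a PD is invariant under a common shift of all radii, so one can always add $Q=-\min_r \mathrm{rad}_r$ to recover the standard nonnegative convention.
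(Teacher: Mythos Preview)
Your proof is correct and in fact more complete than the paper's. The paper's argument for this lemma is essentially a one-line appeal to a classical result (citing \cite{johnson1960advanced}) that the boundaries of the unit PD are the vertical projections of the intersections $\Epsilon_{k,r}(\mathcal{X})\cap \Epsilon_{k,r^*}(\mathcal{X})$ of neighbouring faces of $\mathcal{P}_k$. You instead give the full self-contained derivation: identifying $\partial\mathcal{P}_k$ as the graph of the max, reading off the projected facets as argmax cells, and completing the square to obtain an explicit Laguerre-distance form. This buys you an explicit set of centroids and radii and avoids the external reference.

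One small point of contact with the rest of the paper: your identification $\mu_r=\tfrac{1}{2}[A]_{k,r,\bigcdot}$, $\mathrm{rad}_r=[B]_{k,r}+\tfrac{1}{4}\|[A]_{k,r,\bigcdot}\|^2$ differs from the parameters stated in Theorem~\ref{thm:unitPD}, namely $[\mu]_{k,r}=[A]_{k,r,\bigcdot}$ and $[{\rm rad}]_{k,r}=2[B]_{k,r}+\|[A]_{k,r,\bigcdot}\|^2$. Both are valid PD descriptions of the same partition, since your matching is done up to the positive scalar $2$ in the maximized quantity (and the argmax is scale-invariant). It may be worth remarking that the PD representation is not unique in this sense, so that your lemma and the subsequent theorem are seen to be consistent.
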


Since the $k^{\text{th}}$ MAS unit projects an input $\bx$ onto the polytope face given by $r_k:\mathcal{X} \rightarrow  \{1,\dots,R\}$ (recall(\ref{eq:cell_pd})) corresponding to
\begin{equation}
r_k(\bx) = \argmax_{r=1,\dots,R} \Epsilon_{k,r}(\bx), \label{eq:r_unit}
\end{equation}
the collection of inputs having the same face allocation, defined as
$ \forall r \in \left \{1,\dots,R \right \}, \: \omega_r=\{\bx \in \mathcal{X}: r_k(\bx)=r\}$, constitutes the $r^{\text{th}}$ {\em partition cell} of the unit $k$ PD.

\begin{thm}
\label{thm:unitPD}
The $k^{\text{th}}$ MAS unit partitions its input space according to a PD with $R$ centroids given by 
$[\mu]_{k,r}=[A]_{k,r,\bigcdot}$, and 
$[{\rm rad}]_{k,r}=2[B]_{k,r}+\|[A]_{k,r,\bigcdot}\|^2,\forall r\in \{1,\dots,R\}$
(recall (\ref{eq:cell_pd})).
\end{thm}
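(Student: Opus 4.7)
The plan is to verify the theorem by direct algebraic identification: show that the face-assignment rule $r_k(\bx)=\argmax_r \Epsilon_{k,r}(\bx)$ from Equation~(\ref{eq:r_unit}) coincides, for the proposed centroids and radii, with the Laguerre-distance minimization rule of Equation~(\ref{eq:cell_pd}). By Lemma~\ref{prop:vertical_unit}, the input-space partition of unit $k$ is already known to be \emph{some} PD (the vertical projection of the faces of $\mathcal{P}_k$); the theorem just identifies which one.

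First, I will plug the candidate parameters $[\mu]_{k,r}=[A]_{k,r,\bigcdot}$ and $[{\rm rad}]_{k,r}=2[B]_{k,r}+\|[A]_{k,r,\bigcdot}\|^2$ into the Laguerre distance and expand the square:
\begin{align*}
\|\bx-[\mu]_{k,r}\|^2-[{\rm rad}]_{k,r}
&=\|\bx\|^2-2\langle [A]_{k,r,\bigcdot},\bx\rangle+\|[A]_{k,r,\bigcdot}\|^2 \\
&\quad - 2[B]_{k,r}-\|[A]_{k,r,\bigcdot}\|^2 \\
&=\|\bx\|^2-2\bigl(\langle [A]_{k,r,\bigcdot},\bx\rangle+[B]_{k,r}\bigr)\\
&=\|\bx\|^2-2\,\Epsilon_{k,r}(\bx).
\end{align*}
The $\|[A]_{k,r,\bigcdot}\|^2$ term is engineered precisely to cancel the cross term coming from expanding $\|\bx-[A]_{k,r,\bigcdot}\|^2$, which is the one calculation that has to come out right.

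Next, I will observe that the additive $\|\bx\|^2$ is independent of the index $r$ over which we minimize, and the remaining factor $-2$ flips the sense of the optimization, yielding
\[
\argmin_{r=1,\dots,R}\bigl(\|\bx-[\mu]_{k,r}\|^2-[{\rm rad}]_{k,r}\bigr)
=\argmax_{r=1,\dots,R}\Epsilon_{k,r}(\bx)=r_k(\bx).
\]
Therefore the PD cell $\omega_r=\{\bx:r(\bx)=r\}$ defined by the claimed centroids and radii coincides cell-by-cell with $\{\bx:r_k(\bx)=r\}$, the set of inputs sharing the same active face of the polytope $\mathcal{P}_k$ from Proposition~\ref{prop:singleunit}. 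This completes the identification.

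There is no real obstacle here beyond guessing the correct radii; once one notices that the Laguerre distance reduces to a constant minus $2\Epsilon_{k,r}(\bx)$ exactly when $[{\rm rad}]_{k,r}$ absorbs $\|[A]_{k,r,\bigcdot}\|^2$ (and the bias contributes with factor $2$), the proof is a single line of algebra. One small remark I would add is that radii can be negative a priori, but this is harmless by the shift-invariance property of PDs already noted in the background section, so the statement is self-consistent.
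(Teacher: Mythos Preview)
Your proof is correct and uses the same core idea as the paper: expand the squared distance $\|\bx-[A]_{k,r,\bigcdot}\|^2$, absorb $\|[A]_{k,r,\bigcdot}\|^2$ into the radius, and observe that the $r$-independent term $\|\bx\|^2$ drops out so that the Laguerre $\argmin$ coincides with the $\argmax$ of $\Epsilon_{k,r}$. The paper carries out this same algebraic conversion but embeds it inside a two-unit lemma (intersecting $\mathcal{V}([t]_1)\cap\mathcal{V}([t]_2)$) that is then extended recursively; your direct single-unit derivation is cleaner and entirely sufficient for the statement as written.
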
 

\begin{cor}
The input space partitioning of a DN unit is composed of convex polytopes.
\end{cor}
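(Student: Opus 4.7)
The plan is to invoke Theorem~\ref{thm:unitPD}, which already tells us that the input-space partition of the $k^{\text{th}}$ unit is a PD with centroids $[\mu]_{k,r}$ and radii $[{\rm rad}]_{k,r}$, and then show that every cell of any PD is a convex polytope, i.e., a finite intersection of closed half-spaces. So the work reduces to unpacking the definition of a PD cell from (\ref{eq:cell_pd}) and observing that the apparently quadratic defining inequalities are in fact affine in $\bx$.

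Concretely, fix a cell $\omega_r=\{\bx\in\mathcal{X}:r(\bx)=r\}$. By (\ref{eq:cell_pd}), $\bx\in\omega_r$ if and only if
\begin{equation}
\|\bx-[\mu]_{k,r,\bigcdot}\|^2-[{\rm rad}]_{k,r}\;\leq\;\|\bx-[\mu]_{k,s,\bigcdot}\|^2-[{\rm rad}]_{k,s}
\quad\text{for every }s\neq r.
\end{equation}
Expanding both sides, the $\|\bx\|^2$ terms cancel, and the inequality reduces to
\begin{equation}
2\bigl\langle \bx,\;[\mu]_{k,s,\bigcdot}-[\mu]_{k,r,\bigcdot}\bigr\rangle
\;\leq\;
\|[\mu]_{k,s,\bigcdot}\|^2-\|[\mu]_{k,r,\bigcdot}\|^2
-[{\rm rad}]_{k,s}+[{\rm rad}]_{k,r},
\end{equation}
which is a single linear (affine) inequality in $\bx$, hence defines a closed half-space $H_{r,s}\subseteq\mathcal{X}$. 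The cell $\omega_r=\bigcap_{s\neq r}H_{r,s}$ is therefore a finite intersection of closed half-spaces, which is by definition a convex polytope.

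This argument is essentially a two-line verification once Theorem~\ref{thm:unitPD} is in hand; the only subtlety worth flagging is the cancellation of the quadratic $\|\bx\|^2$ contribution, which is what distinguishes PDs (and VDs) from more general nearest-neighbor partitions and which makes the cells affine rather than merely semialgebraic. I do not anticipate a genuine obstacle: no additional structure on $A$, $B$, or on $\mathcal{X}$ is required, and the statement holds verbatim for any unit, regardless of the choice of piecewise-affine convex nonlinearity used to instantiate the MASO parameters in Theorem~\ref{thm:MASO}.
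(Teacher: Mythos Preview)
Your argument is correct: once Theorem~\ref{thm:unitPD} identifies the unit partition as a PD, expanding the Laguerre comparison in (\ref{eq:cell_pd}) and cancelling the $\|\bx\|^2$ term shows each pairwise constraint is affine, so each cell is a finite intersection of closed half-spaces.

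The paper does not spell out a proof for this corollary; it is stated as an immediate consequence. To the extent the paper has an implicit argument, it is the geometric one set up by Proposition~\ref{prop:singleunit} and Lemma~\ref{prop:vertical_unit}: the cells are the vertical projections onto $\mathcal{X}$ of the faces of the convex polytope $\mathcal{P}_k=\cap_r\Epsilon_{k,r}^+$, and projections of polytope faces are themselves convex polytopes. Your route is the direct algebraic verification from the PD definition, theirs is the dual ``upper envelope of hyperplanes'' picture; both are standard and equivalent, and yours has the virtue of being self-contained without invoking the lifting construction. One terminological nit: what you obtain is a convex \emph{polyhedron} (possibly unbounded), not necessarily a bounded polytope, but the paper uses ``polytope'' loosely in the same sense, so this is consistent with the surrounding text.
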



\subsection{MASO Layer Power Diagram}
\label{sec:layer_cell_encoding}
\label{sec:layer}

 
We study the layer case by studying the joint behavior of all the layer units.
A MASO layer is a continuous, piecewise affine operator made by the concatenation of $K$ MAS units (recall (\ref{eq:MASO})); we extend (\ref{eq:unit_formula}) to 
\begin{equation}
    \bz(\bx) = \big( \max_{r=1,\dots,R} \Epsilon_{1,r}(\bx),\dots,\max_{r=1,\dots,R} \Epsilon_{K,r}(\bx)\big), \forall \bx \in \mathcal{X}  \label{eq:Z_boundary}
\end{equation}
and the per unit face index function $r_k$ (\ref{eq:r_unit}) into the operator $\br:\mathcal{X} \rightarrow \{1,\dots,R \}^K$ defined as
\begin{equation}
    \br(\bx)=(\br_1(\bx),\dots,\br_K(\bx)).
\end{equation}
Following the geometric interpretation of the unit output from Proposition~\ref{prop:singleunit}, we extend (\ref{eq:epsilon_unit}) to 
\begin{align}
    E_{\br}^+=\{(\bx,\by) \in \mathcal{X}\times \mathbb{R}^K : [\by]_1 \geq \Epsilon_{1,[\br]_1}(\bx),\dots, [\by]_K \geq \Epsilon_{K,[\br]_K}(\bx)\},
    \label{eq:epsilon_layer}
\end{align}
where $[\br]_k$ is the $k^{th}$ component of the vector $\br(\bx)$.

\begin{prop}
The layer operator $\bz$ maps its input space into the boundary of the ${\rm dim}(\mathcal{X})+K$ dimensional convex polytope $\P = \cap_{\br \in \{1,\dots,R\}^K}E_{\br}^+$ as
\begin{equation}
    \mathcal{X} \times \bz(\mathcal{X})=\mathcal{X} \times \bz_1(\mathcal{X})\times \dots \times \bz_K(\mathcal{X}) = \partial \P. \label{eq:polytopelayer}
\end{equation}
\label{prop:singlelayer}
\end{prop}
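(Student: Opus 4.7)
The plan is to extend Proposition~\ref{prop:singleunit} from a single MAS unit to the concatenation of $K$ units by reducing the large intersection $\cap_{\br \in \{1,\dots,R\}^K} E_{\br}^+$ to a much simpler form that manifestly exhibits $\P$ as the joint epigraph of the $K$ per-unit MAS functions. First I would observe that each $E_{\br}^+$ is a convex polyhedron in $\mathcal{X}\times \mathbb{R}^K$: by (\ref{eq:epsilon_layer}) it is the intersection of the $K$ affine half-spaces $\{(\bx,\by):[\by]_k\ge \Epsilon_{k,[\br]_k}(\bx)\}$, one per output coordinate. Hence $\P=\cap_{\br}E_{\br}^+$ is itself a convex polyhedron in the $(\dim(\mathcal{X})+K)$-dimensional ambient space, being the intersection of convex polyhedra.

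The central step, which I would carry out next, is a decoupling argument. The half-space defining the $k^{\text{th}}$ coordinate in $E_{\br}^+$ depends on $\br$ only through its $k^{\text{th}}$ component $[\br]_k\in\{1,\dots,R\}$. Since $\br$ ranges over the full product $\{1,\dots,R\}^K$, intersecting over $\br$ is equivalent to intersecting \emph{independently} over $[\br]_k \in \{1,\dots,R\}$ for each $k$. This yields
\begin{equation*}
\P = \left\{(\bx,\by)\in \mathcal{X}\times\mathbb{R}^K : [\by]_k \ge \max_{r=1,\dots,R}\Epsilon_{k,r}(\bx)=z_k(\bx),\ \forall k=1,\dots,K \right\},
\end{equation*}
i.e., $\P$ is exactly the intersection of the $K$ per-unit epigraphs, each of which (by Proposition~\ref{prop:singleunit}) has boundary equal to the graph of the corresponding $z_k$.

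From this closed form the boundary follows: $(\bx,\by)\in \partial \P$ iff at least one coordinate constraint is tight, $[\by]_k=z_k(\bx)$, while the remaining ones are satisfied. In particular, the graph $\{(\bx,\bz(\bx)):\bx\in\mathcal{X}\}$, for which every coordinate is tight, lies in $\partial \P$; applying Proposition~\ref{prop:singleunit} coordinate-wise and combining the $K$ per-unit boundary characterizations yields the identification $\mathcal{X}\times \bz(\mathcal{X})=\mathcal{X}\times \bz_1(\mathcal{X})\times \cdots \times \bz_K(\mathcal{X})=\partial \P$ announced in (\ref{eq:polytopelayer}), together with the fact that $\bz$ maps $\mathcal{X}$ into $\partial \P$.

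The main obstacle is really just the decoupling of the intersection $\cap_\br E_\br^+$: once one notices that the $K$ coordinate half-spaces comprising each $E_\br^+$ are independent, everything reduces to $K$ parallel applications of the unit-level Proposition~\ref{prop:singleunit}. The remaining care needed is bookkeeping of the ambient dimension $\dim(\mathcal{X})+K$ and verifying that non-redundant faces of $\P$ are in bijection with the MAS-face configurations $\br\in\{1,\dots,R\}^K$, which will then feed directly into the PD characterization of the layer in the next subsection.
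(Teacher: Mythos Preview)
Your decoupling argument is correct and is precisely the natural route: once one observes that the $k^{\text{th}}$ half-space in $E_{\br}^+$ depends on $\br$ only through $[\br]_k$, the intersection over the full product $\{1,\dots,R\}^K$ factors and $\P$ reduces to the joint epigraph $\{(\bx,\by):[\by]_k\ge z_k(\bx),\,\forall k\}$, from which the boundary description follows. The paper does not supply a separate proof for this proposition; it treats it as the direct $K$-fold extension of Proposition~\ref{prop:singleunit}, which is exactly what you do.

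One caveat worth flagging (though it is a looseness in the paper's own statement rather than in your argument): you correctly observe that $(\bx,\by)\in\partial\P$ iff \emph{at least one} constraint is tight, whereas the graph $\{(\bx,\bz(\bx))\}$ is the locus where \emph{all} are tight. For $K>1$ these sets are not literally equal, so the equality $\mathcal{X}\times\bz(\mathcal{X})=\partial\P$ in (\ref{eq:polytopelayer}) should be read in the paper's informal sense (the graph sits on $\partial\P$ and the faces of $\P$ are indexed by the MAS-face configurations $\br$), which is what is actually used downstream in Lemma~\ref{lemma:vertical_layer} and Theorem~\ref{thm:layerPD}. Your final paragraph already hints at this by speaking of faces in bijection with $\br$; making that remark explicit would tighten the write-up.
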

%

Similarly to Proposition~\ref{prop:vertical_unit}, the polytope $\P$ is bound to the layer input space partitioning.
\begin{lemma}
The vertical projection on the input space $\mathcal{X}$ of the faces of the polytope $\P$ from Proposition~\ref{prop:singlelayer} define cells of a PD.
\label{lemma:vertical_layer}
\end{lemma}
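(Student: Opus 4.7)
The plan is to mirror the single-unit argument (Lemma~\ref{prop:vertical_unit}) at the level of the joint $K$-unit polytope $\P$. First I would observe that each upper face of $\P$ is singled out by a multi-index $\br \in \{1,\dots,R\}^K$: the face $F_{\br}$ is the locus in $\mathcal{X}\times\mathbb{R}^K$ where, simultaneously, $[\by]_k = \Epsilon_{k,[\br]_k}(\bx)$ for every $k$, while $[\by]_k \geq \Epsilon_{k,r}(\bx)$ for every other $r$. Its vertical projection onto $\mathcal{X}$ is therefore exactly
\begin{equation*}
\omega_{\br} = \bigl\{\bx \in \mathcal{X}: \Epsilon_{k,[\br]_k}(\bx) \geq \Epsilon_{k,r}(\bx),\; \forall k, \forall r\bigr\}.
\end{equation*}
Because these constraints decouple across the $K$ units, $\omega_{\br}$ is precisely the set of inputs whose layer code (\ref{eq:Z_boundary}) satisfies $\br(\bx)=\br$, so as $\br$ ranges over $\{1,\dots,R\}^K$ the sets $\omega_{\br}$ tile $\mathcal{X}$.

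Next, I would collapse the $K$ per-unit argmaxes into a single argmax over codes. The defining condition of $\omega_{\br}$ is equivalent to
\begin{equation*}
\br \,=\, \argmax_{\br'\in \{1,\dots,R\}^K} \sum_{k=1}^K \Epsilon_{k,[\br']_k}(\bx) \,=\, \argmax_{\br'} \Bigl(\bigl\langle \mu_{\br'}, \bx\bigr\rangle + \beta_{\br'}\Bigr),
\end{equation*}
where $\mu_{\br'} := \sum_k [A]_{k,[\br']_k,\bigcdot}$ and $\beta_{\br'} := \sum_k [B]_{k,[\br']_k}$. This rewrites the layer partition as the upper envelope of $R^K$ affine functionals of $\bx$, indexed by codes, and turns the problem into the same shape as in the unit case.

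Finally, I would apply the completing-the-square identity
\begin{equation*}
\argmax_{\br'} \bigl( \langle \mu_{\br'}, \bx\rangle + \beta_{\br'} \bigr) \,=\, \argmin_{\br'} \bigl( \|\bx - \mu_{\br'}\|^2 - (2\beta_{\br'} + \|\mu_{\br'}\|^2) \bigr),
\end{equation*}
and recognise the right-hand side as the Laguerre-distance assignment (\ref{eq:cell_pd}) with centroids $\mu_{\br'}$ and radii $2\beta_{\br'} + \|\mu_{\br'}\|^2$. Hence the cells $\{\omega_{\br}\}$ are exactly those of a PD on $\mathcal{X}$ with (up to) $R^K$ sites, which is what the lemma claims.

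The main obstacle I anticipate is bookkeeping with degenerate or unachievable codes: some $\br \in \{1,\dots,R\}^K$ may yield an empty $\omega_{\br}$ (no input attains that argmax configuration), and adjacent faces of $\P$ share lower-dimensional boundaries on which several $\br$ tie. Neither damages the PD interpretation---empty cells and measure-zero overlaps are admissible in the PD definition---but some care is needed when reading off the face structure of $\P$ directly from the half-space description (\ref{eq:epsilon_layer}), since not every $\br$ indexes a top face of positive dimension in $\P$.
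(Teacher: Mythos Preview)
Your proof is correct and in fact more self-contained than the paper's. The paper argues via the edge/boundary structure rather than the faces: it observes that $\P$ is assembled from the per-unit polytopes $\mathcal{P}_k$, so the (codimension-one) edges of $\P$ project vertically onto exactly the union of the $K$ per-unit PD boundaries from Lemma~\ref{prop:vertical_unit}; since the layer partition is the common refinement of those unit PDs, its boundaries are precisely this union, and the lemma follows. That route is terser but leans on the reader accepting that a common refinement of PDs is again a PD (or simply defers the explicit PD parameters to Theorem~\ref{thm:layerPD}). Your approach instead identifies each top face of $\P$ by its code $\br$, sums the $K$ affine functionals into a single upper envelope over $\{1,\dots,R\}^K$, and completes the square to exhibit the Laguerre distance directly---this simultaneously delivers the centroids $\mu_{\br}$ and radii $2\langle \textbf{1},B_{\br}\rangle+\|\mu_{\br}\|^2$ of Theorem~\ref{thm:layerPD}, so it is the more constructive of the two. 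Your caveat about empty or degenerate codes is well placed and does not affect either argument.
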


The MASO layer projects an input $\bx$ onto the polytope face indexed by $\br(\bx) $ corresponding to
\begin{align}
\br(\bx) = (\argmax_{r=1,\dots,R} \Epsilon_{1,r}(\bx),\dots,\argmax_{r=1,\dots,R} \Epsilon_{K,r}(\bx)).
\end{align}
The collection of inputs having the same face allocation jointly across the $K$ units constitutes the $\br^{\text{th}}$ {\em partition cell} of the layer PD. 

\begin{thm}
\label{thm:layerPD}
DN layer partitions its input space according to a PD with
$\{1,\dots,R\}^K$ cells, centroids 
$\mu_{\br}=\sum_{k=1}^{K} [A]_{k,[\br]_{k},\bigcdot}$ and
radii ${\rm rad}_{\br}= 2\langle \textbf{1},B_{\br}\rangle+\left \| \mu_{\br}\right \|^2$
(recall (\ref{eq:cell_pd})).
\label{thm:projection_layer}
\end{thm}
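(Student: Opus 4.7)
The plan is to reduce the joint layer partition to a single argmax over the composite index $\br \in \{1,\dots,R\}^K$, then transform that argmax into the Laguerre distance form appearing in \eqref{eq:cell_pd}. The unit case Theorem~\ref{thm:unitPD} already did exactly this reduction for $K=1$; the layer case essentially asks whether the same computation goes through when the $K$ per-unit PDs are intersected, and the miracle is that because the $K$ maxima in \eqref{eq:Z_boundary} are decoupled across units, the intersection is itself a single joint argmax.

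First I would write out, using Lemma~\ref{lemma:vertical_layer}, that $\br(\bx)$ is the joint face index and that two inputs lie in the same layer cell iff they share the full code $\br$. Since the max across $r$ in each coordinate is independent of the other coordinates, I would observe the identity
\begin{align*}
    \br(\bx) \;=\; \Bigl(\argmax_{r_1} \Epsilon_{1,r_1}(\bx),\dots,\argmax_{r_K} \Epsilon_{K,r_K}(\bx)\Bigr) \;=\; \argmax_{\br \in \{1,\dots,R\}^K} \sum_{k=1}^K \Epsilon_{k,[\br]_k}(\bx),
\end{align*}
which collapses the $K$ coupled conditions into a single affine argmax over the product index set. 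Expanding $\Epsilon_{k,[\br]_k}(\bx)=\langle [A]_{k,[\br]_k,\bigcdot},\bx\rangle+[B]_{k,[\br]_k}$ and pulling the sum inside gives $\br(\bx)=\argmax_{\br}\langle \mu_{\br},\bx\rangle+\langle \mathbf{1},B_{\br}\rangle$ with $\mu_{\br}=\sum_k [A]_{k,[\br]_k,\bigcdot}$.

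Next, I would convert this affine argmax into a Laguerre argmin, exactly as in the unit proof: adding the $\br$-independent constant $\|\bx\|^2$ and multiplying by $-2$ (which swaps $\argmax$ with $\argmin$) yields
\begin{align*}
    \br(\bx) \;=\; \argmin_{\br}\Bigl(\|\bx\|^2-2\langle \mu_{\br},\bx\rangle-2\langle \mathbf{1},B_{\br}\rangle\Bigr) \;=\; \argmin_{\br}\Bigl(\|\bx-\mu_{\br}\|^2-\bigl(\|\mu_{\br}\|^2+2\langle \mathbf{1},B_{\br}\rangle\bigr)\Bigr),
\end{align*}
which matches \eqref{eq:cell_pd} with centroid $\mu_{\br}$ and radius ${\rm rad}_{\br}=\|\mu_{\br}\|^2+2\langle \mathbf{1},B_{\br}\rangle$. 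The total number of cells is at most $R^K$ because the index set is $\{1,\dots,R\}^K$; some cells may be empty but this is allowed by the PD definition (and the count is exactly what the statement claims).

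The only real subtlety I anticipate is justifying the decoupling of the argmax in the first display: one must be careful that even though the joint argmax is taken over $R^K$ choices, ties are handled consistently with the per-unit ties (this is fine because the sum is separable, and any standard tie-breaking rule transfers between the per-unit and joint formulations). A secondary bookkeeping point is that Lemma~\ref{lemma:vertical_layer} already guarantees the cells form a partition of $\mathcal{X}$ via vertical projection of the faces of $\P$, so I only need to identify the centroids and radii rather than prove convexity or coverage from scratch. Everything else is essentially the unit computation applied to the aggregated slope $\mu_{\br}$ and aggregated offset $\langle \mathbf{1},B_{\br}\rangle$.
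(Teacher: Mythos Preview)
Your proposal is correct and follows essentially the same approach as the paper: both exploit the separability of the per-unit maxima to collapse the intersection of $K$ unit PDs into a single joint $\argmax$/$\argmin$ over $\{1,\dots,R\}^K$, then complete the square to read off the Laguerre centroids and radii. The only cosmetic differences are that the paper first converts each unit to distance form and then combines (doing the $K=2$ case explicitly and invoking recursion for general $K$), whereas you first combine the affine forms for all $K$ units at once and then convert to distance form; the underlying computation is identical.
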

\begin{cor}
The input space partitioning of a DN layer is composed of convex polytopes.
\end{cor}

\begin{figure}[t]
\begin{minipage}{0.5\linewidth}
\centering
\small
Rank 1 \hspace{0.5cm} Rank 2 Orthogonal \hspace{0.4cm} Rank 2\\ \normalsize
    \centering
    \vspace{-0.02cm}
    \includegraphics[width=0.32\linewidth]{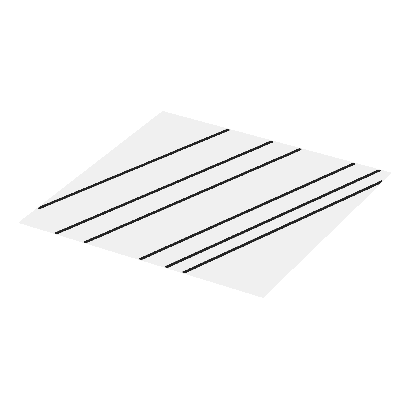}
    \includegraphics[width=0.32\linewidth]{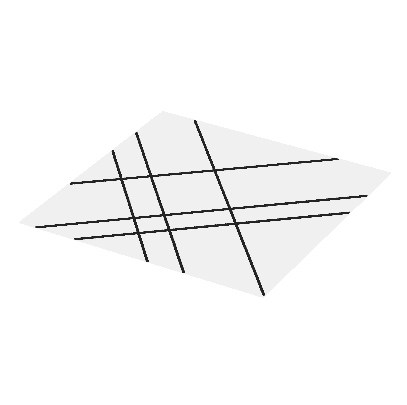}
    \includegraphics[width=0.32\linewidth]{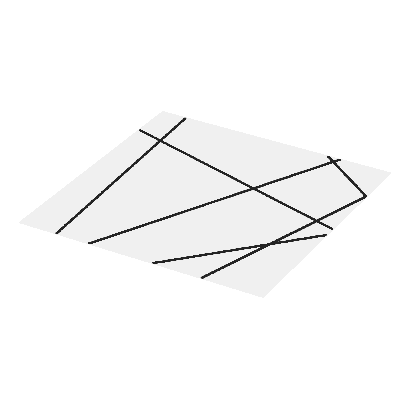}
\end{minipage}
\begin{minipage}{0.49\linewidth}
    \caption{\small
    Depiction of different types of weight constraints and their impact in the layer input space partitioning. On the left is depicted the case of a low rank matrix leading to colinear cuts, only the bias is responsible for shifting the cuts. In the middle orthogonal weights are used leading to orthogonal cuts. This, while not being degenerate still limits the input space partitioning. On the right, an arbitrary weight matrix is used leading the partitioning unconstrained.
    }
    \label{fig:toy_example}
    \end{minipage}
\end{figure}


\subsection{Weight Constraints and Cell Shapes}
\label{sec:cell_shape}
%
%
%
We highlight the close relationship between the layer weights $A,B$ from (\ref{eq:MASO}), the layer polytope $\P$ from Proposition~\ref{prop:singlelayer}, and the boundaries of the layer PD  from Theorem~\ref{thm:layerPD}. In particular, how one can alter or constraint the shape of the cells by constraining the weights of the layer.

{\bf Example 1:}
Constraining the layer weights to be such that $ [A]_{k,r,d} = 1_{\{d=[i]_{k,r}\}}[{\rm cst}]_{k,r}$ for some  integer $[i]_{k,r} \in \{1,\dots,D \}$, $D={\rm dim}(\mathcal{X})$, and arbitrary constant  $[{\rm cst}]_{k,r}$ leads to an input power diagram with cell boundaries parallel to the input space basis vectors see Fig.~\ref{fig:toy_example}.
For instance if the input space $\mathcal{X}$ is the Euclidean space $\mathbb{R}^D$ equipped with the canonical basis, the previous Proposition translates into having PD boundaries parallel to the axes.

{\bf Example 2:}
Constraining the layer weights to be such that $[A]_{k,r,d}=\pm [{\rm cst}]_{k,r}$ for some arbitrary constant  $[{\rm cst}]_{k,r}$ leads to a layer-input power diagram with diagonal cell boundaries.\footnote{
Note that while in example 1 each per unit $k$, per cell $r$ weight was constrained to contain a single nonzero element s.a. $(0, 0, c, 0)$ for $D=4$, example 2 makes the weight vector filled with a single constant but varying signs such as $(+c,-c,+c,-c)$.}

\begin{lemma}
Changing the radius of a given cell shrinks or expands w.r.t. the other \cite{aurenhammer1987power}.
\end{lemma}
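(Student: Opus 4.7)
The plan is to analyze the pairwise cell boundaries explicitly from the definition (\ref{eq:cell_pd}) and to show that the boundary between any two adjacent cells is a hyperplane whose normal direction depends only on the centroids, while its scalar offset depends linearly on the difference of radii. Varying a single radius therefore rigidly translates every facet of the affected cell in a single, monotone direction, which gives the desired shrink/expand dichotomy.

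First I would fix two indices $r \neq k$ and write down the locus on which the two Laguerre distances coincide:
\begin{equation*}
\|\bx - [\mu]_{r,\bigcdot}\|^2 - [{\rm rad}]_r \;=\; \|\bx - [\mu]_{k,\bigcdot}\|^2 - [{\rm rad}]_k.
\end{equation*}
Expanding the squared norms and cancelling the common term $\|\bx\|^2$ yields the hyperplane
\begin{equation*}
2\bigl\langle \bx,\, [\mu]_{k,\bigcdot} - [\mu]_{r,\bigcdot}\bigr\rangle \;=\; \|[\mu]_{k,\bigcdot}\|^2 - \|[\mu]_{r,\bigcdot}\|^2 + [{\rm rad}]_r - [{\rm rad}]_k.
\end{equation*}
Its normal $[\mu]_{k,\bigcdot} - [\mu]_{r,\bigcdot}$ depends only on the centroids, while its offset is affine in $[{\rm rad}]_r$ with positive coefficient. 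This immediately fixes the geometry of the perturbation.

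Next I would increase $[{\rm rad}]_r$ by some $\delta > 0$ with all other parameters frozen. For every $k \neq r$ the right-hand side of the boundary equation grows by $\delta$, so the separating hyperplane is translated by $\delta / (2\|[\mu]_{k,\bigcdot} - [\mu]_{r,\bigcdot}\|)$ in the direction $[\mu]_{k,\bigcdot} - [\mu]_{r,\bigcdot}$, i.e.\ away from $[\mu]_{r,\bigcdot}$. By the argmin rule of (\ref{eq:cell_pd}), every point on the new $\omega_r$-side of the shifted hyperplane is reassigned to $\omega_r$, so $\omega_r(\delta)\supseteq \omega_r(0)$ with strict containment wherever the facet is nondegenerate; each immediate neighbour loses exactly the slab of width $\delta / (2\|[\mu]_{k,\bigcdot}-[\mu]_{r,\bigcdot}\|)$ along the shared facet. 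The symmetric computation handles $\delta<0$.

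The only obstacle worth mentioning is the combinatorial book-keeping: as $\delta$ grows, cells that were not previously adjacent to $\omega_r$ may acquire a shared facet, and some neighbours may cease to be neighbours entirely. These events are harmless because the Laguerre distance $\|\bx-[\mu]_{k,\bigcdot}\|^2-[{\rm rad}]_k$ is continuous in $[{\rm rad}]_k$ and the monotone containment $\omega_r(\delta)\supseteq \omega_r(0)$ already follows from the half-space computation above; the detailed face-lattice evolution is the standard one recorded in \cite{aurenhammer1987power}, which we simply cite to close the argument.
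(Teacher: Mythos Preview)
The paper does not supply its own proof of this lemma; the statement is lifted directly from \cite{aurenhammer1987power} and cited as such, with no argument given in the appendix. Your derivation via the explicit pairwise chordale $2\langle \bx,[\mu]_{k,\bigcdot}-[\mu]_{r,\bigcdot}\rangle = \|[\mu]_{k,\bigcdot}\|^2-\|[\mu]_{r,\bigcdot}\|^2+[{\rm rad}]_r-[{\rm rad}]_k$ is correct and is essentially the standard argument underlying the cited result: the normal of every facet of $\omega_r$ is fixed by the centroids alone, the offset is affine in $[{\rm rad}]_r$ with a consistent sign, and hence $\omega_r$ is the intersection of half-spaces each of which enlarges monotonically as $[{\rm rad}]_r$ grows. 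Your handling of the combinatorial side (new adjacencies appearing, old ones disappearing) by falling back on the monotone containment $\omega_r(\delta)\supseteq\omega_r(0)$ is also sound, since that containment follows directly from the argmin characterisation without any reference to the face lattice. In short, you have supplied a complete elementary proof where the paper only gives a citation.
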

\begin{thm}
Updating a single unit parameters (slope or offset of the affine transform and/or the nonlinearity behavior) affects multiple regions' centroids and radius.
\end{thm}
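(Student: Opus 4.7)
The plan is to read off the statement directly from the explicit expressions for the centroids and radii given in Theorem \ref{thm:layerPD}. Recall that the layer PD has $R^K$ cells indexed by codes $\br \in \{1,\dots,R\}^K$, with
\[
\mu_{\br} \;=\; \sum_{k=1}^{K} [A]_{k,[\br]_k,\bigcdot}, \qquad {\rm rad}_{\br} \;=\; 2\langle \mathbf{1}, B_{\br}\rangle + \|\mu_{\br}\|^2.
\]
The key observation is that each unit $k$ contributes exactly one summand to every centroid, so a perturbation of a single unit's parameters propagates into every code that selects the perturbed region index.

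First I would handle a slope update. Pick a unit $k_0$ and a region index $r_0$, and perturb $[A]_{k_0,r_0,\bigcdot} \mapsto [A]_{k_0,r_0,\bigcdot} + \delta$ with $\delta \neq 0$. The summand in $\mu_{\br}$ corresponding to unit $k_0$ is $[A]_{k_0,[\br]_{k_0},\bigcdot}$, which equals the perturbed vector precisely when $[\br]_{k_0} = r_0$. The set $\{\br : [\br]_{k_0} = r_0\}$ has cardinality $R^{K-1}$, so exactly $R^{K-1}$ centroids shift by $\delta$. Each such shift also changes the radius, since
\[
\|\mu_{\br} + \delta\|^2 - \|\mu_{\br}\|^2 \;=\; 2\langle \mu_{\br},\delta\rangle + \|\delta\|^2,
\]
which is generically nonzero (and the remaining codes are unaffected). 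Second, I would treat an offset update $[B]_{k_0,r_0} \mapsto [B]_{k_0,r_0} + \beta$. Writing $\langle \mathbf{1}, B_{\br}\rangle = \sum_{k} [B]_{k,[\br]_k}$, the same counting argument shows that ${\rm rad}_{\br}$ is altered for exactly the same $R^{K-1}$ codes, while the centroids are untouched. Third, for a change of "nonlinearity behavior" I would invoke the explicit MASO parameterization (for example, the leaky-ReLU case illustrated right after Theorem \ref{thm:MASO}, where modifying $\eta$ rescales an entire row of $A$ and $B$): such a change perturbs multiple entries $[A]_{k_0,r,\bigcdot}$ and $[B]_{k_0,r}$ simultaneously, so the union of affected codes is even larger than $R^{K-1}$.

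To conclude, I would note that $R^{K-1} \geq 2$ for $K \geq 2, R \geq 2$ (the generic layer regime), so each of these three types of local parameter updates affects strictly more than one region's centroid and radius. There is no real obstacle: the proof is a direct reading of the formulas of Theorem \ref{thm:layerPD}. The main content is conceptual rather than technical — namely, the contrast with classical PDs, where each centroid/radius is a free parameter attached to a single cell. Here the tensor structure of $A,B$ and the additive form $\mu_{\br}=\sum_k[A]_{k,[\br]_k,\bigcdot}$ force a tight coupling: a local edit of one unit propagates to an entire $(K-1)$-dimensional slice of codes, which is the geometric mechanism underlying the subdivision phenomena studied in Section \ref{sec:DEEP}.
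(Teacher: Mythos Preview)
Your argument is correct and is exactly the natural one: the paper does not supply a separate proof of this theorem, treating it as an immediate consequence of the explicit centroid/radius formulas of Theorem~\ref{thm:layerPD} (and the surrounding discussion that ``most regions are tied together in term of learnable parameter''). Your counting of the $R^{K-1}$ affected codes makes this dependence precise and is, if anything, more detailed than what the paper provides.
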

The above result recovers weight sharing concepts and implicit bias/regularization. In fact, most regions are tied together in term of learnable parameter. Trying to modify a single region while leaving everything else the same is not possible in general.

\section{Input Space Power Diagram of a MASO Deep Network}
\label{sec:DEEP}
\label{sec:net}

We consider the composition of multiple layers, as such, the input space of layer $\ell$ is denoted as $\mathcal{X}^{(\ell)}$, with $\mathcal{X}^{(0)}$ the DN input space. 

\subsection{
Power Diagram Subdivision}
\label{sec:layers}

\begin{figure}[t!]
    \centering
    \begin{minipage}{0.03\linewidth}
    \rotatebox{90}{\hspace{0.5cm}Input space partitioning\hspace{1.3cm}Partition polynomial\hspace{1.3cm}}
    \end{minipage}
    \begin{minipage}{0.95\linewidth}
    \centering
    \begin{minipage}{0.29\linewidth}
    \centering
    Layer 1: mapping\\$\mathcal{X}^{(0)}\subset \mathbb{R}^2$ to $\mathcal{X}^{(1)}\subset \mathbb{R}^6$
    \end{minipage}\hfill
    \begin{minipage}{0.29\linewidth}
    \centering
    Layer 2: mapping\\$\mathcal{X}^{(1)}\subset \mathbb{R}^6$ to $\mathcal{X}^{(2)}\subset \mathbb{R}^6$
    \end{minipage}\hfill
    \begin{minipage}{0.31\linewidth}
    \centering
    Layer 3 (classifier):  mapping $\mathcal{X}^{(2)}\subset \mathbb{R}^6$ to $\mathcal{X}^{(3)}\subset \mathbb{R}^1$
    \end{minipage}\\
    \begin{minipage}{0.32\linewidth}
    \centering
    \includegraphics[width=1\linewidth]{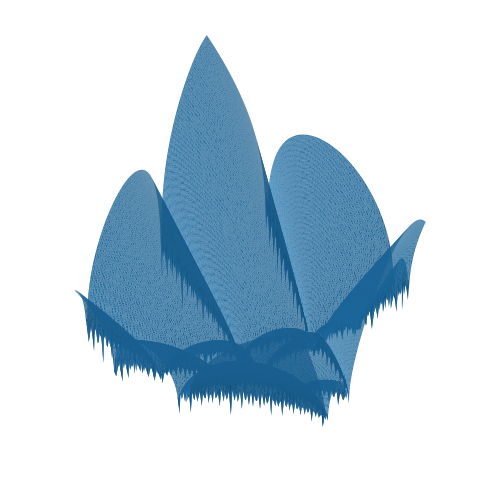}
    \end{minipage}\hfill
    \begin{minipage}{0.32\linewidth}
    \centering
    \includegraphics[width=1\linewidth]{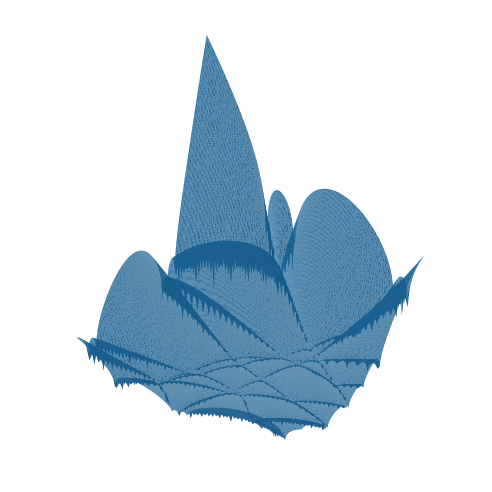}
    \end{minipage}\hfill
    \begin{minipage}{0.32\linewidth}
    \centering
    \includegraphics[width=1\linewidth]{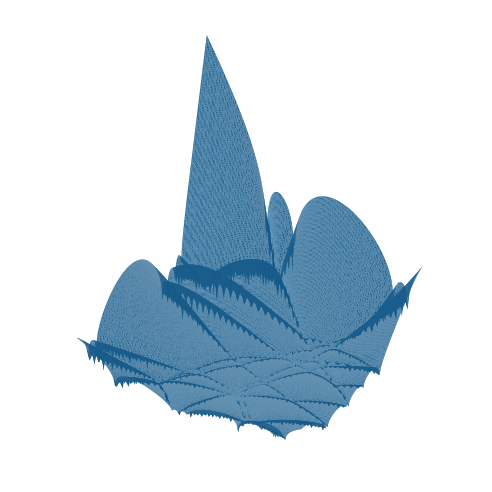}
    \end{minipage}\\
    \begin{minipage}{0.32\linewidth}
    \centering
    \includegraphics[width=1\linewidth]{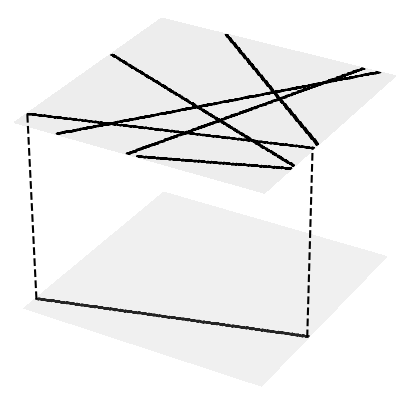}
    \end{minipage}
    \begin{minipage}{0.32\linewidth}
    \centering
    \includegraphics[width=1\linewidth]{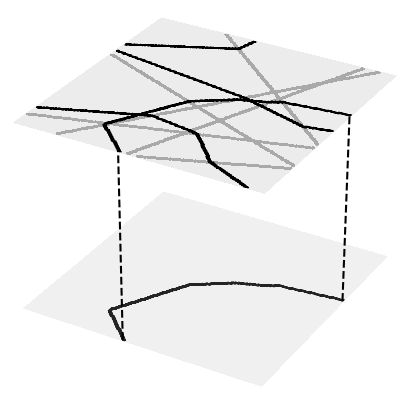}
    \end{minipage}
    \begin{minipage}{0.32\linewidth}
    \centering
    \includegraphics[width=1\linewidth]{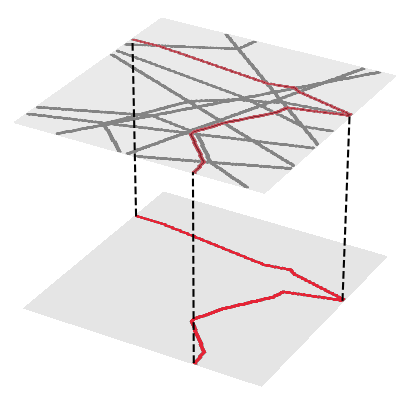}
    \end{minipage}
    \end{minipage}
    \caption{\small {\bf Top:}~The partition polynomial as defined in (\ref{eq:polynomial}), whose roots define the partition boundaries in the input space and determined by each layer parameters and nonlinearities.
    {\bf Bottom:}~Evolution of the input space partitioning layer after layer (left to right: $\Omega^{(1)}, \Omega^{(1,2)},\Omega^{(1,2,3)}$ from (\ref{eq:omega})) with the newly introduced boundaries in dark and previously built partitioning (being refined) in grey. Below each partitioning, one of the newly introduced cut denoted as ${\rm edge}_{\mathcal{X}^{(0)}}(k,\ell)$ from (\ref{eq:edge}) is highlighted, and which, in the last layer case (right), corresponds to the decision boundary (in red) see Figures~\ref{fig:additional_detail}, \ref{fig:additional_detail2} in Appendix~\ref{appendix:additional_figure} for additional examples.
    \normalsize}
    \label{fig:megaplot}
\end{figure}

We provide in this section the formula for deriving the input space partitioning of an $L$-layer DN by means of a recursive scheme.
Recall that each layer defines its own polytope $\P^{(\ell)}$ according to Proposition~\ref{prop:singlelayer}, each with domain $\mathcal{X}^{(\ell-1)}$.
The DN partitioning corresponds to a recursive subdivision where each per layer polytope subdivides the previously obtained partitioning, involving the representation of the considered layer polytope in the input space $\mathcal{X}^{(0)}$. This subdivision can be analytically derived from the following recursion scheme.
\\
{\bf Initialization}: The initialization consists of defining the part of the input space to consider $\mathcal{X}^{(0)}\subset \mathbb{R}^D$.
\\
{\bf Recursion step ($\ell=1$)}: The first layer subdivides $\mathcal{X}^{(0)}$  into a  $\PD$ from  Theorem~\ref{thm:layerPD} with parameters $A^{(1)}, B^{(1)}$ to obtain the layer 1 partitioning $\Omega^{(1)}$.
\\
{\bf Recursion step ($\ell=2$)}:
The second layer subdivides each cell of $\Omega^{(1)}$. Let's consider a specific cell $\omega^{(1)}_{\br^{(1)}}$; all inputs in this cell are projected to $\mathcal{X}^{(1)}$ by the first layer via $A^{(1)}_{\br^{(1)}}\bx +B^{(1)}_{\br^{(1)}}$.\footnote{Recall from (\ref{eq:MASO}) that $A^{(1)}_{\br^{(1)}},B^{(1)}_{\br^{(1)}}$ are the affine parameters associated to cell $\br^{(1)}$} The convex cell $\omega^{(1)}_{\br^{(1)}}$ thus remains a convex cell in $\mathcal{X}^{(1)}$ defined as the following affine transform of the cell
\begin{equation}
    {\rm aff}_{\br^{(1)}}= \{ A^{(1)}_{\br^{(1)}}\bx +B^{(1)}_{\br^{(1)}}, \bx \in \omega^{(1)}_{\br^{(1)}}\}\subset {\mathcal{X}^{(1)}}.\label{eq:affine}
\end{equation}
Since on the cell the first layer is linear; the slice of the polytope $\P^{(2)}\subset \mathcal{X}^{(1)}\times \mathbb{R}^{D(2)}$ (recall (\ref{eq:polytopelayer})) having for domain ${\rm aff}_{\br^{(1)}}$ formally defined as 
\begin{align}
\P^{(2)}_{\br^{(1)}} = \P^{(2)} \cap ({\rm aff}_{\br^{(1)}}\times \mathbb{R}^{D(2)}),\label{eq:restricted}
\end{align}
can thus be expressed w.r.t. $\mathcal{X}^{(0)}$. 
\begin{lemma}
The domain restricted polytope (\ref{eq:restricted}) can be expressed in the input space $\omega^{(1)}_{\br^{(1)}} \subset \mathcal{X}^{(0)}$ as
\begin{align}
    \P^{(1\leftarrow 2)}_{\br^{(1)}}\hspace{-0.13cm}=\hspace{-0.06cm} \cap_{\br^{(2)}} \{\hspace{-0.05cm}(\bx,\by) \hspace{-0.051cm}\in \hspace{-0.051cm}\omega^{(1)}_{\br^{(1)}} \hspace{-0.1cm} \times \mathbb{R}^{D(1)}\hspace{-0.1cm}:\hspace{-0.05cm}[\by]_1\hspace{-0.1cm}\geq \Epsilon^{(1\leftarrow 2)}_{1,[\br^{(2)}]_1}\hspace{-0.08cm}(\bx),\dots,[\by]_{D(1)}\hspace{-0.1cm}\geq \Epsilon^{(1\leftarrow 2)}_{D(1),[\br^{(2)}]_{D(1)}}\hspace{-0.08cm}(\bx) \}
\end{align}
with $\Epsilon^{(1\leftarrow 2)}_{k,[\br^{(1)}]_k}$ the hyperplane with slope $A^{(1)^T}_{\br^{(1)}}A^{(2)}_{\br^{(2)}}$ and bias $\langle [A^{(2)}_{\br^{(2)}}]_{k,r,.},B^{(1)}_{\br^{(1)}}\rangle+B^{(2)}_{\br^{(2)}}$,$k\in \{1,\dots,D(1)\}$.
\end{lemma}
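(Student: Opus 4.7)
The plan is to pull back the polytope $\P^{(2)}$ through the first-layer map restricted to the cell $\omega^{(1)}_{\br^{(1)}}$, on which that map reduces to a single affine transform. By Proposition~\ref{prop:singlelayer} applied at $\ell=2$, one has the explicit half-space description $\P^{(2)}=\bigcap_{\br^{(2)}}E^{+(2)}_{\br^{(2)}}$, with each $E^{+(2)}_{\br^{(2)}}$ the set of $(\bz^{(1)},\by)\in \mathcal{X}^{(1)}\times\mathbb{R}^{D(2)}$ satisfying $[\by]_k\geq \Epsilon^{(2)}_{k,[\br^{(2)}]_k}(\bz^{(1)})$ for every $k$. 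The set defined in (\ref{eq:restricted}) is then the intersection of this family with the slice $\mathrm{aff}_{\br^{(1)}}\times\mathbb{R}^{D(2)}$ from (\ref{eq:affine}).

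The main step is to exploit that on $\omega^{(1)}_{\br^{(1)}}$ the layer map $f^{(1)}$ coincides with the single affine transform $\bx\mapsto A^{(1)}_{\br^{(1)}}\bx+B^{(1)}_{\br^{(1)}}$, which by (\ref{eq:affine}) is a bijection between $\omega^{(1)}_{\br^{(1)}}$ and $\mathrm{aff}_{\br^{(1)}}$. Parameterizing the slice by $\bx$ and substituting $\bz^{(1)}=A^{(1)}_{\br^{(1)}}\bx+B^{(1)}_{\br^{(1)}}$ into each hyperplane inequality, bilinearity of the inner product yields
\begin{align*}
[\by]_k \;\geq\; \big\langle (A^{(1)}_{\br^{(1)}})^{\top}[A^{(2)}]_{k,[\br^{(2)}]_k,\bigcdot},\,\bx\big\rangle + \big\langle [A^{(2)}]_{k,[\br^{(2)}]_k,\bigcdot},\, B^{(1)}_{\br^{(1)}}\big\rangle + [B^{(2)}]_{k,[\br^{(2)}]_k},
\end{align*}
which is exactly the half-space $\{[\by]_k\geq \Epsilon^{(1\leftarrow 2)}_{k,[\br^{(2)}]_k}(\bx)\}$ with the slope $(A^{(1)}_{\br^{(1)}})^{\top}A^{(2)}_{\br^{(2)}}$ and bias announced in the statement.

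The proof then concludes by reassembling: bijectivity of the restricted affine map turns each $E^{+(2)}_{\br^{(2)}}$ intersected with the slice into a single half-space over $\omega^{(1)}_{\br^{(1)}}\times\mathbb{R}^{D(1)}$, and intersecting over all $\br^{(2)}\in\{1,\dots,R\}^{D(2)}$ yields the claimed expression for $\P^{(1\leftarrow 2)}_{\br^{(1)}}$. The one genuinely delicate point -- and the main obstacle to rigor -- is the cell-wise nature of the substitution: the identity $\bz^{(1)}=A^{(1)}_{\br^{(1)}}\bx+B^{(1)}_{\br^{(1)}}$ holds \emph{only} on $\omega^{(1)}_{\br^{(1)}}$, so one must restrict the domain before substituting; on any other layer-$1$ cell the affine parameters differ, and naively extending the pulled-back half-spaces to all of $\mathcal{X}^{(0)}$ would introduce constraints that are either spurious or missing relative to $\P^{(2)}$. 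The explicit conditioning of $\Epsilon^{(1\leftarrow 2)}$ on $\br^{(1)}$, together with the restriction $\bx\in\omega^{(1)}_{\br^{(1)}}$ built into the displayed set, is exactly the bookkeeping that makes the piecewise pullback consistent.
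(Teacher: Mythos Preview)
Your argument is correct and is exactly the substitution the paper relies on: the paper does not give a standalone proof of this lemma, but the identical computation (substituting $\bz^{(1)}=A^{(1)}_{\br^{(1)}}\bx+B^{(1)}_{\br^{(1)}}$ into the second-layer affine forms and reading off the pulled-back slope and bias) is carried out in the appendix as part of the proof of Theorem~\ref{thm:layerPD}, where the ``single layer following an affine transform'' case is treated. Your explicit use of the half-space description of $\P^{(2)}$ from Proposition~\ref{prop:singlelayer} and your remark about the domain restriction being essential are both sound and mirror the paper's construction.
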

From Lemma~\ref{lemma:vertical_layer}, $\P^{(1\leftarrow 2)}_{\br^{(1)}}$ induces an underlying PD on its domain $\omega^{(1)}_{\br^{(1)}}$ that subdivides the cell into a PD denoted as $\PD^{(1\leftarrow 2)}_{\br^{(1)}}$ leading to the centroids
$\mu^{(1\leftarrow 2)}_{\br^{(1)},\br^{(2)}}={A^{(1)}_{\br^{(1)}}}^\top \mu^{(1\leftarrow 2)}_{\br^{(2)}},\label{eq:2layer_centroid}$ and radii $
 {\rm rad}^{(1\leftarrow 2)}_{\br^{(1)},\br^{(2)}}=\| \mu^{(1\leftarrow 2)}_{\br^{(1)},\br^{(2)}}\|^2 +2 \langle {\mu^{(2)}_{\br^{(2)}}}, B^{(1)}_{\br^{(1)}}\rangle+2\langle \textbf{1},B^{(2)}_{\br^{(2)}}\rangle, \forall \br^{(2)}\in\{1,\dots,R\}^{D(2)}$. The PD parameters thus combine the affine parameters $A^{(1)}_{\br^{(1)}},B^{(1)}_{\br^{(1)}}$ of the considered cell with the second layer parameters $A^{(2)},B^{(2)}$. Repeating this subdivision process for all cells $\br^{(1)}$ from $\Omega^{(1)}$ form the input space partitioning
$
    \Omega^{(1,2)}=\cup_{\br^{(1)}}\PD^{(1\leftarrow 2)}_{\br^{(1)}}.
$ 
\\
{\bf Recursion step}:
Consider the situation at layer $\ell$ knowing  $\Omega^{(1,\dots,\ell-1)}$ from the previous subdivision steps. following the intuition from the $\ell=2$, layer $\ell$ subdivides each cell in $\Omega^{(1,\dots,\ell-1)}$ to produce $\Omega^{(1,\dots,\ell)}$
leading to the -up to layer $\ell$-layer DN partitioning defined as 
\begin{align}
    \Omega^{(1,\dots,\ell)}=\cup_{\br^{(1)},\dots,\br^{(\ell-1)}}\PD^{(1 \leftarrow \ell)}_{\br^{(1)},\dots,\br^{(\ell-1)}} .\label{eq:omega}
\end{align}

\begin{thm}
\label{thm:subdivi2}
Each cell $\omega^{(1,\dots,\ell-1)}_{\br^{(1)},\dots,\br^{(\ell-1)}} \in \Omega^{(1,\dots,\ell-1)}$ is subdivided into $\PD^{(1\leftarrow \ell)}_{\br^{(1)},\dots,\br^{(\ell-1)}}$, a PD with domain $\omega^{(1.\dots.\ell-1)}_{\br^{(1)},\dots,\br^{(\ell-1)}}$ and parameters

\begin{align}
 \mu^{(1\leftarrow \ell)}_{\br^{(1)},\dots,\br^{(\ell)}}=&(A^{(1\leftarrow \ell-1)}_{\br^{(1)},\dots,\br^{(\ell-1)}})^\top \mu^{(\ell)}_{\br^{(\ell)}}&&\text{(centroids)}\\
 {\rm rad}^{(1\leftarrow \ell)}_{\br^{(1)},\dots,\br^{(\ell)}}=&-\| \mu^{(1\leftarrow \ell)}_{\br^{(1)},\dots,\br^{(\ell)}}\|^2 -2 \langle \mu^{(\ell)}_{\br^{(\ell)}}, B^{(1\rightarrow \ell-1)}_{\br^{(1)},\dots,\br^{(\ell-1)}}\rangle-2\langle \textbf{1},B^{(\ell)}_{\br^{(\ell)}}\rangle&&\text{(radii)},
 \label{eq:llayer_bias}
\end{align}
$\forall \br^{(i)}\in \{1,\dots,R\}^{D^{(i)}}$
with $B^{(1\rightarrow \ell-1)}=\sum_{\ell'=1}^{\ell-1}\big(\prod_{i=\ell-1}^{\ell'}A^{(i)}_{\br^{(i)}}\big)B^{(\ell')}_{\br^{(\ell')}}$
forming $\Omega^{(1,\dots,\ell)}$.
\end{thm}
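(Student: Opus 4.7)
The plan is to proceed by induction on $\ell$, using the recursion already laid out in the paper. The base case $\ell=1$ is exactly Theorem~\ref{thm:layerPD}, and the $\ell=2$ case has been verified in the lemma preceding the theorem, so the real content is the inductive step from $\ell-1$ to $\ell$. I would first record the inductive hypothesis in operational form: on each cell $\omega^{(1,\dots,\ell-1)}_{\br^{(1)},\dots,\br^{(\ell-1)}}$ of the partition $\Omega^{(1,\dots,\ell-1)}$, the composition of the first $\ell-1$ MASO layers is a single affine map $\by = A^{(1\leftarrow \ell-1)}_{\br^{(1)},\dots,\br^{(\ell-1)}}\bx + B^{(1\rightarrow \ell-1)}_{\br^{(1)},\dots,\br^{(\ell-1)}}$ with the product-and-sum expressions defined in the statement. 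This is immediate from the per-region affinity of each MASO layer (Theorem~\ref{thm:MASO}) and the convexity of each cell.

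Next I would fix a cell and pull the layer-$\ell$ power diagram back through this affine map. By Theorem~\ref{thm:layerPD}, layer $\ell$ partitions $\mathcal{X}^{(\ell-1)}$ into a PD with centroids $\mu^{(\ell)}_{\br^{(\ell)}}$ and radii ${\rm rad}^{(\ell)}_{\br^{(\ell)}}=2\langle\mathbf{1},B^{(\ell)}_{\br^{(\ell)}}\rangle+\|\mu^{(\ell)}_{\br^{(\ell)}}\|^2$. Writing $A,B$ for the composed parameters on the fixed cell, the index selected for any $\bx$ in the cell is
\begin{align*}
\argmin_{\br^{(\ell)}}\Bigl(\|A\bx+B-\mu^{(\ell)}_{\br^{(\ell)}}\|^{2}-{\rm rad}^{(\ell)}_{\br^{(\ell)}}\Bigr).
\end{align*}
Expanding the square, the quadratic term $\|A\bx\|^{2}$ and the constants $\|B\|^{2}+2\langle A\bx,B\rangle$ are independent of $\br^{(\ell)}$ (since $A,B$ depend only on $\br^{(1)},\dots,\br^{(\ell-1)}$) and therefore drop out of the $\argmin$. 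What remains is linear and constant in $\bx$, and I would rearrange it to match the template $\|\bx-\mu'\|^{2}-{\rm rad}'$ (again dropping the common $\|\bx\|^{2}$ term). Matching coefficients of $\langle\bx,\cdot\rangle$ identifies the centroid $\mu^{(1\leftarrow\ell)}_{\br^{(1)},\dots,\br^{(\ell)}}=A^{\top}\mu^{(\ell)}_{\br^{(\ell)}}$, and then matching the constant term pins down the radius; substituting the expression for ${\rm rad}^{(\ell)}_{\br^{(\ell)}}$ causes the $\|\mu^{(\ell)}_{\br^{(\ell)}}\|^{2}$ terms to collapse and leaves precisely the claimed combination of $\|\mu^{(1\leftarrow\ell)}\|^{2}$, $\langle\mu^{(\ell)}_{\br^{(\ell)}},B^{(1\rightarrow\ell-1)}\rangle$, and $\langle\mathbf{1},B^{(\ell)}_{\br^{(\ell)}}\rangle$ (up to the global sign/shift freedom noted after (\ref{eq:cell_pd})).

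Finally, I would assemble the pieces across cells: taking the union of the resulting sub-PDs over all $\br^{(1)},\dots,\br^{(\ell-1)}$ produces $\Omega^{(1,\dots,\ell)}$ as in (\ref{eq:omega}), and disjointness follows from the disjointness of the parent cells together with the PD being a partition on each domain. The main obstacle I expect is bookkeeping rather than ideas: making the quadratic-in-$\bx$ cancellation explicit, handling the global-shift convention for radii cleanly (so the signs in the final formula are unambiguous), and verifying that the composed affine maps $A^{(1\leftarrow\ell-1)}$ and $B^{(1\rightarrow\ell-1)}$ expand telescopically into the product-sum form stated, which is really just iterating the identity $A^{(\ell-1)}_{\br^{(\ell-1)}}(A^{(1\leftarrow\ell-2)}\bx+B^{(1\rightarrow\ell-2)})+B^{(\ell-1)}_{\br^{(\ell-1)}}$. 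Once this is set up, each inductive step is a single line of expand-and-match algebra.
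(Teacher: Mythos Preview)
Your proposal is correct and matches the paper's approach: the paper likewise reduces the inductive step to the ``affine map followed by a MASO layer'' situation (Lemma~\ref{thm:layer_affine} and Lemma~\ref{lemma:2layers} in the appendix), and carries out exactly the expand--drop--match algebra you describe to read off the pulled-back centroids and radii. Your framing as an explicit induction is slightly more formal than the paper's recursion narrative in Section~\ref{sec:layers}, but the content is identical.
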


The described recursion construction also provides a direct result on the shape of the entire DN input space partitioning cells.
\begin{cor}
The cells of the DN input space partitioning are convex polygons.
\end{cor}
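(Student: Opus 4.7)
The plan is to prove the corollary by induction on the layer depth $\ell$, using Theorem~\ref{thm:subdivi2} together with the already-established fact (corollary to Theorem~\ref{thm:layerPD}) that a single power diagram partitions its domain into convex polytopes.

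For the base case $\ell = 1$, the partition $\Omega^{(1)}$ is exactly the layer-1 PD on $\mathcal{X}^{(0)}$ from Theorem~\ref{thm:layerPD}, whose cells are convex polytopes by the aforementioned corollary. For the inductive step, assume every cell $\omega^{(1,\dots,\ell-1)}_{\br^{(1)},\dots,\br^{(\ell-1)}} \in \Omega^{(1,\dots,\ell-1)}$ is a convex polytope in $\mathcal{X}^{(0)}$. By Theorem~\ref{thm:subdivi2}, this cell is subdivided by $\PD^{(1\leftarrow \ell)}_{\br^{(1)},\dots,\br^{(\ell-1)}}$, whose raw cells (before restriction to the parent) are defined by the Laguerre assignment $\argmin_{\br^{(\ell)}} \|\bx - \mu^{(1\leftarrow \ell)}_{\br^{(1)},\dots,\br^{(\ell)}}\|^2 - {\rm rad}^{(1\leftarrow \ell)}_{\br^{(1)},\dots,\br^{(\ell)}}$. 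Expanding the quadratic terms, each pairwise inequality of the form ``cell $\br^{(\ell)}$ beats cell $\tilde{\br}^{(\ell)}$'' is a linear (affine) inequality in $\bx$, so each raw PD cell is an intersection of finitely many closed half-spaces, hence a convex polytope in $\mathcal{X}^{(0)}$.

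Each cell of $\Omega^{(1,\dots,\ell)}$ is then the intersection of such a raw PD cell with the parent cell $\omega^{(1,\dots,\ell-1)}_{\br^{(1)},\dots,\br^{(\ell-1)}}$. Both are convex polytopes, and the intersection of two convex polytopes is again a convex polytope (a finite intersection of half-spaces). Taking the union over $(\br^{(1)},\dots,\br^{(\ell-1)})$ as in (\ref{eq:omega}) gives the full partition $\Omega^{(1,\dots,\ell)}$ whose cells are all convex polytopes, completing the induction.

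The only subtlety worth double-checking is that the restriction step does not destroy convexity: this is where the recursive expressions in Theorem~\ref{thm:subdivi2} matter, since they guarantee that the subdividing boundaries are expressed as affine hyperplanes in $\mathcal{X}^{(0)}$ (not in the lifted feature-map space), which is precisely what we need for the intersection argument to live entirely within the category of convex polytopes in $\mathcal{X}^{(0)}$. Beyond that bookkeeping, the proof is a routine induction and presents no real obstacle.
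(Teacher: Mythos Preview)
Your induction is correct and is exactly the argument the paper has in mind: the sentence preceding the corollary states that ``the described recursion construction also provides a direct result on the shape of the entire DN input space partitioning cells,'' and no further proof is given. You have simply made explicit the two ingredients the paper leaves implicit---that each $\PD^{(1\leftarrow \ell)}$ has affine boundaries in $\mathcal{X}^{(0)}$ (from Theorem~\ref{thm:subdivi2}) and that intersecting with the convex parent cell preserves convexity---so there is nothing to add.
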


\subsection{Combinatorial Geometry Properties}
\label{sec:cell_properties}

We highlight a key computational property of DNs contributing to their success.
While the actual number of cells from a layer PD varies greatly depending on the parameters, the cell inference task always search over the maximum $R^{(\ell)}_{{\rm upper}}=R^{D(\ell)}$ number of cells as
\begin{align}
\br^{(\ell)}(\bx) = \argmin_{\br \in \{1,\dots,R\}^{D(\ell)}}  \|\bz^{(\ell-1)}(\bx) - \mu^{(\ell)}_{\br} \|+{\rm rad}^{(\ell)}_{\br}.\label{eq:q_inference}
\end{align}
The computational and memory complexity of this task is $\mathcal{O}(R^{(\ell)}_{{\rm upper}}D(\ell-1))$. While approximations exist \cite{muja2009fast,arya1998optimal,georgescu2003mean}, we demonstrate how a MASO induced PD is constructed in such a way that it is parameter-,  memory-, and computation-efficient.

\begin{lemma}
A DN layer solves (\ref{eq:q_inference}) with computational and memory complexity $\mathcal{O}(\log_{R^{(\ell)}}(R^{(\ell)}_{{\rm upper}})R^{(\ell)}D(\ell-1))=\mathcal{O}(D(\ell)R^{(\ell)}D(\ell-1))$  as opposed to $\mathcal{O}(R^{(\ell)}_{{\rm upper}}D(\ell-1))=\mathcal{O}((R^{(\ell)})^{D(\ell)}R^{(\ell)}D(\ell-1))$ for an arbitrary Power Diagram. 
\end{lemma}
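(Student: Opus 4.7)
The plan is to show that the PD cell lookup in (\ref{eq:q_inference}) decouples across the $D(\ell)$ units, so that a joint search over the $R^{D(\ell)}$ codes reduces to $D(\ell)$ independent searches over $R$ options each. The key observation is purely algebraic and follows by plugging the centroid and radius formulas from Theorem~\ref{thm:layerPD} into the Laguerre distance.

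First, I would expand the objective in (\ref{eq:q_inference}). Using $\mu^{(\ell)}_{\br}=\sum_{k=1}^{D(\ell)}[A^{(\ell)}]_{k,[\br]_k,\bigcdot}$ and ${\rm rad}^{(\ell)}_{\br}=2\langle\mathbf{1},B^{(\ell)}_{\br}\rangle+\|\mu^{(\ell)}_{\br}\|^2$ (taking the sign convention used throughout Section~\ref{sec:layer}), a direct expansion gives
\begin{align*}
\|\bz^{(\ell-1)}(\bx)-\mu^{(\ell)}_{\br}\|^2-{\rm rad}^{(\ell)}_{\br}
\;=\;\|\bz^{(\ell-1)}(\bx)\|^2-2\sum_{k=1}^{D(\ell)}\Big(\langle [A^{(\ell)}]_{k,[\br]_k,\bigcdot},\bz^{(\ell-1)}(\bx)\rangle+[B^{(\ell)}]_{k,[\br]_k}\Big).
\end{align*}
The first term does not depend on $\br$, and the remaining sum is a sum of $D(\ell)$ terms, each depending on $\br$ only through the single coordinate $[\br]_k$.

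Second, I would invoke the elementary fact that minimizing a sum of functions in disjoint variables can be done coordinatewise. This gives
\begin{align*}
[\br^{(\ell)}(\bx)]_k \;=\; \argmax_{r\in\{1,\dots,R\}}\Big(\langle [A^{(\ell)}]_{k,r,\bigcdot},\bz^{(\ell-1)}(\bx)\rangle+[B^{(\ell)}]_{k,r}\Big)\;=\;\argmax_{r}\Epsilon^{(\ell)}_{k,r}(\bz^{(\ell-1)}(\bx)),
\end{align*}
which recovers exactly the per-unit face-index function $r_k$ from (\ref{eq:r_unit}). In other words, the forward evaluation (\ref{eq:MASO}) of the MASO layer itself already performs cell inference: no separate nearest-neighbor search over the $R^{D(\ell)}$ joint codes is ever needed.

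Third, I would count operations. For each of the $K=D(\ell)$ units, evaluating $\Epsilon^{(\ell)}_{k,r}$ for all $r\in\{1,\dots,R\}$ requires $R$ affine forms in $\mathbb{R}^{D(\ell-1)}$, i.e.\ $\mathcal{O}(R\,D(\ell-1))$ work, followed by an $\mathcal{O}(R)$ argmax. Summing over $k$ yields the claimed $\mathcal{O}(D(\ell)\,R^{(\ell)}\,D(\ell-1))$ time and memory; rewriting $D(\ell)=\log_{R^{(\ell)}}(R^{D(\ell)})=\log_{R^{(\ell)}}(R^{(\ell)}_{\rm upper})$ gives the first form in the statement. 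The comparison with an arbitrary PD simply observes that an unstructured PD with $R^{(\ell)}_{\rm upper}$ centroids in $\mathbb{R}^{D(\ell-1)}$ requires $\mathcal{O}(R^{(\ell)}_{\rm upper}\,D(\ell-1))$ to score every centroid. The only substantive point in the whole argument is the centroid/radius cancellation that produces the separable sum; everything else is bookkeeping, so I do not anticipate any real obstacle.
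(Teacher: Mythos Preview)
Your proposal is correct and follows exactly the approach the paper intends: the MASO forward pass already performs the cell inference because the Laguerre objective, once the centroid/radius formulas of Theorem~\ref{thm:layerPD} are substituted, separates additively across the $D(\ell)$ units, reducing the joint $\argmin$ over $R^{D(\ell)}$ codes to $D(\ell)$ independent $\argmax$'s over $R$ options each. The paper's own proof (Appendix, ``Proof Complexity'') is only a one-sentence sketch pointing back to the MASO structure of Section~\ref{sec:back}; your version makes the separability explicit via the cancellation $\|\mu_{\br}\|^2-\|\mu_{\br}\|^2$ and supplies the operation count, so it is strictly more complete than what appears in the paper.
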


The entire DN then solves iteratively (\ref{eq:q_inference}) for each layer.

\begin{thm}
An entire DN infers an input cell with computational and memory complexity $\mathcal{O}(\sum_{\ell=1}^LD(\ell)R^{(\ell)}D(\ell-1))$  as opposed to $\mathcal{O}(\sum_{\ell=1}^L(R^{(\ell)})^{D(\ell)}R^{(\ell)}D(\ell-1))$ for an arbitrary hierarchy of Power Diagrams. 
\end{thm}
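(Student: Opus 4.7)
The plan is to bootstrap the per-layer inference lemma by iterating it along the depth of the network, using the subdivision structure from Theorem~\ref{thm:subdivi2} to argue that identifying the DN input-space cell reduces to identifying one per-layer cell code at each depth. First I would recall that, by Theorem~\ref{thm:subdivi2}, the input-space cell $\omega^{(1,\dots,L)}_{\br^{(1)},\dots,\br^{(L)}}$ containing $\bx$ is uniquely indexed by the tuple of per-layer codes $(\br^{(1)}(\bx),\dots,\br^{(L)}(\bx))$, where $\br^{(\ell)}(\bx)$ is the PD cell of layer $\ell$ containing the feature map $\bz^{(\ell-1)}(\bx)$. Thus the inference task for the DN partitioning decomposes into $L$ successive layer-level inference tasks of the form (\ref{eq:q_inference}), and no additional global search over the $\prod_\ell (R^{(\ell)})^{D(\ell)}$ composite codes is required.

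Next I would invoke the previous lemma layer by layer. At layer $\ell$, once $\bz^{(\ell-1)}(\bx)$ has been evaluated (at cost already accounted for in the forward pass), the lemma provides the code $\br^{(\ell)}(\bx)$ with computational and memory cost $\mathcal{O}(D(\ell)R^{(\ell)}D(\ell-1))$ by exploiting the separability of the MASO across its $D(\ell)$ units (each requiring a search over only $R^{(\ell)}$ affine pieces, rather than a joint search over $(R^{(\ell)})^{D(\ell)}$ composite faces). Summing this cost over $\ell=1,\dots,L$ yields the upper bound $\mathcal{O}\!\bigl(\sum_{\ell=1}^L D(\ell)R^{(\ell)}D(\ell-1)\bigr)$ for both computation and memory.

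For the comparison baseline, I would observe that a generic (non-MASO) hierarchy of PDs with the same per-layer cell counts $R^{(\ell)}_{\text{upper}} = (R^{(\ell)})^{D(\ell)}$ would, at each layer $\ell$, require a brute-force Laguerre-distance comparison against all $R^{(\ell)}_{\text{upper}}$ centroids in $\mathbb{R}^{D(\ell-1)}$, giving cost $\mathcal{O}((R^{(\ell)})^{D(\ell)} R^{(\ell)} D(\ell-1))$ per layer and therefore $\mathcal{O}\!\bigl(\sum_{\ell=1}^L (R^{(\ell)})^{D(\ell)} R^{(\ell)} D(\ell-1)\bigr)$ in total. The ratio between the two expressions is exponential in each $D(\ell)$, which is the point.

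I expect the only delicate step to be justifying that the per-layer arg-min in (\ref{eq:q_inference}) indeed returns the correct component $\br^{(\ell)}(\bx)$ of the composite DN code rather than merely some suboptimal approximation: this is where the MASO structure is essential, because the $K=D(\ell)$ maxes of (\ref{eq:MASO}) decouple across units, so the Laguerre arg-min over $R^{(\ell)}_{\text{upper}}$ composite labels factorizes into $D(\ell)$ independent arg-maxes over $R^{(\ell)}$ affine pieces. Once this factorization is noted, summing over depth and comparing to the generic baseline is routine bookkeeping.
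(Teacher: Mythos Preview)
Your proposal is correct and follows essentially the same approach as the paper: the paper's argument (Appendix ``Proof Complexity'') simply recalls that each layer $\ell$ MASO first infers the cell code $\br(\bz^{(\ell-1)}(\bx))$ and then applies the corresponding affine map, so iterating this over $\ell=1,\dots,L$ and summing the per-layer cost from the preceding lemma gives the claimed bound. Your write-up is in fact more explicit than the paper's, particularly in spelling out the factorization of the composite arg-min across units and the r\^{o}le of Theorem~\ref{thm:subdivi2} in ensuring the tuple $(\br^{(1)}(\bx),\dots,\br^{(L)}(\bx))$ indexes the DN cell, but the underlying idea is identical.
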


The above practical result is crucial to the ability of DN layers to perform extremely fine grained input space partitioning without sacrificing computation time especially at test time where one needs only feed forward computations of an input to obtain a prediction.

\subsection{Centroid and Radius Computation}
\label{sec:computation}
\begin{figure}
    \centering
    \hspace{0.27cm}Input $\bx$\hspace{0.32cm} 
    $\mu^{(1)}_{\bx}$\hspace{0.39cm}
    $\mu^{(1,2)}_{\bx}$\hspace{0.3cm}
    $\mu^{(1,2,3)}_{\bx}$\hspace{0.02cm}
    $\mu^{(1,\dots,4)}_{\bx}$\hspace{0.02cm}
    $\mu^{(1,\dots,5)}_{\bx}$\hspace{0.02cm}
    $\mu^{(1,\dots,6)}_{\bx}$\hspace{0.02cm}
    $\mu^{(1,\dots,7)}_{\bx}$\hspace{0.02cm}
    $\mu^{(1,\dots,8)}_{\bx}$\\
    \includegraphics[width=1\linewidth]{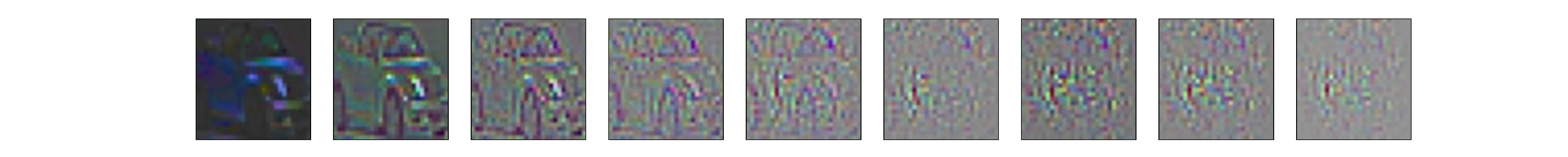}\\
    \includegraphics[width=1\linewidth]{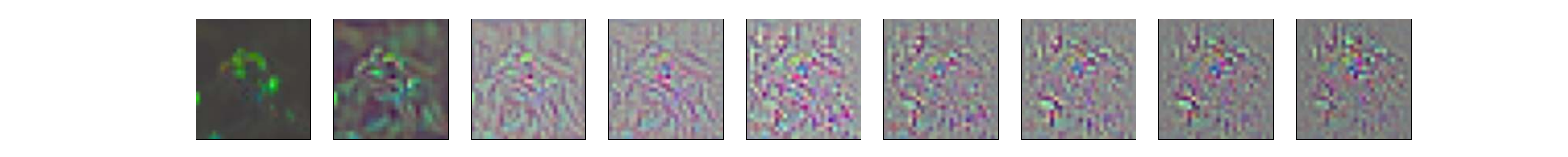}\\
    \includegraphics[width=1\linewidth]{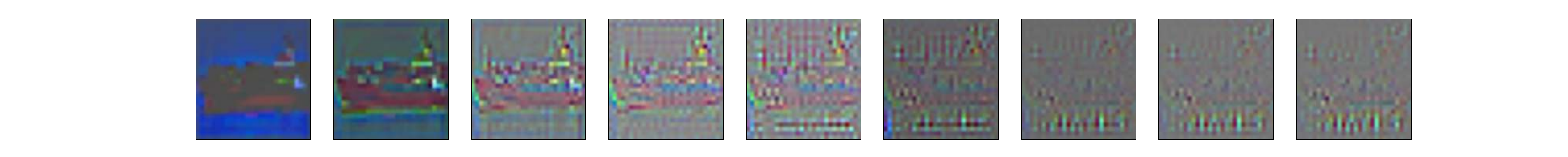}\\
    \includegraphics[width=1\linewidth]{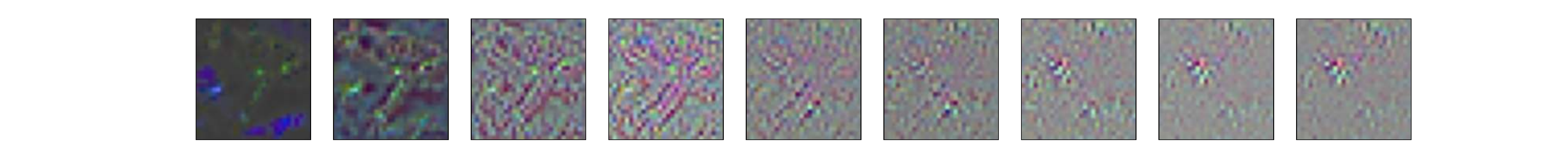}
    \caption{Depiction of the  centroids of the PD regions that contain an input $\bx$. This input belongs to a specific region $\omega^{(1,\dots,\ell)}_{\bx}$ for each successively refined PD subdivision of each layer $\Omega^{(1,\dots,\ell)}$, and in each case, the region has an associated centroid depicted here and radius. As depth increase, as the radii overtakes the centroids pushing $\mu^{(1,\dots,\ell)}_{\bx}$ away from the region.}
    \label{fig:centroids}
\end{figure}

In practice, the number of centroids and radius for each of the partitioning $\Omega^{(1,\dots, \ell)}$ contains too many cells to compute all the centroids and radius. However, given a cell (resp. a point $\bx$) and an up to layer $\ell$ code $\br^{(1)},\dots,\br^{(\ell)}$ (resp. $\br^{(1)}(\bx),\dots,\br^{(\ell)}(\bx)$), computing the centroid and radius can be done as follows:
\begin{align}
A^{(1\rightarrow \ell-1)}_{\bx}=&({\rm J}_{\bx}f^{(1 \rightarrow \ell-1)})=\Big(\nabla_{\bx}f^{(1 \rightarrow \ell-1)}_1,\dots,\nabla_{\bx}f^{(1 \rightarrow \ell-1)}_{D(\ell)}\Big)^\top\\
\mu^{(1\leftarrow \ell)}_{\bx} =&{A^{(1\rightarrow \ell-1)}_{\bx}}^\top \sum_{k=1}^{D(\ell)}[A^{(\ell)}_{\bx}]_{k,.}=\sum_{k=1}^{D(\ell)}\nabla_{\bx}f^{(1\rightarrow \ell)}_k,\label{eq:all_centroid}\\
{\rm rad}^{(1\leftarrow \ell)}_{\bx}=&-\| \mu^{(1\leftarrow \ell)}_{\bx}\|^2 -2\langle \textbf{1},B^{(\ell)}_{\bx}\rangle
-2 \langle f^{(1 \rightarrow \ell)}(\bx)-A^{(1\rightarrow \ell-1)}_{\bx}\bx,\sum_{k=1}^{D(\ell)}[A^{(\ell)}_{\bx}]_{k,.}\rangle
\end{align}
where we remind that $\mu^{(\ell)}_{\bx}=\sum_{k=1}^{D(\ell)}[A^{(\ell)}_{\bx}]_{k,.}$ and $B^{(1\rightarrow \ell-1)}_{\bx}=f^{(1 \rightarrow \ell)}(\bx)-A^{(1\rightarrow \ell-1)}_{\bx}\bx$ from Theorem~\ref{thm:subdivi2}. Notice how centroids and biases of the current layer are mapped back to the input space $\mathcal{X}^{(0)}$ via projection onto the tangent hyperplane with basis given by $A^{(1\rightarrow \ell-1)}_{\bx}$. 
\begin{prop}
The centroids correspond to the backward pass of DNs and thus can be computed efficiently by backpropagations.
\end{prop}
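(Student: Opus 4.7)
The plan is to show that the centroid formula from Theorem~\ref{thm:subdivi2} can be rewritten as a sum of input-space gradients of the DN outputs, and that these gradients are exactly what the standard backward pass produces. First I would recall from Theorem~\ref{thm:subdivi2} that $\mu^{(1\leftarrow \ell)}_{\br^{(1)},\dots,\br^{(\ell)}} = (A^{(1\leftarrow \ell-1)}_{\br^{(1)},\dots,\br^{(\ell-1)}})^\top \mu^{(\ell)}_{\br^{(\ell)}}$, and from Theorem~\ref{thm:layerPD} that $\mu^{(\ell)}_{\br^{(\ell)}} = \sum_{k=1}^{D(\ell)} [A^{(\ell)}]_{k,[\br^{(\ell)}]_k,\bigcdot}$, so that on the region containing $\bx$ the centroid simplifies to $\mu^{(1\leftarrow\ell)}_{\bx} = {A^{(1\rightarrow \ell-1)}_{\bx}}^\top \sum_{k=1}^{D(\ell)} [A^{(\ell)}_{\bx}]_{k,\bigcdot}$.

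Next I would invoke the piecewise-affine nature of the DN established by Theorem~\ref{thm:MASO}: inside the cell $\omega^{(1,\dots,\ell-1)}_{\br^{(1)},\dots,\br^{(\ell-1)}}$ that contains $\bx$, the composed map $f^{(1\rightarrow \ell-1)}$ is exactly affine with slope $A^{(1\rightarrow \ell-1)}_{\br^{(1)},\dots,\br^{(\ell-1)}}$, so its Jacobian at $\bx$ coincides with this matrix. The chain rule then gives, for each output unit $k$,
\begin{equation}
\nabla_{\bx} f^{(1\rightarrow \ell)}_k(\bx) = {A^{(1\rightarrow \ell-1)}_{\bx}}^\top [A^{(\ell)}_{\bx}]_{k,\bigcdot},
\end{equation}
and summing over $k\in\{1,\dots,D(\ell)\}$ recovers exactly the centroid expression in (\ref{eq:all_centroid}). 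Hence $\mu^{(1\leftarrow\ell)}_{\bx} = \sum_{k=1}^{D(\ell)} \nabla_{\bx} f^{(1\rightarrow \ell)}_k(\bx)$.

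Finally I would interpret this identity computationally: each $\nabla_{\bx} f^{(1\rightarrow\ell)}_k(\bx)$ is obtained by one reverse-mode automatic differentiation call from the $k$-th output of layer $\ell$, and the sum can be produced by a single backward pass seeded with the all-ones cotangent $\mathbf{1}\in\mathbb{R}^{D(\ell)}$ at layer $\ell$. This backward pass has the same asymptotic cost as the forward pass, so the centroid of the region containing $\bx$ at any depth $\ell$ can be computed in $\mathcal{O}(\sum_{\ell'\le\ell} D(\ell')R^{(\ell')}D(\ell'-1))$ time. An analogous argument, using that $B^{(1\rightarrow\ell-1)}_{\bx} = f^{(1\rightarrow\ell-1)}(\bx) - A^{(1\rightarrow\ell-1)}_{\bx}\bx$ is obtained from one forward and one Jacobian-vector product, handles the radius.

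The only step that requires care is justifying the Jacobian identification at $\bx$: the map $f^{(1\rightarrow\ell-1)}$ is only piecewise affine and is non-differentiable on the cell boundaries, so strictly speaking the statement $\nabla_{\bx} f^{(1\rightarrow\ell)}_k(\bx) = {A^{(1\rightarrow\ell-1)}_{\bx}}^\top [A^{(\ell)}_{\bx}]_{k,\bigcdot}$ should be read as holding in the interior of the cell, with a subgradient (the one naturally returned by backpropagation under the standard tie-breaking rule) selected on the boundary. I expect this is the only genuine subtlety; the rest is a direct unrolling of Theorem~\ref{thm:subdivi2} plus the chain rule.
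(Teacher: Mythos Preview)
Your proposal is correct and follows exactly the paper's approach: the paper does not give a separate proof but simply records the identity $\mu^{(1\leftarrow \ell)}_{\bx}=\sum_{k=1}^{D(\ell)}\nabla_{\bx}f^{(1\rightarrow \ell)}_k$ in equation~(\ref{eq:all_centroid}) just before the proposition, and your argument is a careful chain-rule unrolling of that same identity together with the backpropagation interpretation. Your treatment of the boundary non-differentiability and the single-seed $\mathbf{1}$ backward pass actually adds detail beyond what the paper provides.
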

Note how the form in (\ref{eq:all_centroid}) correspond to saliency maps. In particular, at a given layer, the centroid of the region in which $\bx$ belongs is obtained by summing all the per unit saliency maps synonym of adding all the unit contributions in the input space.
We provide in Fig.~\ref{fig:centroids} computed centroids for a trained Resnet on CIFAR10, for each PD subdivision, see appendix~\ref{appendix_resnet} for details on the model, performance and additional figures. The ability to retrieve saliency maps and the form of the centroid opens the door to further use in many settings of the centroids. For example.  semi supervised learning successfully leveraged the last layer centroid in \cite{balestriero2018semi} by providing a loss upon them.

\subsection{Empirical Region Characterization}

\begin{figure}[t!]
\begin{minipage}{0.05\linewidth}
    \rotatebox{90}{\hspace{0.6cm}MLP \hspace{0.65cm}CNN\hspace{1.9cm}MLP \hspace{0.65cm}CNN\hspace{1cm}}
\end{minipage}
\begin{minipage}{0.6\linewidth}
    \centering
    CIFAR10\\
    \includegraphics[width=0.48\linewidth]{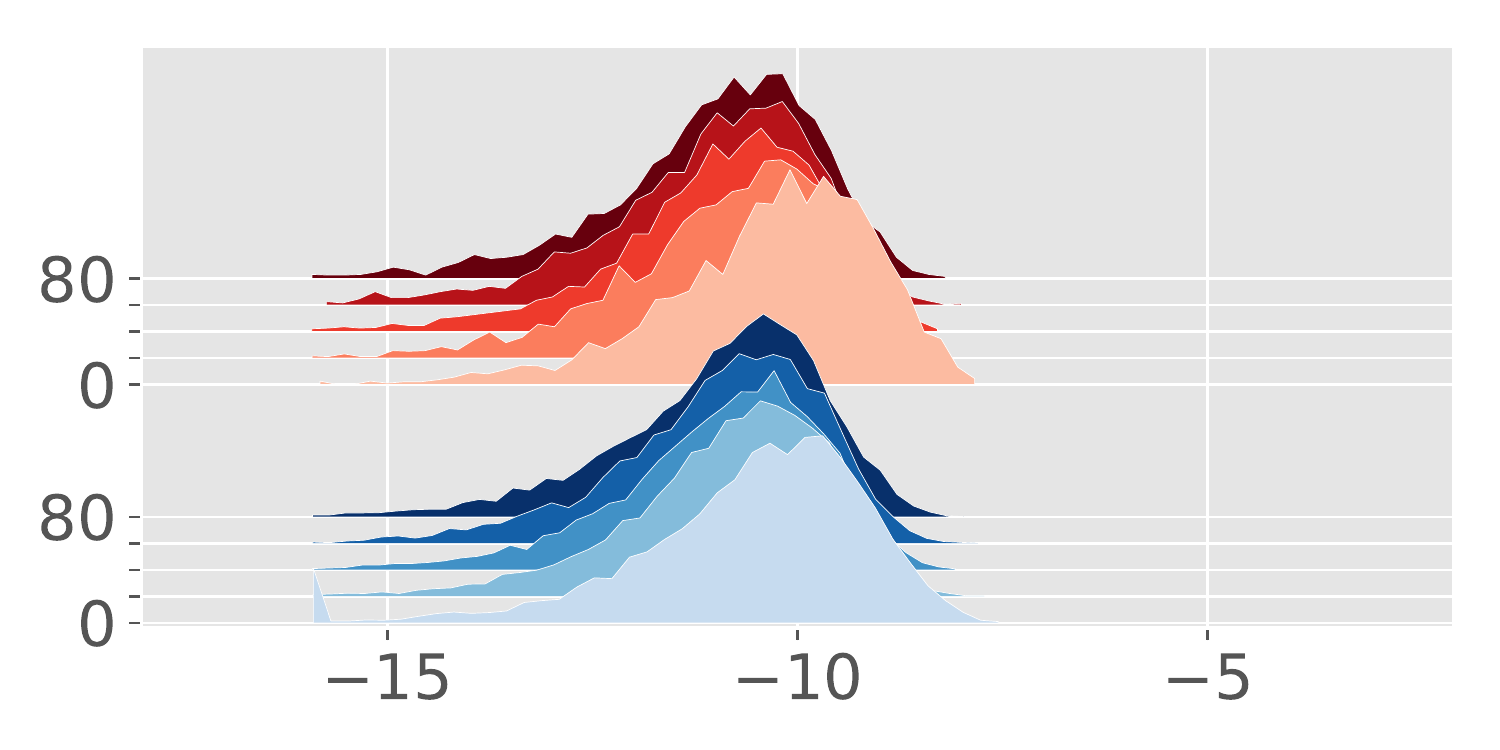}
    \includegraphics[width=0.48\linewidth]{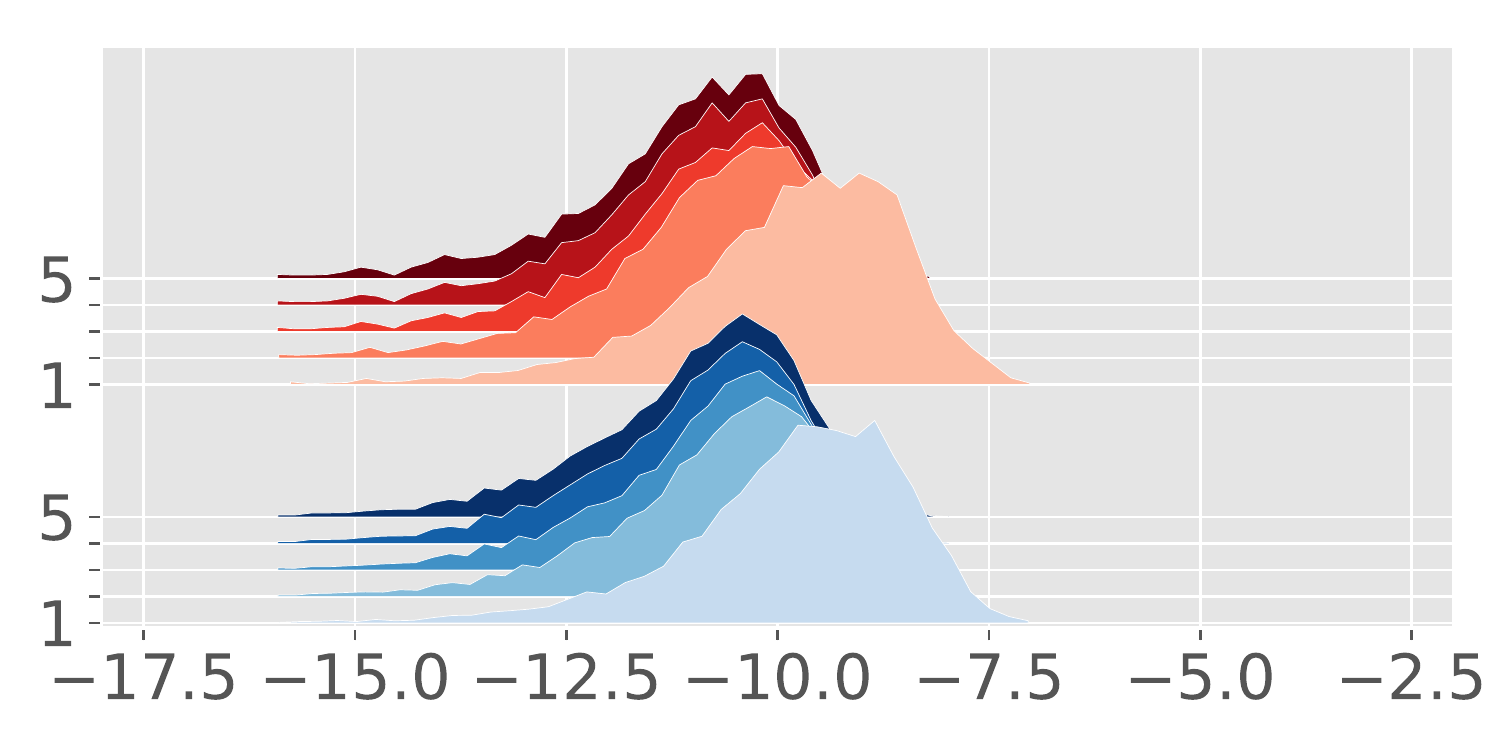}\\
    \includegraphics[width=0.48\linewidth]{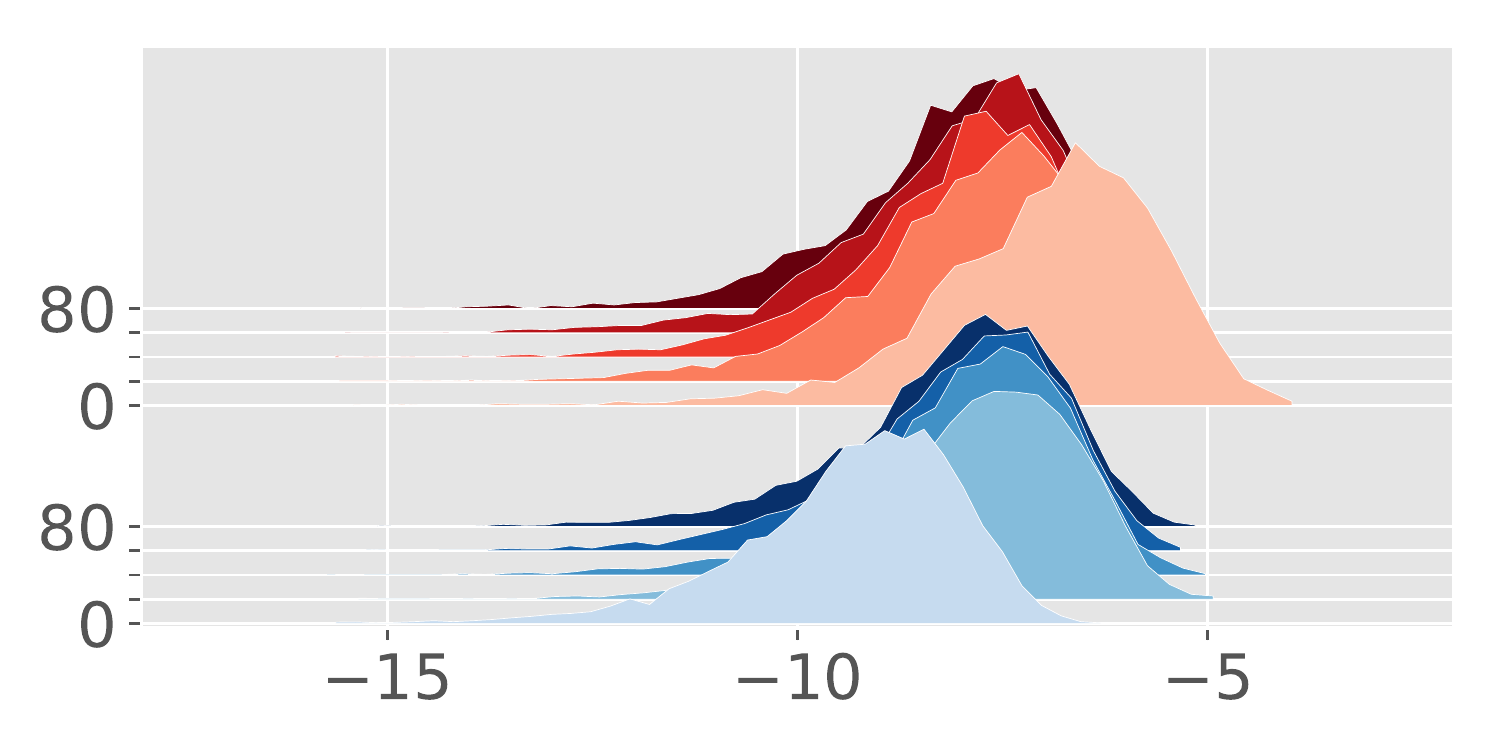}
    \includegraphics[width=0.48\linewidth]{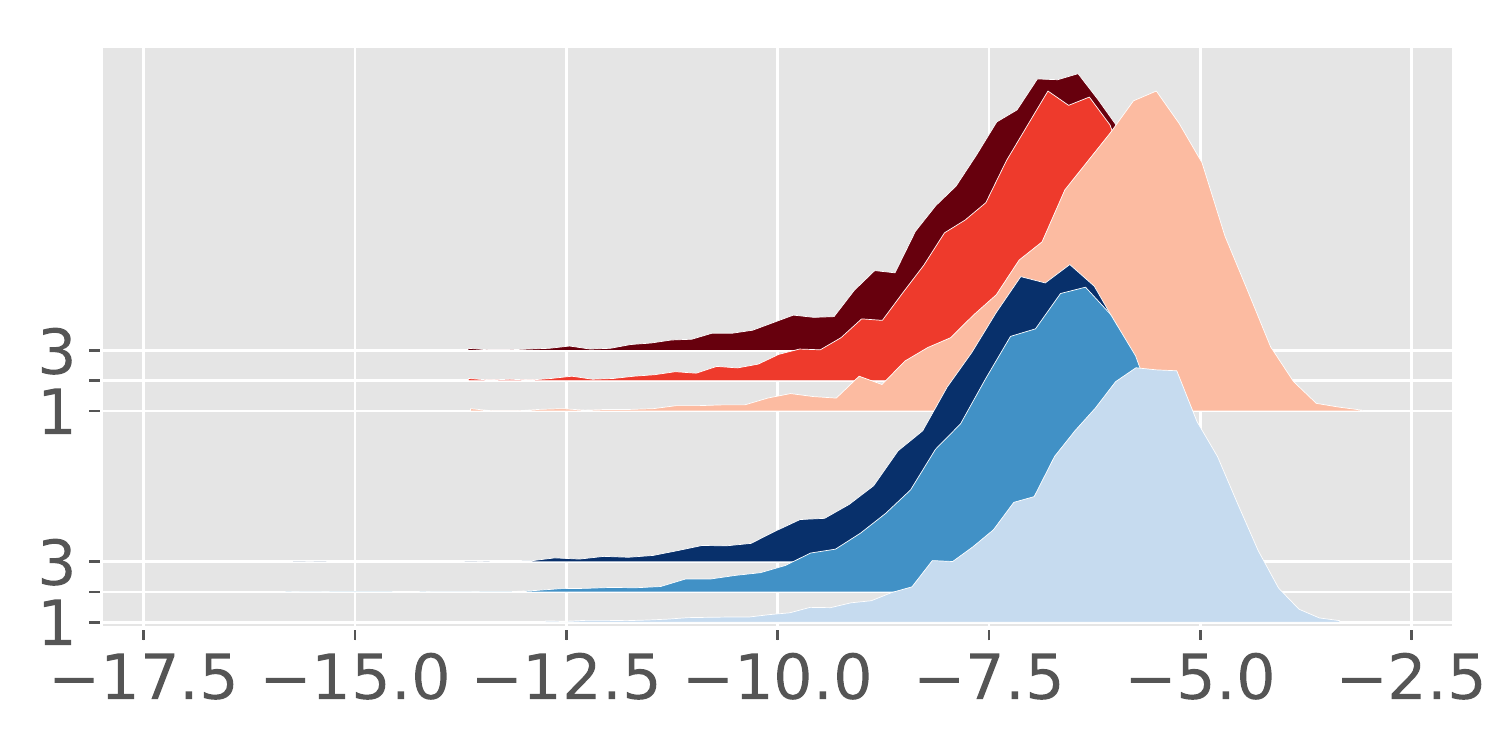}\\
    SVHN\\
    \includegraphics[width=0.48\linewidth]{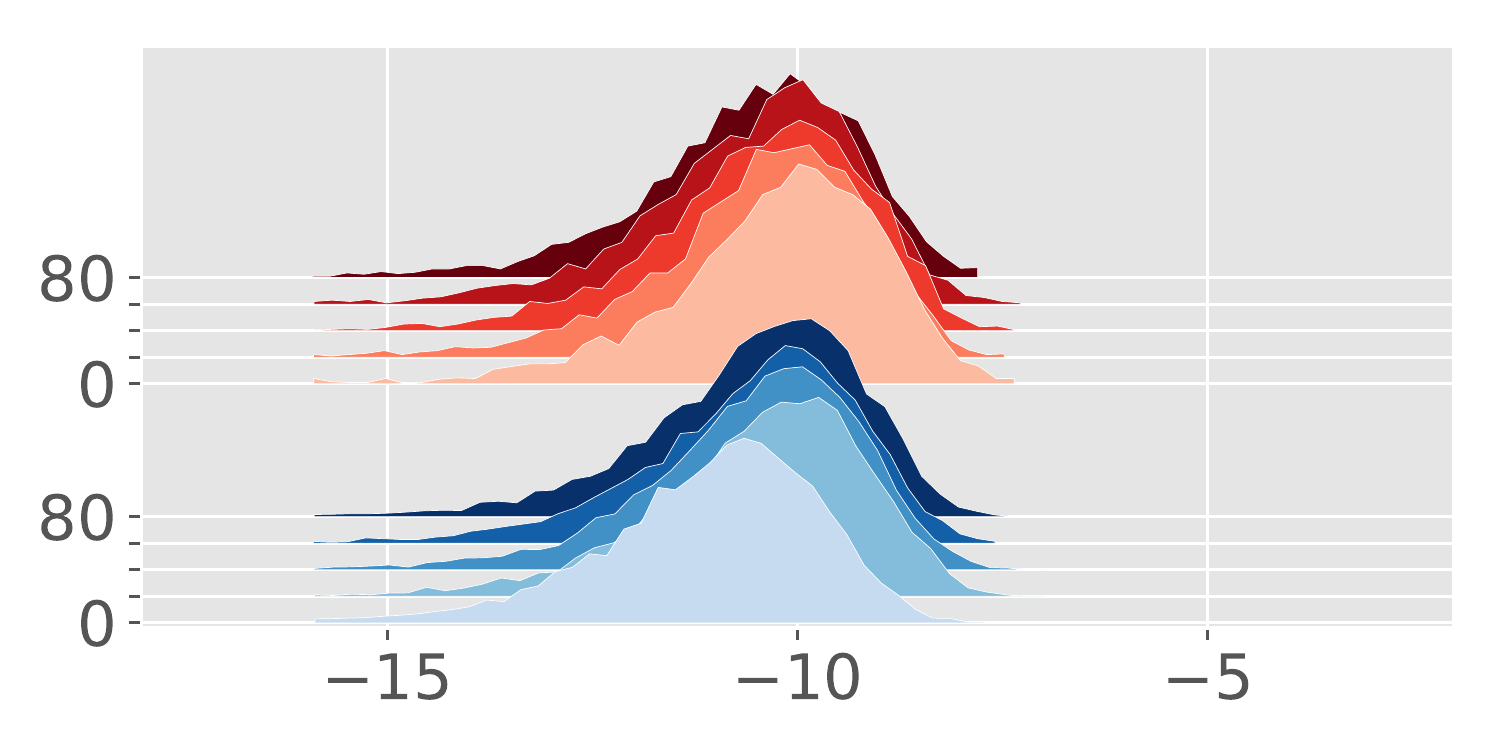}
    \includegraphics[width=0.48\linewidth]{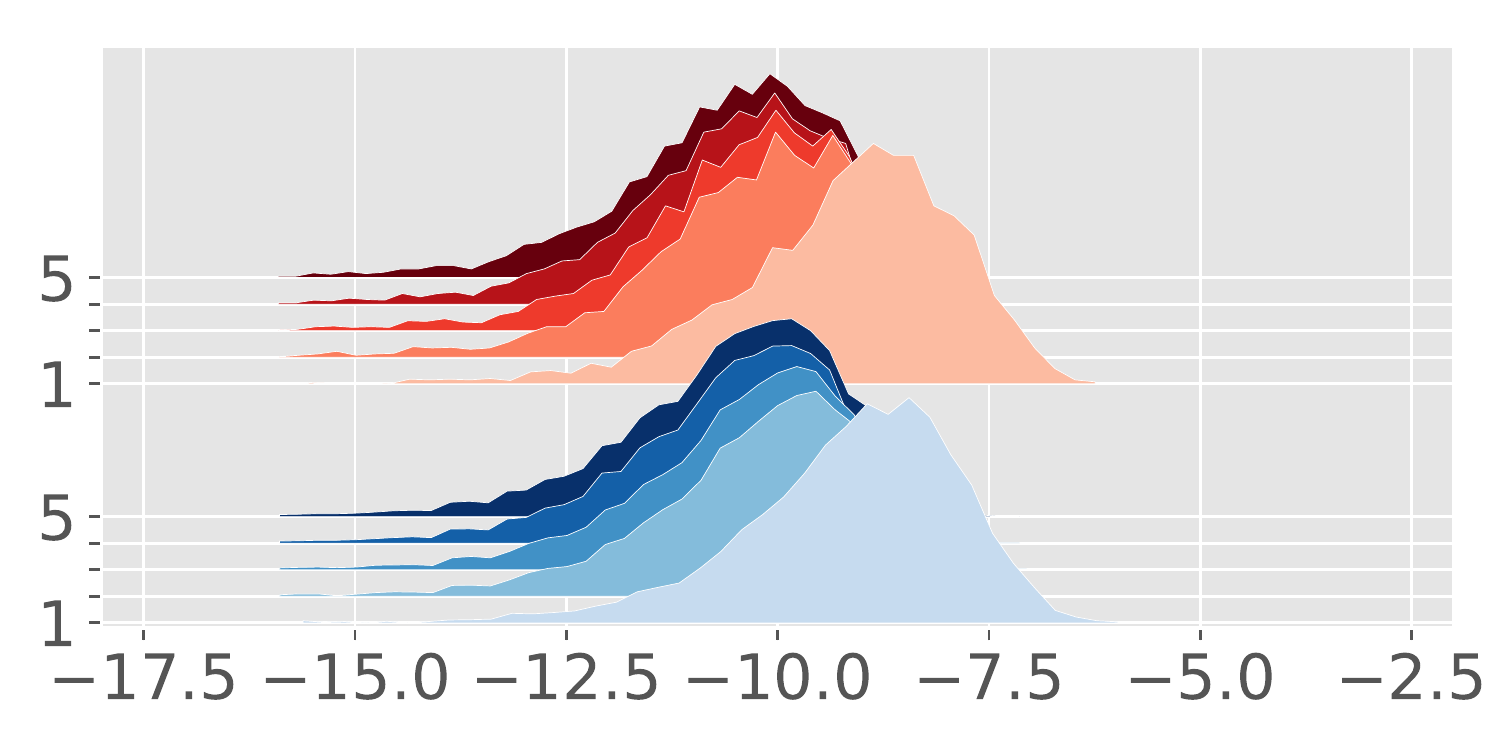}\\
    \includegraphics[width=0.48\linewidth]{DISTANCES/loghistogram_epochs_save_dense_test_v2_cifar10_False_l2.pdf}
    \includegraphics[width=0.48\linewidth]{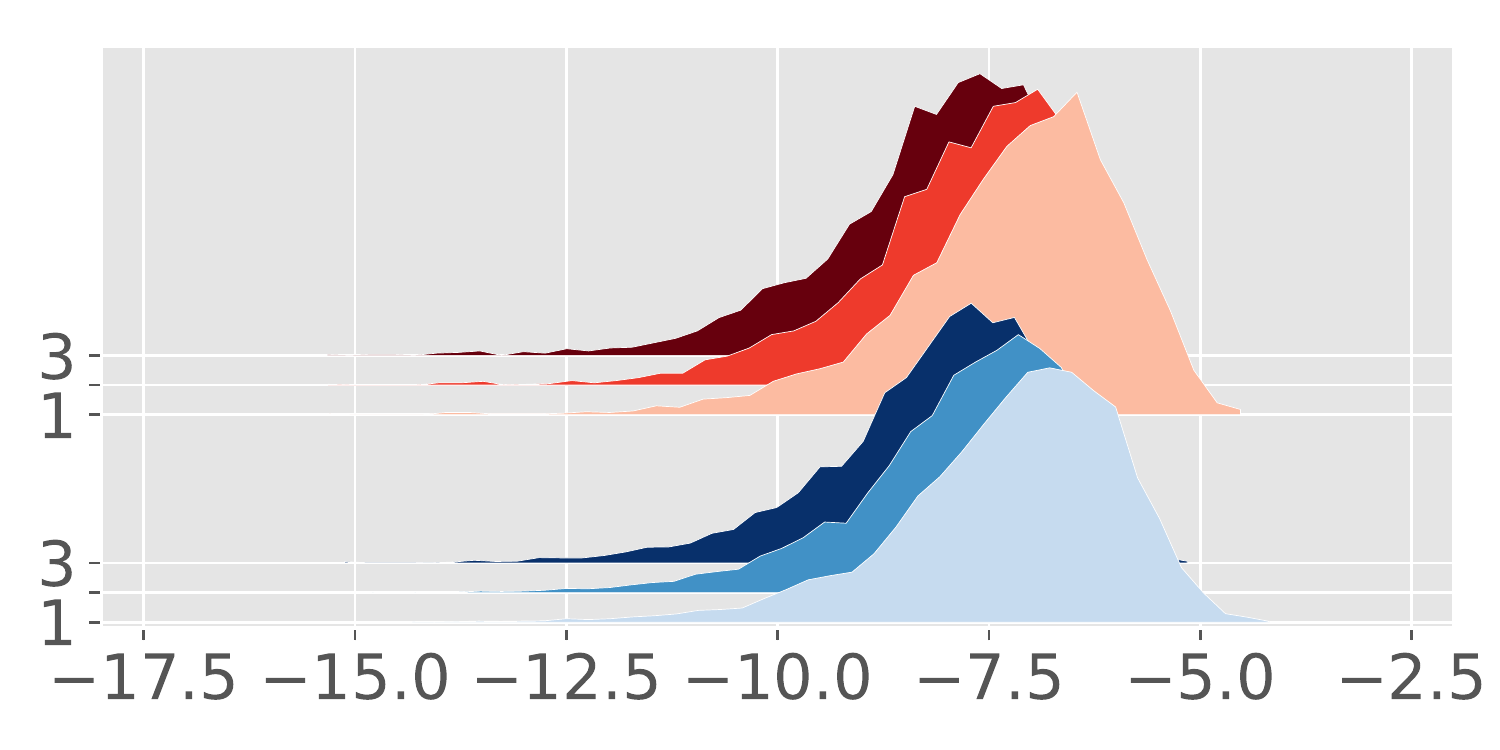}
\end{minipage}
\begin{minipage}{0.34\linewidth}
    \caption{\small Depiction of the distribution of the log of the distances through the epochs (during leaning) for the left column and through the layer (as the partitioning gets subdivided) on the right column. The statistics are computed on the train set (blue) and test set (red). CLear insight in the role of the deeper layer trying to refine only some regions, likely the ones hard to classify. This is shown by the tails becoming heavier. For additional figures see Fig.~\ref{fig:additional_detail3}.\normalsize}
    \label{fig:distances_distribution}
\end{minipage}
\end{figure}
We provide in Fig.~\ref{fig:distances_distribution} the distribution of distances from the dataset points to the nearest region boundaries of the input space partitioning for each layer (at a current subdivision step) and at different stages of the training procedure. Clearly, training slightly impacts those distances and slight increase the number of inputs with small distance to then nearest boundary. Yet, the main impact of learning resides in shaping the regions via learning of the weights. We also train a DN on various dataset and study how many inputs share the same input space partitioning region. We observed that from initialization to the end of the learning, there are never more than one image in a given region and this with standard DNs providing near or state of the art performances. Yet, drastically reducing the size of the DN allows to have more than one image per regions when considering the first steps of the subdivision.
\section{Geometry of a Deep Network Decision Boundary}
\label{sec:boundary}

The analysis of the input space partitioning was achieved by successively expressing the layer $\ell$ polytope in the input space. While Sec.~\ref{sec:DEEP} focused on the faces of the polytope which define the cells in the input space, we now turn to the edges of the polytope which define the cells' boundaries. In particular, we demonstrate how a single unit at layer $\ell$ defines multiple cell boundaries in the input space, and use this finding to finally derive the analytical formula of the DN decision boundary in classification tasks. In this section we focus on DN nonlinearities using $R=2$ nonlinearities such as ReLU,  leaky-ReLU, and absolute value.

\subsection{Partitioning Boundaries and Edges}

In the case of $R=2$ nonlinearities, the polytope $\mathcal{P}^{(\ell)}_k$ of unit $z^{(\ell)}_k$ contains a single edge. This edge can be expressed in some space $\mathcal{X}^{(\ell')},\ell'<\ell$, as a collection of continuous piecewise linear paths.

\begin{defn}
\label{def:edge}
The edges of the polytope $\mathcal{P}^{(\ell)}_k$ in some space $\mathcal{X}^{(\ell')},\ell'<\ell$  are the collection of points defined as
\begin{align}
    {\rm edge}_{\mathcal{X}^{(\ell')}}(k,\ell)  =\{\bx \in \mathcal{X}^{(\ell')} : \Epsilon^{(\ell-1)}_{k,2}(\bz^{(\ell'\rightarrow \ell)}(\bx))=0 \},\label{eq:edge}
\end{align}
with $\bz^{(\ell'\rightarrow \ell-1)}=\bz^{(\ell-1)} \circ \dots \circ \bz^{(\ell')}$.
\end{defn}
Thus the edges correspond to the level curve of the unit in $\mathcal{X}^{(\ell')}$.
Defining the polynomial
\begin{align}
    {\rm Pol}^{(\ell)}(\bx)=\prod_{k=1}^{D(\ell')}(z_k^{(\ell)} \circ \bz^{(\ell-1)}\circ \dots \circ \bz^{(1)})(\bx),\label{eq:polynomial}
\end{align}
we obtain the following result where the boundaries of $\Omega^{(1,\dots,\ell)}$ from Theorem~\ref{thm:subdivi2} can be expressed in term of the polytope edges and roots of the polynomial.

\begin{thm}
The polynomial (\ref{eq:polynomial}) is of order $\prod_{\ell=1}^LD(\ell)$, its roots correspond to the partitioning boundaries:
\begin{align}
    \partial \Omega^{(1,\dots,\ell)}=\cup_{\ell'=1}^{\ell} \cup_{k=1}^{D(\ell')} {\rm edge}_{\mathcal{X}^{(0)}}(k,\ell)=\{\bx \in \mathcal{X}^{(0)}:\prod_{\ell=1}^{L}{\rm Pol}^{(\ell)}(\bx)=0\}
\end{align}
and the root order defines the dimension of the root (boundary, corner, ...).
\end{thm}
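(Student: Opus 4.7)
The plan is to decompose the theorem into three sub-claims and prove them in turn: (a) the order of $\prod_{\ell=1}^L {\rm Pol}^{(\ell)}$, (b) the set-theoretic identity between the zero set of this polynomial and $\partial \Omega^{(1,\dots,\ell)}$, and (c) the correspondence between local root multiplicity at $\bx$ and the codimension of the boundary feature at $\bx$. I would prove (a) first as a bookkeeping step, then establish (b) unit-by-unit through the recursive subdivision machinery of Theorem~\ref{thm:subdivi2}, and finally read off (c) by examining how many factors vanish at a common point.

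For (a), I would note that, by Theorem~\ref{thm:MASO} and composition of MASOs, each factor $(z_k^{(\ell)} \circ \bz^{(\ell-1)} \circ \dots \circ \bz^{(1)})$ is a continuous piecewise-affine scalar function of $\bx$ whose affine pieces are indexed by the joint $R=2$ per-unit choices across the contributing layers. A layerwise counting argument — tracking how the $D(\ell)$ new factors of ${\rm Pol}^{(\ell)}$ combine multiplicatively with the per-cell affine restrictions inherited from layers $1,\dots,\ell-1$ — yields the stated order $\prod_{\ell=1}^L D(\ell)$.

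For (b), fix $\ell' \leq \ell$ and $k \in \{1,\dots,D(\ell')\}$. For an $R=2$ nonlinearity, the polytope $\mathcal{P}^{(\ell')}_k$ from Proposition~\ref{prop:singleunit} has a single edge where its two half-space constraints meet, corresponding to the hyperplane $\{\by \in \mathcal{X}^{(\ell'-1)} : \Epsilon_{k,2}^{(\ell')}(\by) = 0\}$, i.e., the transition between the two affine pieces of the MAS unit. By Theorem~\ref{thm:subdivi2}, layers $1,\dots,\ell'-1$ act affinely on each cell of $\Omega^{(1,\dots,\ell'-1)}$, so the pullback of this hyperplane to $\mathcal{X}^{(0)}$ is exactly the piecewise-linear path ${\rm edge}_{\mathcal{X}^{(0)}}(k,\ell')$ of Definition~\ref{def:edge}; Lemma~\ref{lemma:vertical_layer} then identifies this path with the boundary introduced at layer $\ell'$ inside $\Omega^{(1,\dots,\ell)}$. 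Since the only way a new cell boundary appears is by some unit switching its active affine piece, taking the union over all $(k,\ell')$ recovers $\partial \Omega^{(1,\dots,\ell)}$. The algebraic equality with the root set follows factor by factor: for leaky-ReLU and absolute value the factor $(z_k^{(\ell')} \circ \bz^{(\ell'-1)} \circ \dots \circ \bz^{(1)})(\bx)$ vanishes precisely on the switching hyperplane $\Epsilon_{k,2}^{(\ell')}(\bz^{(1\to\ell'-1)}(\bx))=0$, so the product polynomial vanishes iff $\bx$ lies on at least one ${\rm edge}_{\mathcal{X}^{(0)}}(k,\ell')$.

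For (c), at a generic point of a codimension-one face of $\partial \Omega^{(1,\dots,L)}$ exactly one factor vanishes to first order, giving a simple root; where $p$ switching hyperplanes coming from distinct $(k,\ell')$ pairs meet at $\bx$, $p$ factors vanish transversely and the total order of vanishing of the product equals $p$, matching the codimension $p$ of the resulting stratum (edge when $p=2$, vertex when $p={\rm dim}(\mathcal{X}^{(0)})$, etc.). The main obstacle I anticipate is the clean treatment of ReLU, since $\max(0,u)=0$ on an entire half-space rather than on the switching hyperplane alone, so the naive zero set of $z_k^{(\ell)}$ is ``thick''. I would handle this by arguing that on any cell where the ReLU unit is dead, the subsequent affine restriction collapses the corresponding coordinate and the unit contributes no further subdivision downstream; hence the thick portion of the zero set lies inside already-existing boundary strata generated by prior layers and produces no new codimension-one contribution, while the genuinely new codimension-one boundary lies exactly on the switching hyperplane, restoring the multiplicity-equals-codimension statement.
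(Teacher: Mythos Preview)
The paper does not include a proof of this theorem in its text or appendix, so there is nothing to compare your argument against; I can only assess it on its own merits.

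Your part (b) has a genuine gap that your proposed fix does not repair. You correctly observe that for ReLU the factor $z_k^{(\ell')}\circ\bz^{(\ell'-1)}\circ\cdots\circ\bz^{(1)}$ vanishes on an entire half-space (wherever the pre-activation is nonpositive), not just on the switching hyperplane. Your remedy is to claim that this ``thick'' portion of the zero set lies inside already-existing boundary strata generated by prior layers. That is false: take $\ell'=1$ and any unit $k$; the dead half-space $\{\bx:[W^{(1)}]_{k,\bigcdot}\bx+[b^{(1)}]_k\le 0\}$ is an \emph{open} region of $\mathcal{X}^{(0)}$, not a boundary stratum, and there are no prior layers to absorb it. On that open set the product $\prod_\ell {\rm Pol}^{(\ell)}$ is identically zero while the point sits in the interior of a cell of $\Omega^{(1,\dots,L)}$. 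Hence the set equality $\partial\Omega^{(1,\dots,L)}=\{\bx:\prod_\ell {\rm Pol}^{(\ell)}(\bx)=0\}$ fails for ReLU as written, and no downstream bookkeeping can restore it without either replacing each factor $z_k^{(\ell')}$ by the pre-activation $\Epsilon^{(\ell')}_{k,2}(\bz^{(1\rightarrow\ell'-1)}(\bx))$ (which is what Definition~\ref{def:edge} actually uses) or restricting to strictly positive leaky-ReLU and absolute value. The same defect breaks part (c): on the dead half-space the order of vanishing is infinite, not a finite codimension count.

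Your part (a) is also too thin to deliver the stated order. On any fixed cell of $\Omega^{(1,\dots,L)}$ each factor $z_k^{(\ell')}\circ\bz^{(1\rightarrow\ell'-1)}$ is affine in $\bx$, i.e., degree one; multiplying $\sum_{\ell=1}^L D(\ell)$ such factors gives a piecewise polynomial of degree $\sum_\ell D(\ell)$ on each cell, not $\prod_\ell D(\ell)$. Your ``layerwise counting argument'' gestures at a multiplicative interaction with the per-cell affine restrictions but never explains how degree-one factors can produce a product of dimensions rather than a sum. If ``order'' is meant to be polynomial degree, the argument does not yield the claim; if it is meant to be something else (a bound on the number of affine pieces, or the number of roots along a generic line), you need to state that interpretation and argue it directly.
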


\subsection{Decision Boundary and Curvature}
In the case of classification, the last layer typically includes a softmax nonlinearity and is thus not a MASO layer. However, \cite{balestriero2018from} demonstrated that it can be expressed as a MASO layer without any change in the model and output. As a result, this layer introduces a last subdivision of the DN partitioning. We focus on binary classification for simplicity of notations, in this case, $D(L)=1$ and a single last subdivision occurs. In particular, using the previous result we obtain the following.

\begin{prop}The decision boundary of a DN with $L$ layers is the edge of the last layer polytope $\P^{(L)}$ expressed in the input space $\mathcal{X}^{(0)}$ from Def.~\ref{def:edge} as
\begin{align}
{\rm Decision Boundary}=\{\bx \in \mathcal{X}^{(0)}:f(\bx)=0\}={\rm edge}_{\mathcal{X}^{(0)}}(1,L)    \subset \partial \Omega^{(1,\dots,L)}.
\end{align}
\end{prop}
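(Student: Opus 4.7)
The plan is to reduce the decision boundary to a single unit's edge locus by invoking the MASO reformulation of the softmax classifier, then apply Definition~\ref{def:edge} directly, and finally obtain the containment from the polynomial-boundary theorem stated just above.

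First, I would invoke the construction of \cite{balestriero2018from}, which recasts the softmax classifier together with its preceding affine operator as a MASO layer without changing the input-output map or the predicted label. In the binary case $D(L)=1$, this produces a single MAS unit $z^{(L)}_1$ with $R=2$ affine pieces $\Epsilon^{(L)}_{1,1}$ and $\Epsilon^{(L)}_{1,2}$, and the classifier decision is the sign of $f(\bx):=z^{(L)}_1(\bx)$ (up to a monotone normalization). The set $\{\bx:f(\bx)=0\}$ is therefore precisely the locus on which the two affine pieces acting on $\bz^{(L-1)}(\bx)$ coincide, i.e., the pre-image under $\bz^{(0\to L-1)}$ of the unique edge of the polytope $\mathcal{P}^{(L)}_1$ guaranteed by Proposition~\ref{prop:singleunit} when $R=2$.

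Second, I would unfold Definition~\ref{def:edge} with $\ell'=0$, $k=1$, $\ell=L$. By Proposition~\ref{prop:singleunit}, the polytope $\mathcal{P}^{(L)}_1$ has exactly one edge, defined by the vanishing of $\Epsilon^{(L)}_{1,2}$ in $\mathcal{X}^{(L-1)}$ (after the standard reduction of the two pieces to a single hyperplane). Pulling this hyperplane back through the piecewise-affine composition $\bz^{(0\to L-1)}$ produces exactly the set ${\rm edge}_{\mathcal{X}^{(0)}}(1,L)$ in the notation of Definition~\ref{def:edge}. Comparing with the description of $\{f=0\}$ from the first step yields the desired equality.

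Third, the containment ${\rm edge}_{\mathcal{X}^{(0)}}(1,L)\subset \partial\Omega^{(1,\dots,L)}$ is immediate from the polynomial-boundary theorem just above the proposition. That theorem states
\begin{align*}
\partial\Omega^{(1,\dots,L)}=\bigcup_{\ell'=1}^{L}\bigcup_{k=1}^{D(\ell')}{\rm edge}_{\mathcal{X}^{(0)}}(k,\ell'),
\end{align*}
so the pair $(k,\ell')=(1,L)$ sits in the union. Equivalently, a decision-boundary point is a root of the factor ${\rm Pol}^{(L)}$ and therefore a root of the full product $\prod_{\ell}{\rm Pol}^{(\ell)}$, landing in the partition boundary.

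The main obstacle is the first step: rigorously justifying that the softmax-based classifier can be replaced by a MASO without altering the decision boundary. The softmax itself is not piecewise affine, so the argument rests on the observation that the decision rule depends only on the sign of a difference of logits; that difference is a piecewise-affine functional, and its zero level set is captured by a $R=2$ max-affine spline unit (essentially a ReLU-style gate). Care is also needed with the indexing in Definition~\ref{def:edge}, whose superscript and range of the composition $\bz^{(\ell'\to\ell)}$ should be matched consistently with the level-set characterization of unit $z^{(L)}_1$. Once this identification is in place, the remainder is a routine unfolding of definitions and a direct appeal to the polynomial-boundary theorem.
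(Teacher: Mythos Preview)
Your proposal is correct and follows essentially the same approach as the paper, which does not supply a standalone proof but treats the proposition as an immediate consequence of the MASO reformulation of the softmax classifier (via \cite{balestriero2018from}), Definition~\ref{def:edge} specialized to $(k,\ell)=(1,L)$, and the polynomial-boundary theorem for the containment in $\partial\Omega^{(1,\dots,L)}$. Your explicit unpacking of these three ingredients, and your flagging of the indexing inconsistency in Definition~\ref{def:edge}, are apt and match the paper's intent.
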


To provide insights let consider a $3$ layer DN denoted as $f$ and the binary classification task; we have as the DN induced decision  boundary the following
\begin{align}
{\rm Decision Boundary} &=\cup_{\br^{(2)}}\cup_{\br^{(1)}} \{x \in \mathcal{X}^{(0)}: \langle \alpha_{\br^{(2)},\br^{(1)}},\bx\rangle +\beta_{\br^{(2)},\br^{(1)}}=0\}\cap \omega^{(1,2)}_{\br^{(1)},\br^{(2)}},\label{eq:decision}
\end{align}
with $\alpha_{\br^{(1)},\br^{(2)}}=(A^{(2)}_{\br^{(2)}}A^{(1)}_{\br^{(1)}})^T[A^{(3)}]_{1,1,.}$ and $\beta_{\br^{(1)},\br^{(2)}}=[A^{(3)}]_{1,1,.}^TA^{(2)}_{\br^{(2)}}B^{(1)}_{\br^{(1)}}+[B^{(3)}]_{1,1}$.
Studying the distribution of $\alpha_{\br^{(1)},\br^{(2)}}$ characterizes the structure of the decision boundary and thus open the highlight the interplay between layer parameters, layer topology, and the decision boundary. For example, looking at Figure~\ref{fig:megaplot} and the red line demonstrates how the weight characterize the curvature and cuts position of the decision boundary.

We provide a direct application of the above finding by providing a curvature characterization of the decision boundary. First, we propose the following result stating that the form of $\alpha$ and $\beta$ from (\ref{eq:decision}) from one region to an neighbouring one only alters a single unit code at a given layer.
\begin{lemma}
Any edge as defined in Def.~\ref{def:edge} reaching a region boundary, must continue in this neighbouring region.
\end{lemma}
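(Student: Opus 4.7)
The plan is to exploit the global continuity of the composed MASO operator together with its affine structure on each cell. Let $g := z^{(\ell)}_k \circ \bz^{(\ell-1)} \circ \cdots \circ \bz^{(1)}$, so that by Definition~\ref{def:edge} the edge ${\rm edge}_{\mathcal{X}^{(0)}}(k,\ell)$ is the zero set $\{g = 0\}$. Inside each cell $\omega$ of $\Omega^{(1,\dots,\ell-1)}$, the restriction $g|_\omega$ is affine, so the edge piece $\{g=0\} \cap \omega$ is either empty, all of $\omega$, or the intersection of $\omega$ with a single hyperplane $H_\omega$.

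First, I would fix a point $\bx_0$ on the edge that lies on the interface $B = \overline{\omega_1} \cap \overline{\omega_2}$ between two neighbouring cells $\omega_1, \omega_2$ of the current partitioning. Continuity of the DN and closedness of $g^{-1}(0)$ force $g(\bx_0) = 0$, so the affine restriction $g|_{\omega_2}$ also vanishes at $\bx_0$. Continuity across $B$ implies that the two affine pieces agree on the interface, i.e.\ $g|_{\omega_1}|_B = g|_{\omega_2}|_B$. Consequently the zero sets of the two affine pieces must share the same trace on $B$: $H_{\omega_1} \cap B = H_{\omega_2} \cap B$.

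Next, since by hypothesis the edge has a codimension-$1$ piece inside $\omega_1$, the hyperplane $H_{\omega_1}$ is not parallel to $B$, so $H_{\omega_1} \cap B$ is a $(D{-}2)$-dimensional affine subspace through $\bx_0$. This same subspace must lie in $H_{\omega_2}$, forcing $H_{\omega_2}$ to be a hyperplane distinct from $B$ that shares a codimension-one affine subspace with it. A hyperplane that crosses $B$ transversally along such a subspace must penetrate both open half-spaces bounded by $B$; in particular it extends into the interior of $\omega_2$, producing a non-trivial piece of $\{g = 0\} \cap \omega_2$. This is the continuation of the edge into the neighbouring region.

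The main obstacle is handling degenerate configurations, chiefly $H_{\omega_1} = B$ or $g|_{\omega_2} \equiv 0$. In the first case the edge already runs along $B \subset \overline{\omega_2}$, so it lies in the closure of $\omega_2$ and the conclusion holds trivially. In the second case all of $\omega_2$ belongs to the zero set, which is an even stronger form of continuation. Hence the transversality argument above covers the generic setting and the degenerate cases are conclusions a fortiori.
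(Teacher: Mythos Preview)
Your approach is correct and matches the paper's, which justifies the lemma with a single line: ``This comes directly from continuity of the involved operator.'' One small correction: per Definition~\ref{def:edge} you should take $g := \Epsilon^{(\ell)}_{k,2} \circ \bz^{(\ell-1)} \circ \cdots \circ \bz^{(1)}$ rather than $z^{(\ell)}_k \circ \cdots$, since for ReLU the latter is not affine on each cell of $\Omega^{(1,\dots,\ell-1)}$ (it vanishes on a half-space, not a hyperplane); with this fix your hyperplane-matching argument goes through verbatim and is substantially more detailed than what the paper itself provides.
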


This comes directly from continuity of the involved operator. This demonstrates that the decision boundary as defined in (\ref{eq:decision}) can have its curvature defined by comparing the form of the edges of adjacent regions.
\begin{thm}
The decision boundary curvature/angle between two adjacent regions $\br$ and $\br'$\footnote{For clarity, we omit the subscripts.} is given by the following dihedral angle \cite{kern1938solid} between the adjacent hyperplanes as
\begin{equation}
    \cos (\theta(\br,\br')) = \frac{|\langle \alpha_{\br},\alpha_{\br'} \rangle|}{\| \alpha_{\br} \| \| \alpha_{\br'} \|}.\label{eq:distance}
\end{equation}
\end{thm}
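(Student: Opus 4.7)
The plan is to exploit the explicit piecewise-affine form of the decision boundary derived in (\ref{eq:decision}) together with the continuity statement of the preceding lemma, and then invoke the classical definition of the dihedral angle between two hyperplanes meeting along a common face.

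First, I would note that inside a region $\omega_{\br} := \omega^{(1,\dots,L-1)}_{\br^{(1)},\dots,\br^{(L-1)}}$ the DN is globally affine, so the part of the decision boundary contained in $\omega_{\br}$ is exactly the trace on $\omega_{\br}$ of the affine hyperplane $H_{\br} = \{\bx : \langle \alpha_{\br}, \bx\rangle + \beta_{\br} = 0\}$, whose unit normal is $\alpha_{\br}/\|\alpha_{\br}\|$. The analogous statement holds for the adjacent region $\br'$ with hyperplane $H_{\br'}$ and normal $\alpha_{\br'}/\|\alpha_{\br'}\|$. By the preceding lemma, the edge passing through $\omega_{\br}$ continues into $\omega_{\br'}$, so $H_{\br}\cap \overline{\omega_{\br}}$ and $H_{\br'}\cap \overline{\omega_{\br'}}$ meet along a common $(D-2)$-dimensional face lying inside the interface $\partial \omega_{\br}\cap \partial \omega_{\br'}$.

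Second, I would apply the standard definition of the dihedral angle $\theta(\br,\br')$ between two hyperplanes sharing a common face: it is the angle, measured in the two-dimensional plane orthogonal to that face, between the two half-hyperplanes meeting there. A direct computation (project onto that orthogonal plane, where the hyperplanes reduce to two lines through the origin with directions $n_{\br}, n_{\br'}$) gives $\cos\theta = \langle n_{\br}, n_{\br'}\rangle$ for appropriate choices of unit normals. Substituting the normals of $H_{\br}$ and $H_{\br'}$ computed above then yields, up to sign, the claimed identity.

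The main obstacle I expect is bookkeeping around the absolute value. Since $\alpha_{\br}$ is assembled from products $(A^{(2)}_{\br^{(2)}}A^{(1)}_{\br^{(1)}})^\top[A^{(3)}]_{1,1,\bigcdot}$ in which the per-region matrices $A^{(\ell)}_{\br^{(\ell)}}$ change entry by entry with the code, the signs of $\alpha_{\br}$ and $\alpha_{\br'}$ carry no canonical consistent orientation across regions — the zero level sets $H_{\br}$ and $H_{\br'}$ agree on the interface by continuity, but their defining gradients may be either aligned or anti-aligned. Reducing to the unsigned dihedral angle in $[0,\pi/2]$, as required by the statement, is exactly what forces the absolute value in the numerator, producing $\cos\theta(\br,\br') = |\langle \alpha_{\br}, \alpha_{\br'}\rangle|/(\|\alpha_{\br}\|\|\alpha_{\br'}\|)$.
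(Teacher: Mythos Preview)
Your proposal is correct and matches the paper's approach: the paper does not give a separate proof of this theorem, treating it instead as an immediate application of the classical dihedral angle formula (citing \cite{kern1938solid}) to the two affine pieces $H_{\br}$ and $H_{\br'}$ of the decision boundary established in (\ref{eq:decision}) and glued by the preceding continuity lemma. Your write-up simply makes explicit the steps the paper leaves implicit---identifying $\alpha_{\br}/\|\alpha_{\br}\|$ as the unit normals, invoking the lemma to obtain a common $(D-2)$-face, and justifying the absolute value via the lack of a canonical orientation of the $\alpha_{\br}$---so there is no substantive difference.
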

\begin{wrapfigure}[6]{r}[4pt]{0.16\textwidth}
  \begin{center}
    \includegraphics[width=0.15\textwidth]{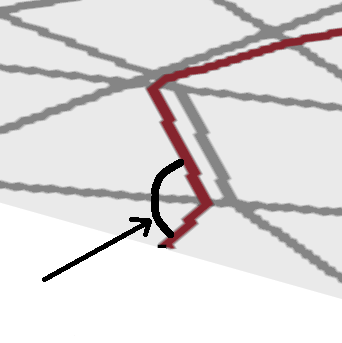}
  \end{center}
\end{wrapfigure}
The above is illustrated in the adjacent figure with one of the angle highlighted with an arrow.
The hyperplane offsets $\beta_{\br},\beta_{\br'}$ are irrelevant to the boundary curvature. Following this, the DN bias units are also irrelevant to the boundary curvature. Also, the norm of the gradient of the DN and thus the Lipschitz constant alone does not characterize the regularity of the decision boundary. In fact, the angle is invariant under scaling of the parameters. This indicates how measures based on the input-output sensitivity do not characterize alone the curvature of the decision boundary.

Finally, we highlight an intuitive result that can be derived from the above.
We remind that a neighbouring regions implies a change of a single code unit. Let denote without loss of generality the changed code index by $d'\in \{1,\dots,D^{(1)}$. The other $D^{(1)}-1$ codes remain the same.
When dealing with $R=2$ nonlinearities, this implies that $[\br]_{d'}$ changes from a $1$ to a $2$ for those two neighbouring regions. Let denote by $\br$ the case with a $1$ and by $\br'$ the case with a $2$. With those notations, we can derive some special cases of the distance formula (\ref{eq:distance}) for some DN topologies.

\begin{prop}
In a 2-layer DN with ReLU and  orthogonal first layer weights, we have
\begin{align}
\cos (\theta(\br,\br'))=
\left(
\frac{|[W^{(2)}]_{1,d'}| \| [W^{1)}]_{d',.} \|_2}{\sum_{d\not = d'}^{D^{(1)}}1_{\{[\br']_d=2\}}|[W^{(2)}]_{1,d}|\| [W^{(1)}]_{d,.}\|}+1
\right)^{-1}
\in (0,1)
\label{eq:2layerdistance}
\end{align}
\end{prop}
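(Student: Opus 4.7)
The plan is to specialise the dihedral-angle formula $\cos\theta(\br,\br')=|\langle\alpha_\br,\alpha_{\br'}\rangle|/(\|\alpha_\br\|\,\|\alpha_{\br'}\|)$ of the preceding theorem to the two-layer architecture $f(\bx)=W^{(2)}\sigma(W^{(1)}\bx+b^{(1)})+b^{(2)}$, with $\sigma$ the ReLU and the rows of $W^{(1)}$ pairwise orthogonal. First I would identify the per-region input-space gradient. Setting $L=2$ in the expression for $\alpha_{\br^{(1)},\br^{(2)}}$ preceding the statement (equation~(\ref{eq:decision})) and using that, for a ReLU MASO, $[\br]_d=2$ encodes ``unit $d$ active'', one obtains
\begin{equation*}
\alpha_\br=\sum_{d=1}^{D(1)}\Indic_{\{[\br]_d=2\}}\,[W^{(2)}]_{1,d}\,[W^{(1)}]_{d,\bigcdot}.
\end{equation*}

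Next I would exploit adjacency together with the orthogonality hypothesis. By the preceding lemma, $\br$ and $\br'$ differ in exactly one coordinate $d'$; assume without loss of generality $[\br]_{d'}=1$ and $[\br']_{d'}=2$, so that $\alpha_{\br'}=\alpha_\br+[W^{(2)}]_{1,d'}[W^{(1)}]_{d',\bigcdot}$. Because $[W^{(1)}]_{d',\bigcdot}$ is absent from $\alpha_\br$ (since $[\br]_{d'}=1$), the orthogonality assumption $\langle[W^{(1)}]_{d,\bigcdot},[W^{(1)}]_{d',\bigcdot}\rangle=0$ for $d\neq d'$ implies $\langle[W^{(1)}]_{d',\bigcdot},\alpha_\br\rangle=0$. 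A Pythagorean decomposition then yields
\begin{equation*}
\langle\alpha_\br,\alpha_{\br'}\rangle=\|\alpha_\br\|^{2},\qquad \|\alpha_{\br'}\|^{2}=\|\alpha_\br\|^{2}+|[W^{(2)}]_{1,d'}|^{2}\,\|[W^{(1)}]_{d',\bigcdot}\|^{2},
\end{equation*}
together with $\|\alpha_\br\|^{2}=\sum_{d\neq d'}\Indic_{\{[\br']_d=2\}}|[W^{(2)}]_{1,d}|^{2}\|[W^{(1)}]_{d,\bigcdot}\|^{2}$ (using orthogonality of the active rows and $[\br]_d=[\br']_d$ for $d\neq d'$).

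Substituting the three identities into the dihedral formula collapses $\cos\theta(\br,\br')$ to $\|\alpha_\br\|/\|\alpha_{\br'}\|$, and dividing numerator and denominator by $\|\alpha_\br\|$ recovers a ratio of exactly the shape stated on the right-hand side, with $R$ the sum-ratio announced in the proposition. Strict containment in $(0,1)$ is then immediate: the added quantity $|[W^{(2)}]_{1,d'}|^{2}\|[W^{(1)}]_{d',\bigcdot}\|^{2}$ is strictly positive whenever the flipping unit is non-degenerate, and $\|\alpha_\br\|>0$ whenever at least one other unit is active on $\br'$. The main obstacle is interpretational rather than technical: one must reconcile the exponent and normalisation convention of the stated identity (first-order norms, exponent $-1$) with the squared ratio $\|\alpha_\br\|^{2}/\|\alpha_{\br'}\|^{2}$ that drops out of the Pythagorean step, together with careful bookkeeping that the denominator of $R$ ranges over units active on the target region $\br'$ (equivalently on $\br$, since the two codes agree off $d'$). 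Modulo that bookkeeping, the entire argument is the three-line plug-in of (\ref{eq:decision}) followed by killing cross terms via orthogonality.
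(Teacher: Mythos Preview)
The paper does not supply a proof of this proposition, so there is nothing to compare against; your approach---specialising the dihedral formula via $\alpha_\br=\sum_d\Indic_{\{[\br]_d=2\}}[W^{(2)}]_{1,d}[W^{(1)}]_{d,\bigcdot}$, invoking adjacency to flip a single coordinate $d'$, and killing cross terms by orthogonality of the rows of $W^{(1)}$---is the natural one and is carried out correctly through the Pythagorean step.

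One small slip: what drops out of the dihedral formula is the \emph{unsquared} ratio $\|\alpha_\br\|/\|\alpha_{\br'}\|$, not $\|\alpha_\br\|^2/\|\alpha_{\br'}\|^2$. Indeed
\[
\cos\theta(\br,\br')=\frac{|\langle\alpha_\br,\alpha_{\br'}\rangle|}{\|\alpha_\br\|\,\|\alpha_{\br'}\|}=\frac{\|\alpha_\br\|^2}{\|\alpha_\br\|\,\|\alpha_{\br'}\|}=\frac{\|\alpha_\br\|}{\|\alpha_{\br'}\|}
=\left(1+\frac{|[W^{(2)}]_{1,d'}|^{2}\,\|[W^{(1)}]_{d',\bigcdot}\|^{2}}{\sum_{d\neq d'}\Indic_{\{[\br']_d=2\}}|[W^{(2)}]_{1,d}|^{2}\,\|[W^{(1)}]_{d,\bigcdot}\|^{2}}\right)^{-1/2}.
\]
So the discrepancy you flag is real and is not merely bookkeeping: the printed identity (\ref{eq:2layerdistance}) has exponent $-1$ where the derivation gives $-1/2$, and the denominator of the inner ratio should be the $\ell_2$-type sum of squares (i.e.\ $\|\alpha_\br\|^2$) rather than the $\ell_1$-type sum of products. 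These are consistent with a typographical error in the statement, not a gap in your argument. Your qualitative conclusions---strict containment in $(0,1)$ under the nondegeneracy conditions you name---survive unchanged, since they depend only on $\|\alpha_\br\|>0$ and on the added term being strictly positive.
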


From the above formula it is clear that reducing the norm of the weights alone does not impact the angles. However, we have the following result.

\begin{prop}
Regions of the input space in which the amount of firing ReLU is small will have greater decision boundary curvature than regions with most ReLU firing simultaneously.
\end{prop}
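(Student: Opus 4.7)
The plan is to deduce this directly from the closed-form expression (\ref{eq:2layerdistance}) obtained in the preceding proposition, by reading it as a monotone function of the number of simultaneously firing units in the reference region $\br'$. Write
\[
\cos\theta(\br,\br')\;=\;\bigl(X(\br')+1\bigr)^{-1}, \qquad X(\br')=\frac{c_{d'}}{\displaystyle\sum_{d\neq d'}\mathbf{1}_{\{[\br']_d=2\}}\,c_d},
\]
where $c_d:=|[W^{(2)}]_{1,d}|\,\|[W^{(1)}]_{d,\cdot}\|>0$ is fixed by the network weights and $d'$ is the single index in which $\br$ and $\br'$ disagree. This recasts the curvature question as a question about how the positive quantity $X(\br')$ depends on the support $\mathcal{F}(\br'):=\{d\neq d' : [\br']_d=2\}$ of firing ReLUs in $\br'$.

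First I would observe the trivial but key monotonicity: enlarging $\mathcal{F}(\br')$ (firing more ReLUs) only adds positive terms to the denominator of $X(\br')$, so $X(\br')$ strictly decreases and hence $\cos\theta(\br,\br')$ strictly increases toward $1$, i.e.\ $\theta(\br,\br')$ decreases toward $0$. Conversely, shrinking $\mathcal{F}(\br')$ removes positive terms, making $X(\br')$ blow up and $\cos\theta(\br,\br')$ approach $0$, i.e.\ $\theta(\br,\br')\to\pi/2$. Since the dihedral angle $\theta(\br,\br')$ measures the bending of the decision boundary across the shared face of $\omega_{\br}$ and $\omega_{\br'}$, a larger $\theta$ is by definition a greater curvature at that joint. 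This establishes the qualitative statement of the proposition.

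To make the comparison fully rigorous rather than pointwise monotone, I would phrase the result in terms of an ordering: for any two admissible neighbouring-region pairs $(\br_1,\br_1')$ and $(\br_2,\br_2')$ with the same flipping coordinate and with $\mathcal{F}(\br_1')\subseteq \mathcal{F}(\br_2')$, one has $\theta(\br_1,\br_1')\geq \theta(\br_2,\br_2')$, with equality iff the added firing units all have $c_d=0$. The extremal cases are equally immediate: when $\mathcal{F}(\br')=\emptyset$ the denominator vanishes and $\theta=\pi/2$ (maximal curvature), while when all units fire, the denominator is as large as possible and $\theta$ attains its minimum.

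The main obstacle, as I see it, is not the algebra but the interpretation. The formula (\ref{eq:2layerdistance}) is only derived under the orthogonality assumption on $W^{(1)}$, so the clean monotonicity argument above is really a statement under that hypothesis; without it, the Gram cross-terms in $(A^{(2)}_{\br^{(2)}}A^{(1)}_{\br^{(1)}})^\top[A^{(3)}]_{1,1,\cdot}$ can in principle mask the sign of the dependence on a single firing flip, so one would need either to invoke an average/expectation over weight configurations or to restrict the claim to the orthogonal regime in which the previous proposition was stated. I would therefore state and prove the proposition in that restricted regime and remark that the general non-orthogonal case follows in expectation for i.i.d.\ symmetric weights, where the cross-terms vanish in mean.
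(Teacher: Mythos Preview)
Your argument is correct and is exactly the mechanism the paper has in mind: the proposition is stated immediately after (\ref{eq:2layerdistance}) as a direct reading of that formula, with no separate proof given. Your monotonicity observation---that the firing set $\mathcal{F}(\br')$ enters only through the positive sum in the denominator of $X(\br')$, so enlarging it drives $\cos\theta$ toward $1$ and shrinking it drives $\cos\theta$ toward $0$---is precisely the intended derivation, and your explicit treatment of the extremal cases and the caveat about the orthogonality hypothesis are more careful than what the paper itself provides.
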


As a result, a ReLU based DN will have different behavior at different regions of the space. And the angle is always positive. Interestingly, the use of absolute value on the other hand leads to the following.

\begin{prop}
In a 2-layer DN with absolute value and  orthogonal first layer weights, we have
\begin{align}
\cos (\theta(\br,\br'))
=1- 2 \left(
1+\sum_{d\not = d'}^{D^{(1)}}|[W^{(2)}]_{1,d}|^2\| [W^{(1)}]_{d,.}\|^2
\right)^{-1}
\in (0,1).
\label{eq:2layerdistanceabs}
\end{align}
\end{prop}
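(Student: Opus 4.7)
The plan is to reuse the closed-form expression for the region-wise decision-boundary normal $\alpha_{\br}$ derived just before the proposition, specialise it to the 2-layer binary-classification setting, and then evaluate the three scalars $\|\alpha_{\br}\|^2$, $\|\alpha_{\br'}\|^2$ and $\langle\alpha_{\br},\alpha_{\br'}\rangle$ entering the dihedral-angle formula (\ref{eq:distance}) by exploiting orthogonality of the rows of $W^{(1)}$.

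First I would make the sign encoding of the absolute-value MASO layer explicit: because $|u|$ is a max of two affine pieces of opposite slope, the code entry $[\br]_d\in\{1,2\}$ simply selects a sign $s_{\br,d}\in\{+1,-1\}$ so that $[A^{(1)}_{\br}]_{d,\cdot}=s_{\br,d}\,[W^{(1)}]_{d,\cdot}$. Specialising the normal-vector formula preceding the proposition to a 2-layer DN with a single output neuron gives
\begin{equation*}
\alpha_{\br}=(A^{(1)}_{\br})^{\!\top}[A^{(2)}]_{1,1,\cdot}=\sum_{d=1}^{D^{(1)}}u_d,\qquad u_d:=[W^{(2)}]_{1,d}\,s_{\br,d}\,[W^{(1)}]_{d,\cdot}.
\end{equation*}
The preceding lemma tells us that adjacent regions $\br,\br'$ differ in exactly one code entry, say $d'$, so the sign flips only at that unit and $\alpha_{\br'}=\alpha_{\br}-2u_{d'}$.

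Next, orthogonality of the rows of $W^{(1)}$ makes $\{u_d\}$ pairwise orthogonal, and a one-line Pythagoras identity then yields
\begin{equation*}
\|\alpha_{\br}\|^2=\|\alpha_{\br'}\|^2=\sum_{d=1}^{D^{(1)}}|[W^{(2)}]_{1,d}|^2\,\|[W^{(1)}]_{d,\cdot}\|^2=:S
\end{equation*}
(the sign flip on $u_{d'}$ is invisible to squared norms), together with $\langle\alpha_{\br},\alpha_{\br'}\rangle=\|\alpha_{\br}\|^2-2\langle\alpha_{\br},u_{d'}\rangle=S-2\|u_{d'}\|^2$. Substituting into (\ref{eq:distance}) gives
\begin{equation*}
\cos\theta(\br,\br')=\frac{|S-2\|u_{d'}\|^2|}{S}=\Bigl|1-\frac{2\|u_{d'}\|^2}{S}\Bigr|,
\end{equation*}
which coincides with the stated identity once the implicit normalisation $\|u_{d'}\|^2=|[W^{(2)}]_{1,d'}|^2\|[W^{(1)}]_{d',\cdot}\|^2=1$ is imposed and we are in the regime $S\ge 2\|u_{d'}\|^2$ that makes the outer absolute value inert; this yields $\cos\theta\in[0,1)$, with strict positivity whenever the remaining units contribute strictly more mass than the flipped one.

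The whole argument reduces to a single Pythagoras step, so the main points of care are bookkeeping rather than analysis: tracking the sign flip of the absolute-value piece when the code changes at index $d'$, and identifying both the normalisation at $d'$ and the regime on $S$ in which (\ref{eq:distance}) collapses to the exact closed form of the proposition. The contrast with the ReLU version is then immediate: ReLU has a $1/0$ slope jump, whereas $|\cdot|$ has a $\pm 1$ jump, doubling the coefficient of $\|u_{d'}\|^2/S$ and removing the dependence of $\cos\theta$ on the firing pattern of the other units.
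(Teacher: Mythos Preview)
The paper does not supply a proof for this proposition in either the main text or the appendix, so there is no argument to compare against. Your derivation is the natural direct computation and is correct: encode the absolute-value piece by a sign $s_{\br,d}\in\{\pm1\}$, write $\alpha_{\br}=\sum_d u_d$ with $u_d=[W^{(2)}]_{1,d}\,s_{\br,d}[W^{(1)}]_{d,\cdot}$, use that a single code flip at $d'$ gives $\alpha_{\br'}=\alpha_{\br}-2u_{d'}$, and apply orthogonality of the $u_d$'s to evaluate the three scalars in (\ref{eq:distance}). This yields $\cos\theta=\bigl|1-2\|u_{d'}\|^2/S\bigr|$ with $S=\sum_d\|u_d\|^2$, which is exactly what you obtain.

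You are also right to flag, rather than paper over, the two extra hypotheses needed to land on the formula \emph{as stated}: the implicit normalisation $\|u_{d'}\|^2=|[W^{(2)}]_{1,d'}|^2\|[W^{(1)}]_{d',\cdot}\|^2=1$ (which replaces your $\|u_{d'}\|^2$ by the bare ``$1$'' in the denominator), and the regime $\sum_{d\ne d'}\|u_d\|^2>1$ that makes the outer absolute value inert and forces the cosine into the open interval $(0,1)$. Without these, your expression is the correct general one and the proposition's display is a special case of it.
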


Now, not only are the angles  between $90$ degrees and $270$ (as opposed to ReLU between $90$ and $180$), but the angles also do not depend on the state of the other absolute values but just on the norm of the weights of both layers.



\section{Conclusions}
\label{sec:conc}

We have extended the understanding of DNs by leveraging computational geometry to characterize how a DN partitions its input space via a multiscale Power Diagram subdivision. 
Our analytical formulation for the partitions induced by not only the entire DN but also each of its units and layers will open new avenues to study how to optimize DNs for certain tasks and classes of signals.   


\small
\bibliographystyle{plainnat}

\bibliography{BIBLIO}

\appendix
\section{Additional Geometric Insights}
\label{appendix:extra_geometric}

The Laguerre distance corresponds to the length of the line segment that starts at $\bx$ and ends at the tangent to the hypersphere with center $[A^{(\ell)}]_{k,r^{\star},\bigcdot}$ and radius $\big[{\rm rad} \big]_{k,r^{\star}}$ (see Figure~\ref{fig:power_diagram}).

The hyperplanar boundary between two adjacent PD regions can be characterized in terms of the {\em chordale} of the corresponding hyperspheres \cite{johnson1960advanced}. Doing so for all adjacent boundaries fully characterize those region boundaries in simple terms of hyperplane intersections from \cite{aurenhammer1987power}.

\subsection{Paraboloid $U$ Insights}
\label{appendix:paraboloid}

A further characterization of the polytope boundary $\partial \P_k$ can be made by introducing the paraboloid $U$ defined as $U(\bx)= \frac{1}{2} \big\| \bx \big\|^2_2$. 
Notice that the slope of the hyperplane is $\nabla \Epsilon_{k,r}=[A]_{k,r,\bigcdot}$ and its offset is $-\frac{1}{2}\|[A]_{k,r,\bigcdot} \|^2_2$. Defining the paraboloid $U$ defined as $U(\bx)= \frac{1}{2} \big\| \bx \big\|^2_2$, we see how \textit{the hyperplane $\Epsilon
_{k,r}$ is the tangent of the paraboloid $U$ at the point $[A]_{k,r,\bigcdot}$}.
We now highlight that the hyperplane and the paraboloid intersect at an unique point
\begin{equation}
    U(\bx) \begin{cases} = \Epsilon_{k,r}(\bx)  \iff \bx= [A]_{k,r,\bigcdot}\\
    > \Epsilon_{k,r}(\bx)\text{ , else}
    \end{cases}
    \label{eq:U_intersection}
\end{equation}
The faces of $\P_k$ are the tangent of $U$ at the points given by $[A]_{k,r,\bigcdot},\forall r$ leading to
\begin{equation}
    U(\bx) \begin{cases} = \P_k(\bx)  \iff \bx \in \{ [A]_{k,r,\bigcdot}, \forall r\}\\
    > \P_k(\bx)\text{ , otherwise}
    \end{cases}
\end{equation}

Concerning the case of abitrary bias we hve the following insights.
We can characterize the hypersphere as being the intersection of the hyperplanes with the paraboloid in the following result from \cite{aurenhammer1987power}.

\begin{prop}
\cite{aurenhammer1987power}
There is a bijective mapping between the hyperpshere in the input domain and the intersection of the hyperplane $\EPSILON$ in $\mathbb{R}^{D+1}$ with the paraboloid $U$.
\end{prop}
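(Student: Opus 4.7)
The plan is to establish the bijection explicitly via the standard paraboloid lifting and then verify it in both directions. Define the lifting map $\pi:\mathbb{R}^D\to\mathbb{R}^{D+1}$ by $\pi(\bx)=(\bx,U(\bx))=\bigl(\bx,\tfrac{1}{2}\|\bx\|^2\bigr)$, whose image is precisely the paraboloid. I would then characterize the hyperspheres of $\mathbb{R}^D$ as the zero sets of equations of the form $\|\bx\|^2-2\langle c,\bx\rangle+\|c\|^2-\rho^2=0$ parametrized by a center $c\in\mathbb{R}^D$ and a (possibly signed, as in the power-diagram setting) squared radius $\rho^2\in\mathbb{R}$.

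The forward direction is the algebraic rearrangement of the sphere equation. Rewriting $\|\bx\|^2=2\langle c,\bx\rangle-\|c\|^2+\rho^2$ and substituting $2y=\|\bx\|^2$ (the defining equation of $U$) yields the affine constraint
\begin{equation*}
y=\langle c,\bx\rangle-\tfrac{1}{2}\|c\|^2+\tfrac{1}{2}\rho^2,
\end{equation*}
which is a non-vertical hyperplane $\Epsilon_{c,\rho}\subset\mathbb{R}^{D+1}$. By construction, $\pi(\bx)\in\Epsilon_{c,\rho}$ if and only if $\bx$ belongs to the original hypersphere, so $\pi$ sends the hypersphere bijectively onto $\Epsilon_{c,\rho}\cap U$.

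For the converse direction, I would take an arbitrary non-vertical hyperplane $\Epsilon=\{(\bx,y):y=\langle a,\bx\rangle+b\}$, intersect with the paraboloid $y=\tfrac{1}{2}\|\bx\|^2$, and complete the square to obtain $\|\bx-a\|^2=\|a\|^2+2b$, i.e., a hypersphere with center $c=a$ and squared radius $\rho^2=\|a\|^2+2b$ (with the usual convention, under the paper's allowance of arbitrary radii, that $\rho^2$ may be any real). This inversion shows that the assignment $(c,\rho^2)\mapsto\Epsilon_{c,\rho}$ has inverse $(a,b)\mapsto(a,\|a\|^2+2b)$, so the correspondence between hyperspheres and non-vertical hyperplanes is bijective, and in each case $\Epsilon\cap U=\pi(\text{sphere})$.

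The main subtlety, rather than an obstacle, is handling the admissible class of hyperplanes. Vertical hyperplanes $\{(\bx,y):\langle a,\bx\rangle=b\}$ do intersect $U$ but yield a lower-dimensional section corresponding to a degenerate "sphere of infinite radius" (an affine hyperplane in $\mathbb{R}^D$); these are excluded from the bijection with genuine hyperspheres, and I would state this restriction explicitly up front so the statement is clean. Once that convention is fixed, the two explicit constructions above are mutual inverses, establishing the bijection.
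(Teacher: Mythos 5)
Your proof is correct. Note, however, that the paper does not actually prove this proposition: it is imported verbatim from Aurenhammer's work on power diagrams, and the only accompanying text is the one-line gloss that the projection of $\Epsilon \cap U$ onto the input space is a sphere whose radius is governed by the hyperplane's offset. Your argument supplies exactly the standard lifting construction behind that citation: the map $\pi(\bx)=(\bx,\tfrac{1}{2}\|\bx\|^2)$ carries the sphere $\|\bx-c\|^2=\rho^2$ bijectively onto the intersection of $U$ with the non-vertical hyperplane $y=\langle c,\bx\rangle-\tfrac{1}{2}\|c\|^2+\tfrac{1}{2}\rho^2$, and completing the square inverts the correspondence. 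Your derived squared radius $\|a\|^2+2b$ is consistent with the paper's radius formula $[{\rm rad}]_{k,r}=2[B]_{k,r}+\|[A]_{k,r,\bigcdot}\|^2$ in Theorem~\ref{thm:unitPD}, which is a useful sanity check. Your two caveats --- excluding vertical hyperplanes and permitting signed squared radii (so that hyperplanes lying strictly below $U$ correspond to ``imaginary'' spheres) --- are exactly the conventions needed to make the statement precise in the power-diagram setting, and flagging them up front is appropriate since the proposition as stated leaves them implicit.
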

In fact, the projection of the intersection between the hyperplane and the paraboloid onto the input space is forming a circle where the radius corresponds to the shift of the hyperplane.

\newpage
\section{Additional Figures}
\label{appendix:additional_figure}

\begin{figure}[ht]
    \centering
    \begin{minipage}{0.03\linewidth}
    \rotatebox{90}{\hspace{0.5cm}Layer 3\hspace{1.3cm}Layer 2\hspace{1.3cm} Layer 1}
    \end{minipage}
    \begin{minipage}{0.95\linewidth}
    \begin{minipage}{0.16\linewidth}
    \centering
    Unit 1\\
    \includegraphics[width=1\linewidth]{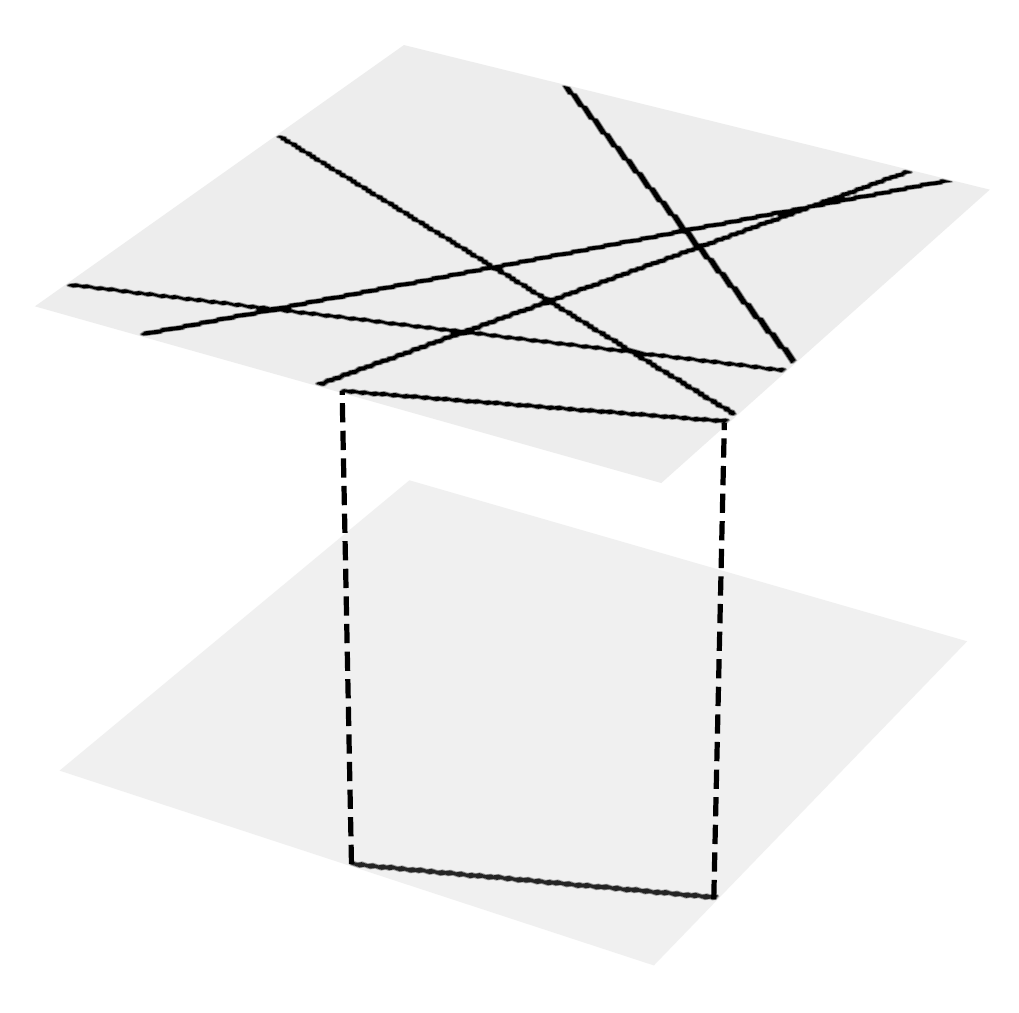}
    \end{minipage}\hfill
    \begin{minipage}{0.16\linewidth}
    \centering
    Unit 2\\
    \includegraphics[width=1\linewidth]{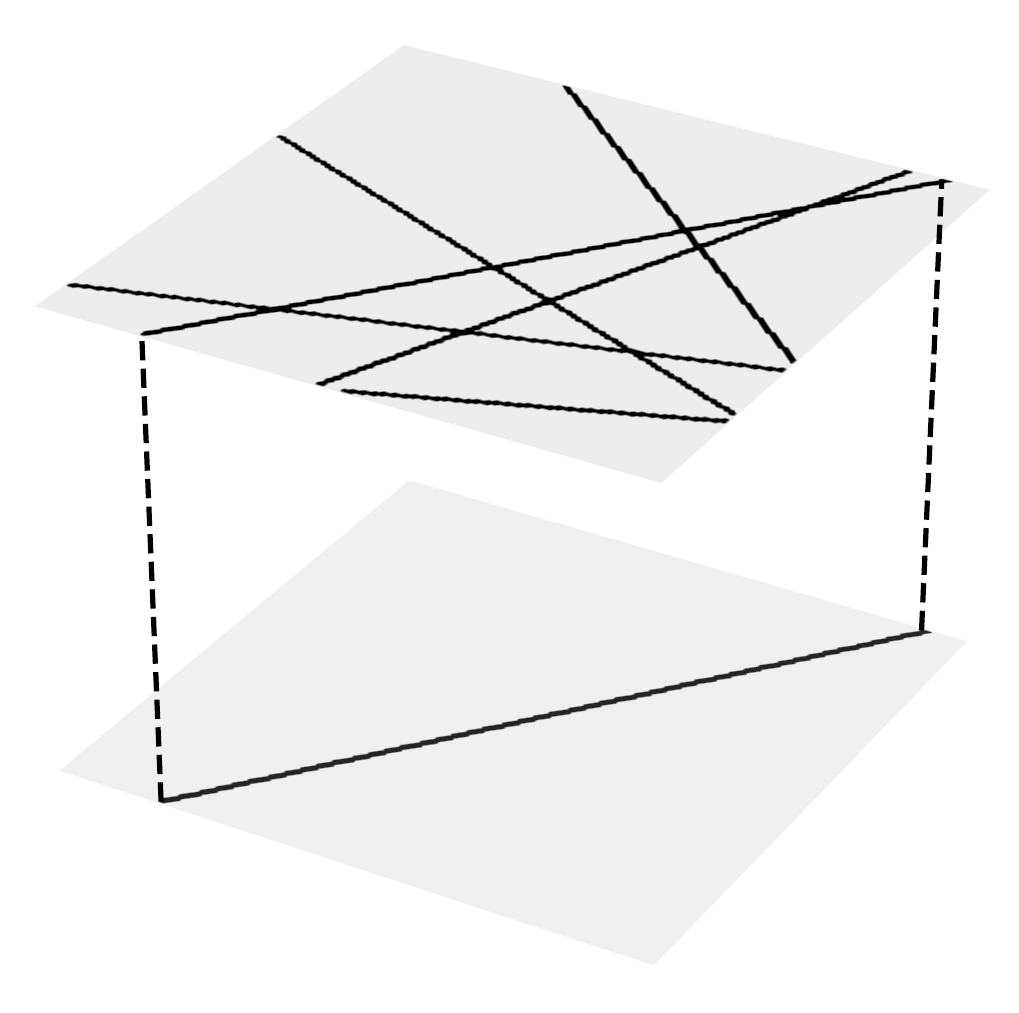}
    \end{minipage}\hfill
    \begin{minipage}{0.16\linewidth}
    \centering
    Unit 3\\
    \includegraphics[width=1\linewidth]{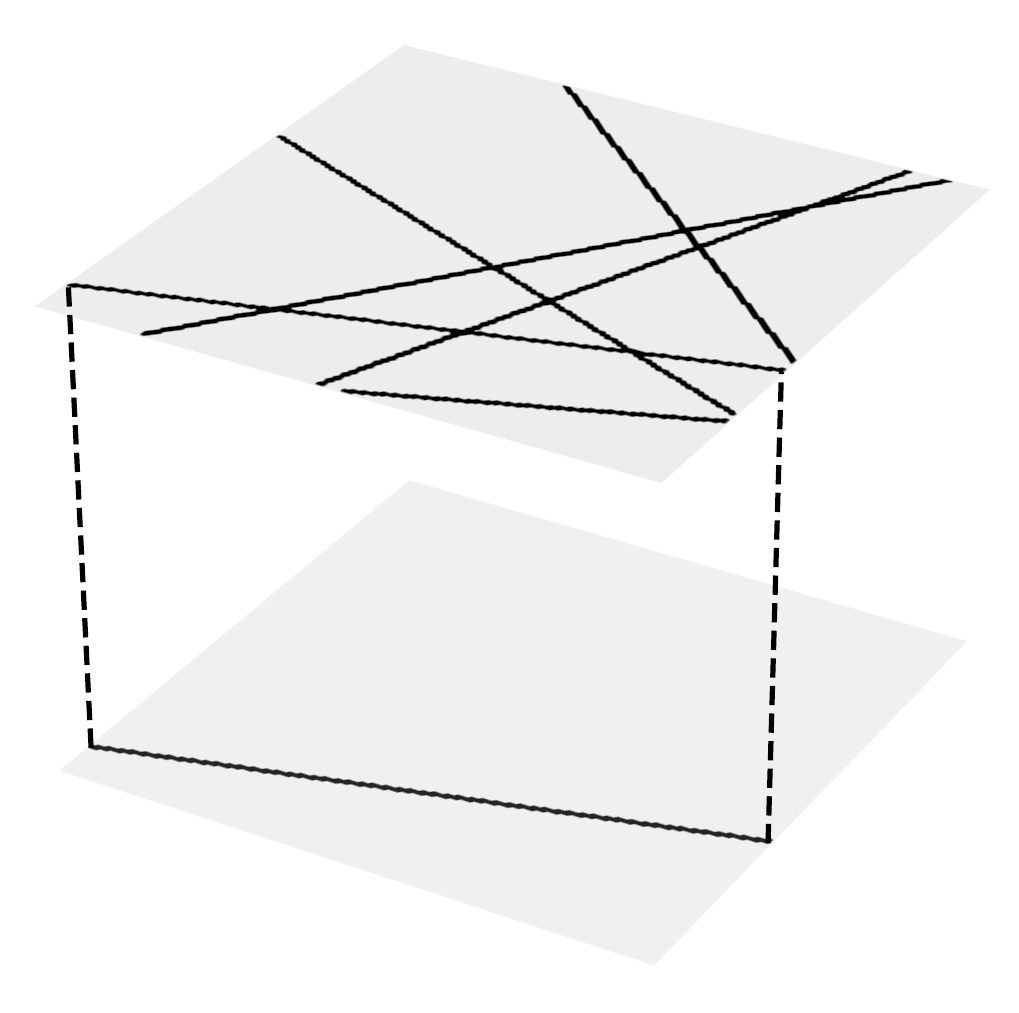}
    \end{minipage}\hfill
    \begin{minipage}{0.16\linewidth}
    \centering
    Unit 4\\
    \includegraphics[width=1\linewidth]{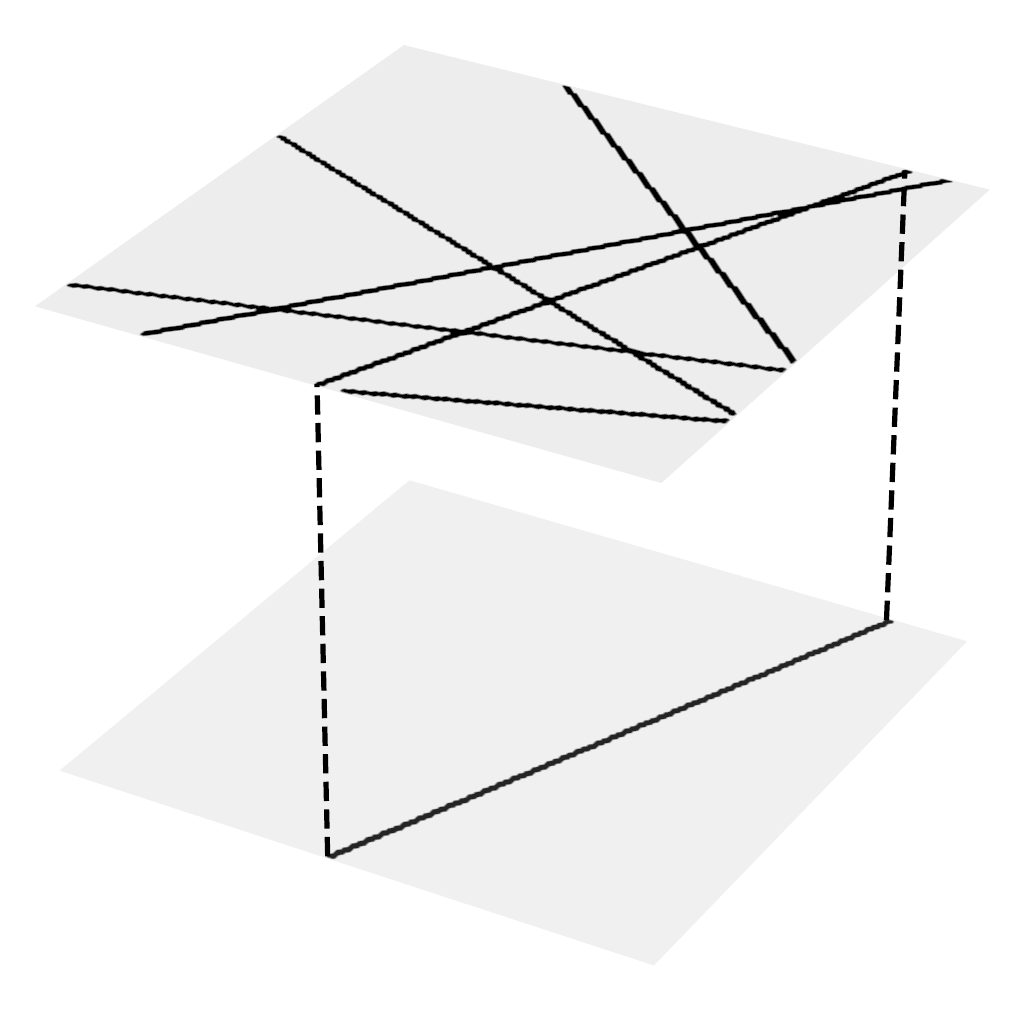}
    \end{minipage}\hfill
    \begin{minipage}{0.16\linewidth}
    \centering
    Unit 5\\
    \includegraphics[width=1\linewidth]{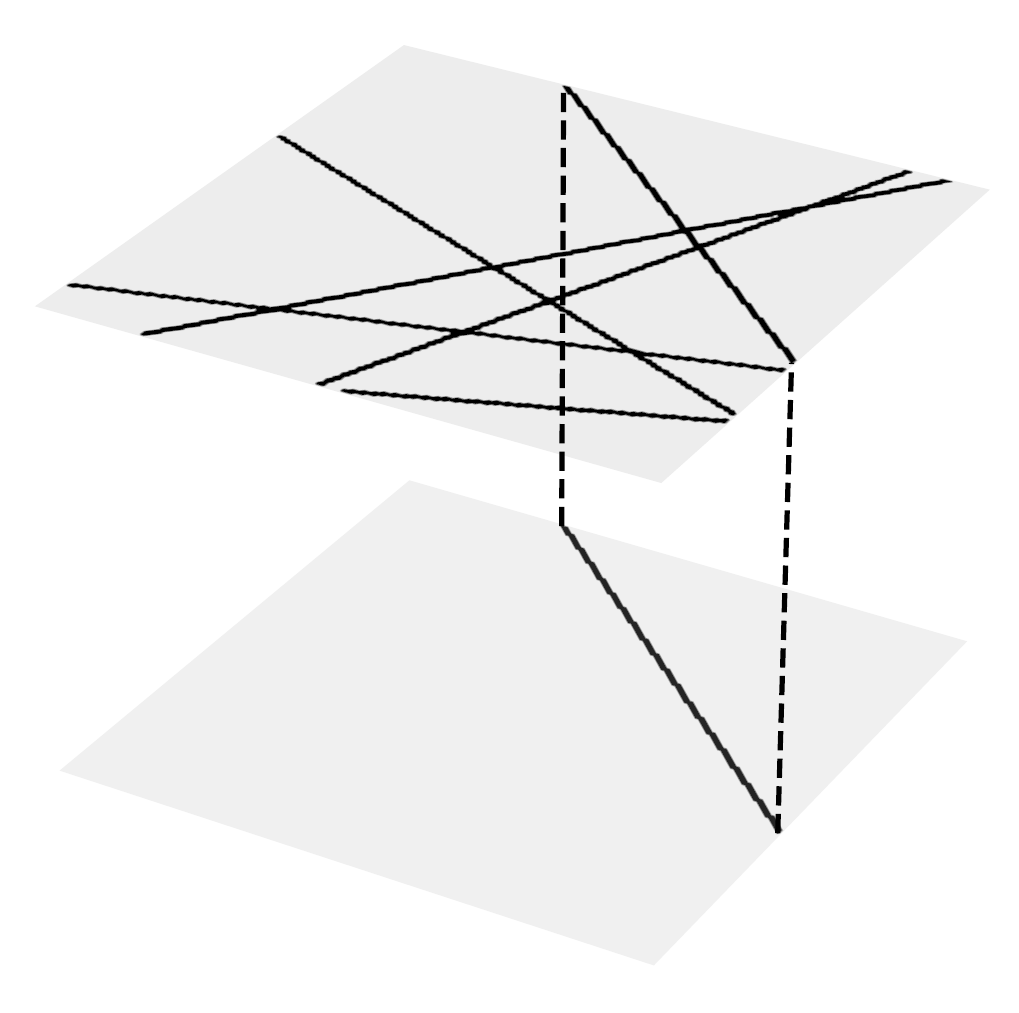}
    \end{minipage}\hfill
    \begin{minipage}{0.16\linewidth}
    \centering
    Unit 6\\
    \includegraphics[width=1\linewidth]{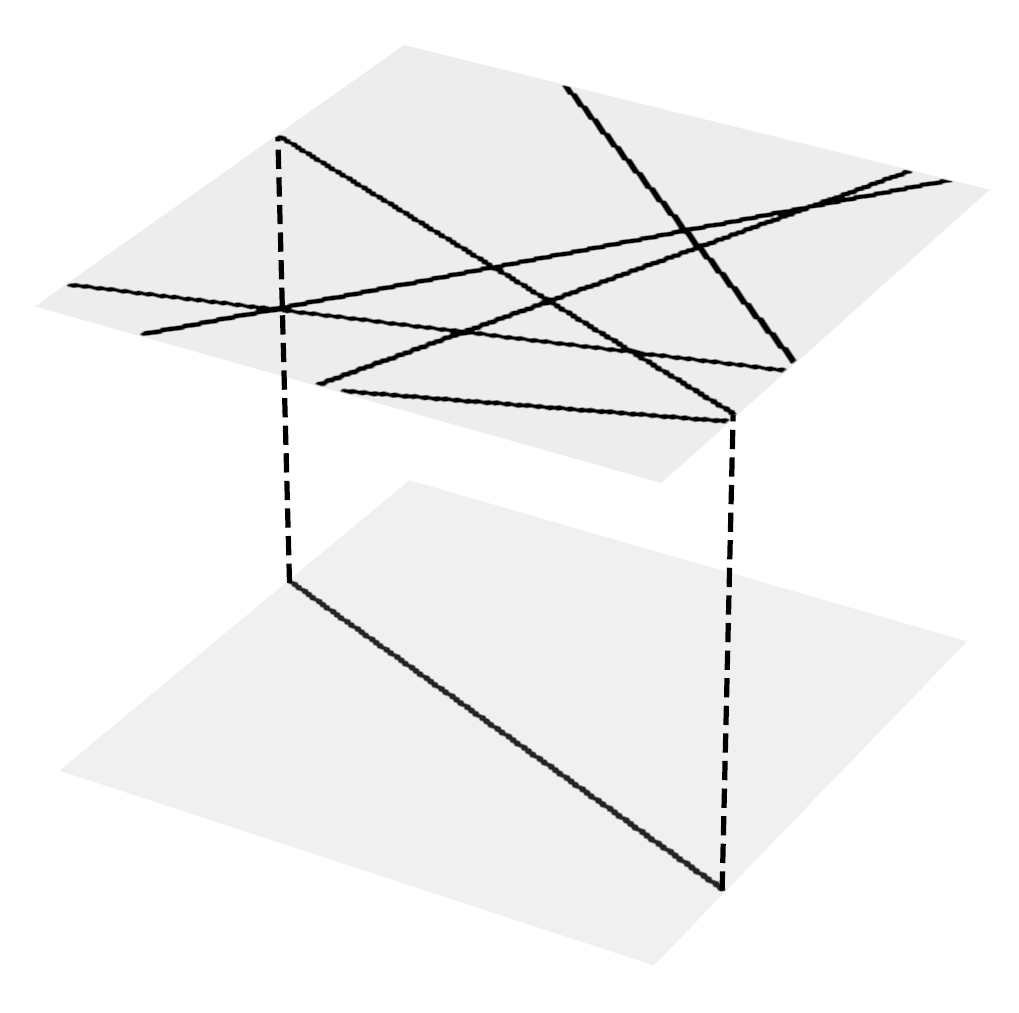}
    \end{minipage}\hfill\\
    \begin{minipage}{0.16\linewidth}
    \centering
    Unit 1\\
    \includegraphics[width=1\linewidth]{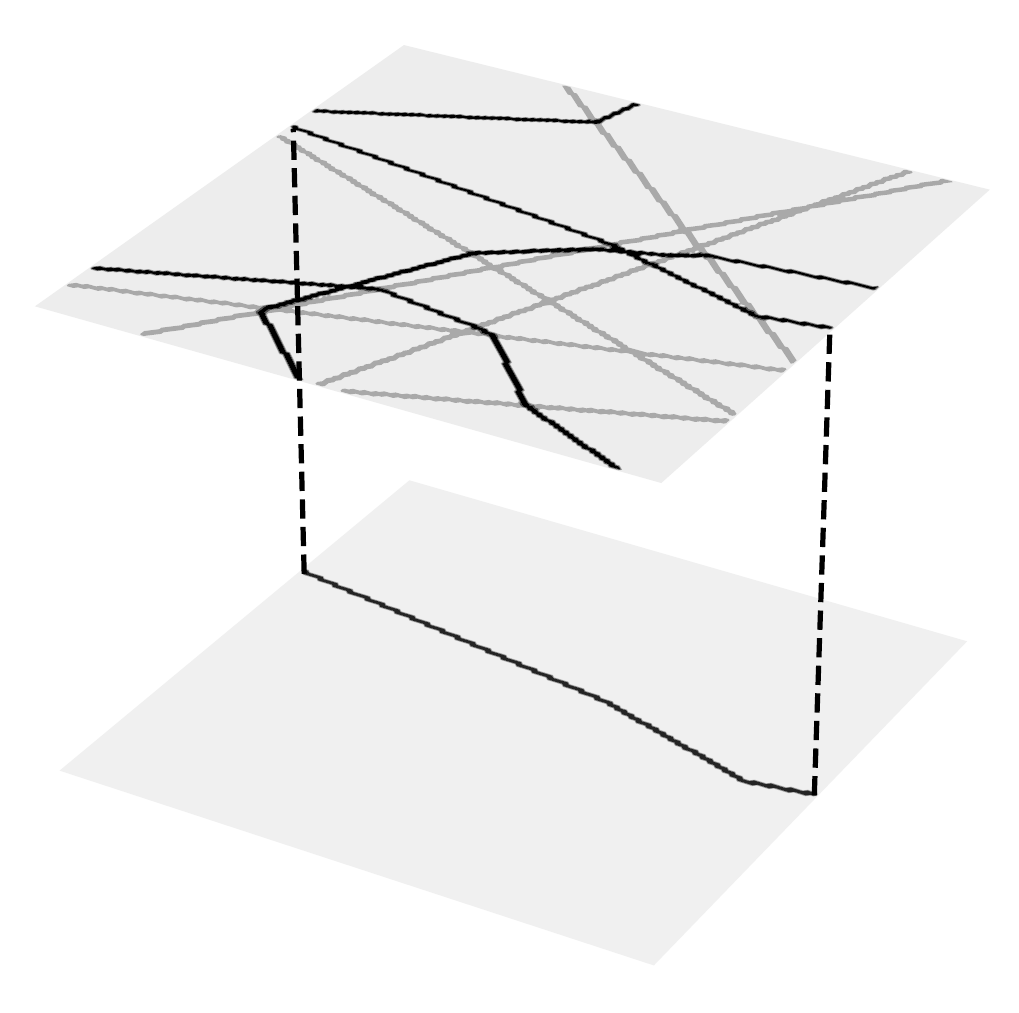}
    \end{minipage}\hfill
    \begin{minipage}{0.16\linewidth}
    \centering
    Unit 2\\
    \includegraphics[width=1\linewidth]{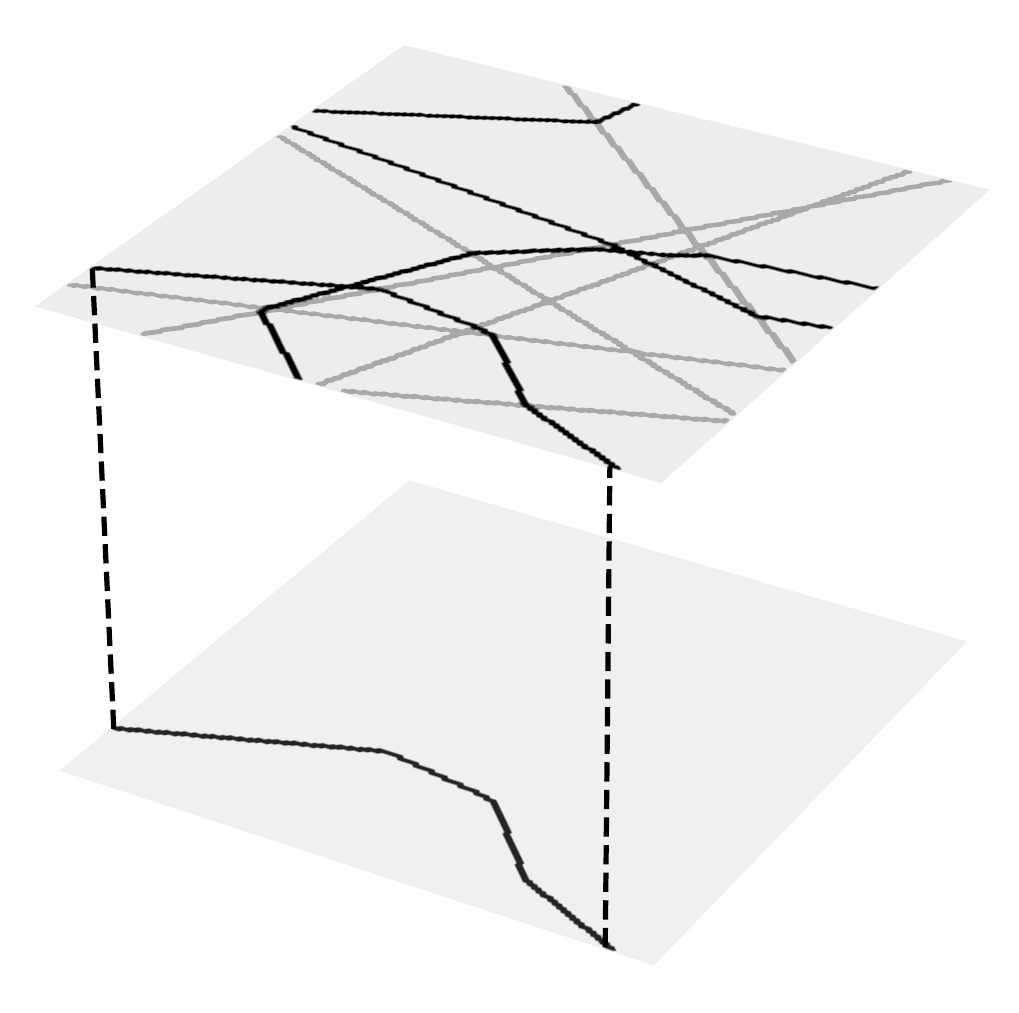}
    \end{minipage}\hfill
    \begin{minipage}{0.16\linewidth}
    \centering
    Unit 3\\
    \includegraphics[width=1\linewidth]{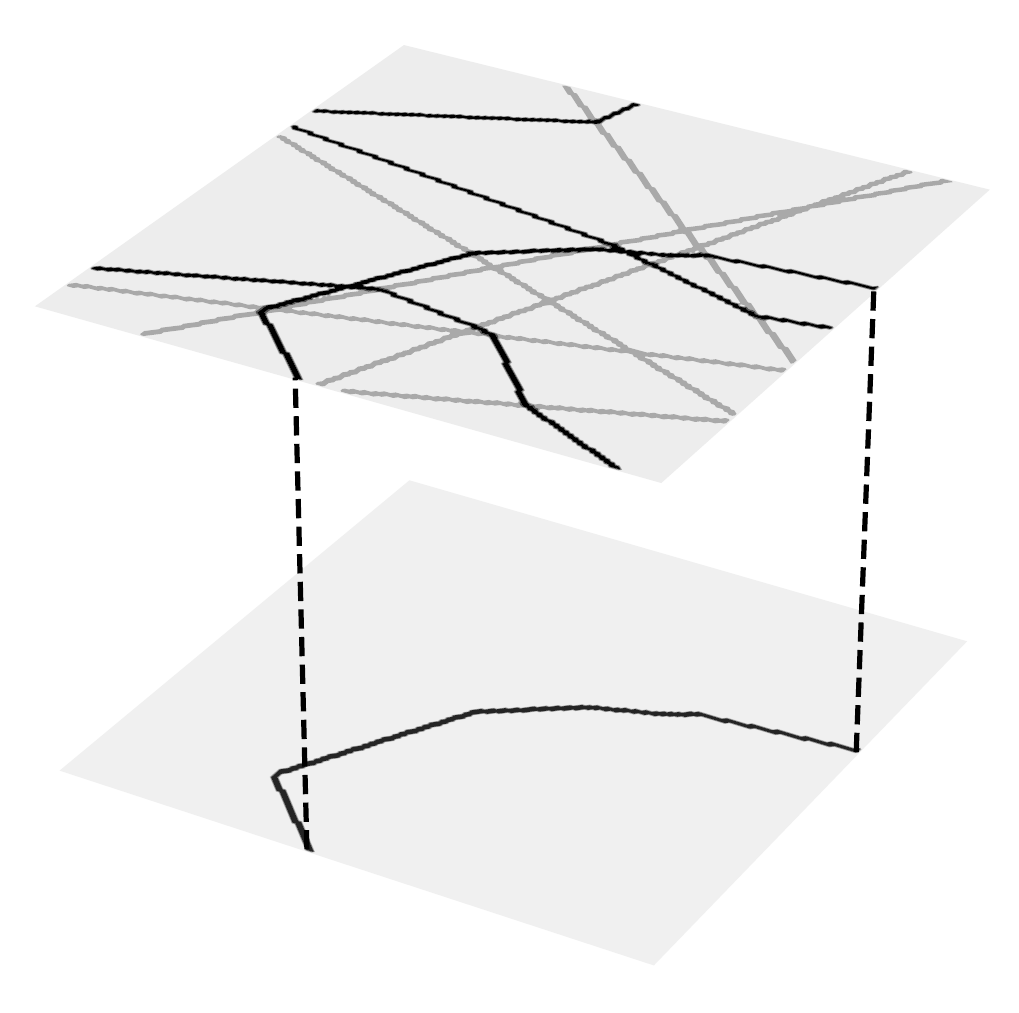}
    \end{minipage}\hfill
    \begin{minipage}{0.16\linewidth}
    \centering
    Unit 4\\
    \includegraphics[width=1\linewidth]{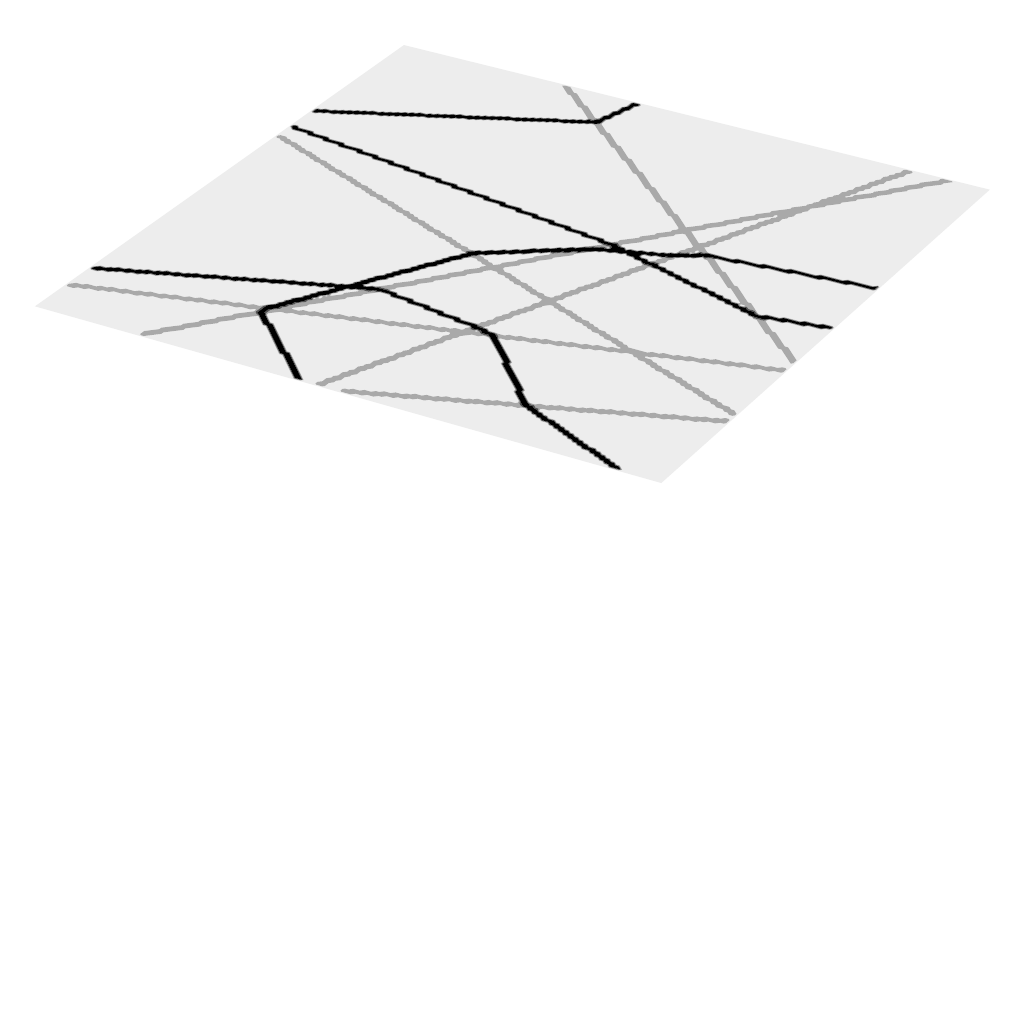}
    \end{minipage}\hfill
    \begin{minipage}{0.16\linewidth}
    \centering
    Unit 5\\
    \includegraphics[width=1\linewidth]{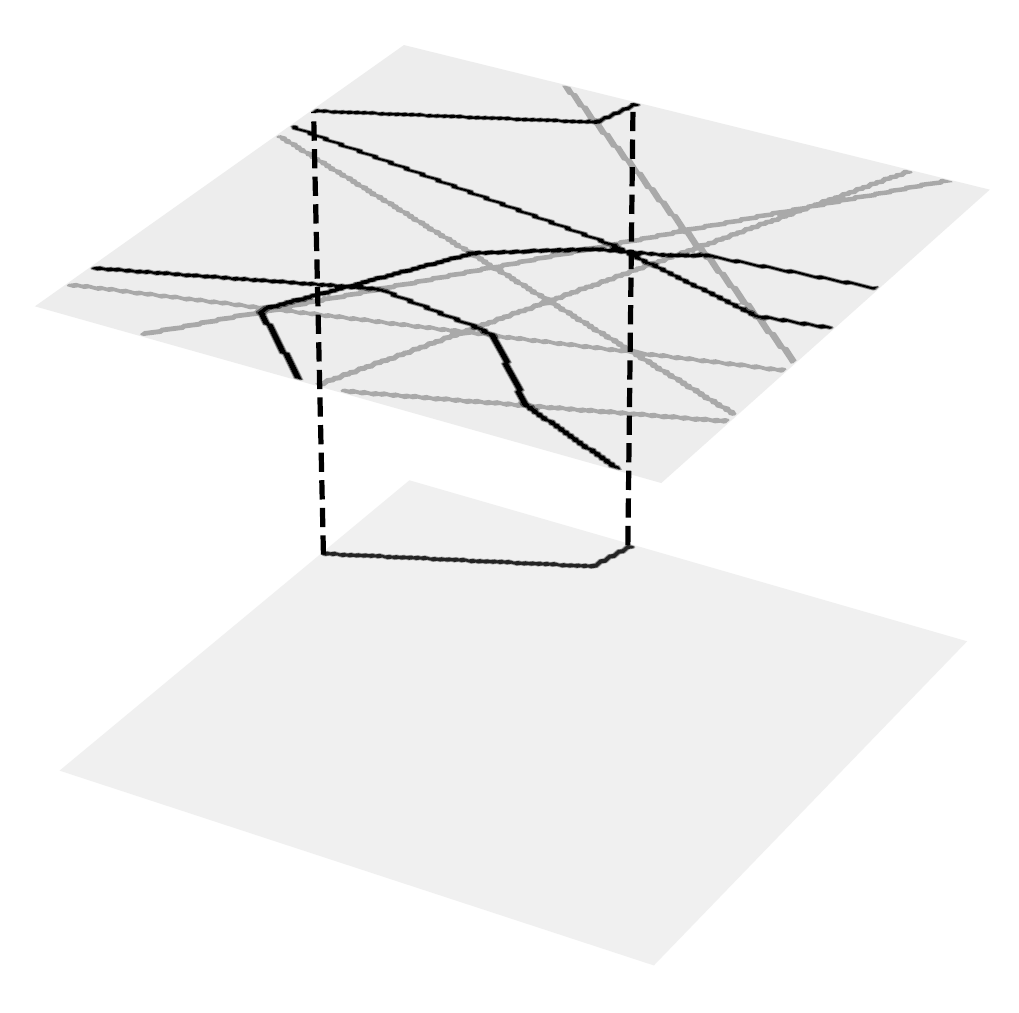}
    \end{minipage}\hfill
    \begin{minipage}{0.16\linewidth}
    \centering
    Unit 6\\
    \includegraphics[width=1\linewidth]{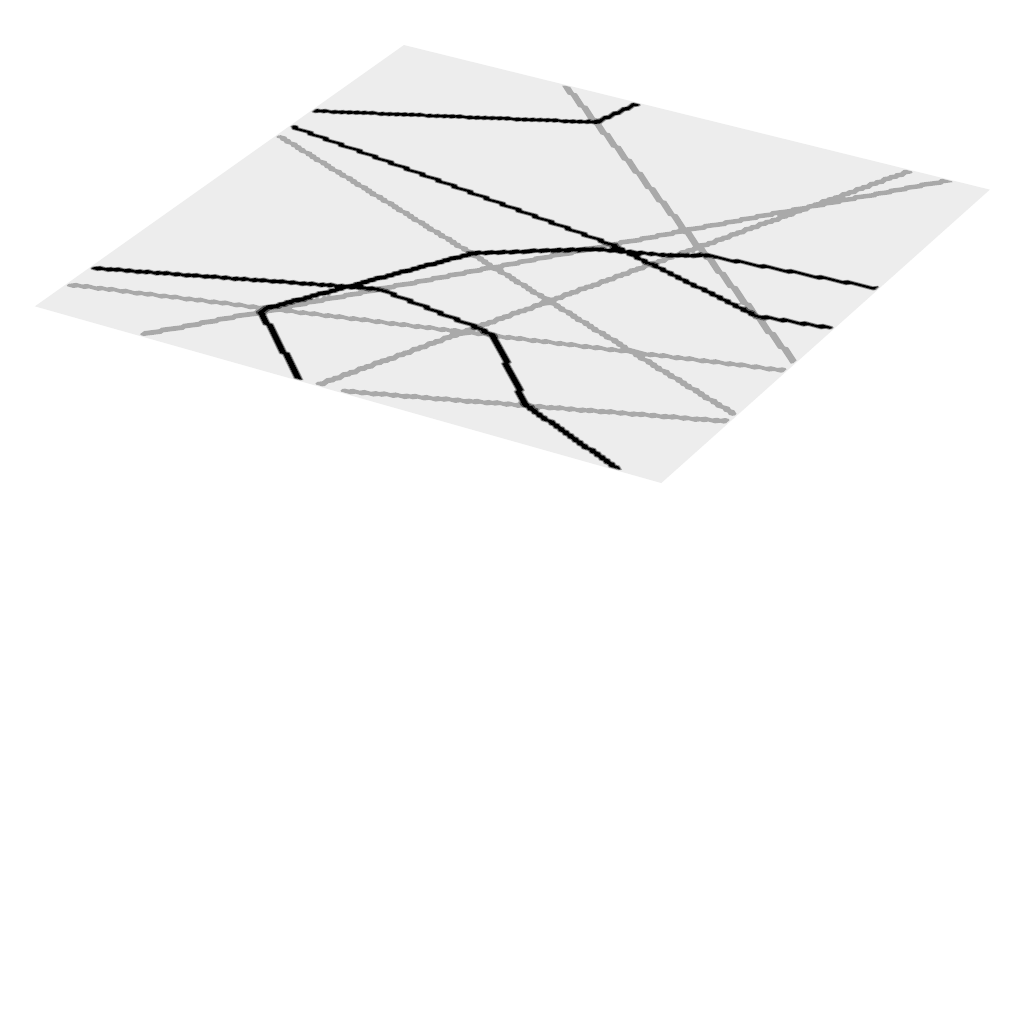}
    \end{minipage}\hfill\\
    \centering
    \begin{minipage}{0.16\linewidth}
    \centering
    Unit 1\\
    \includegraphics[width=1\linewidth]{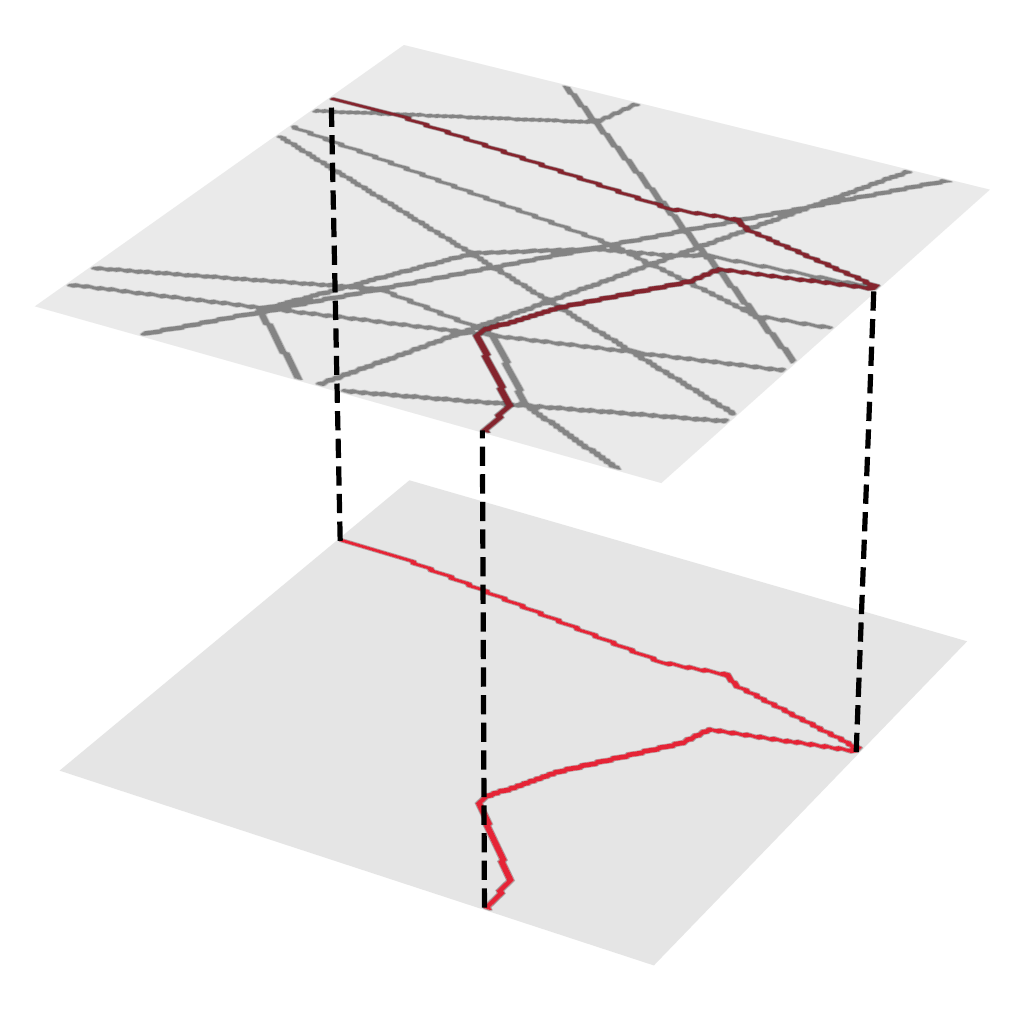}
    \end{minipage}\hfill\\
    \end{minipage}
    \caption{\small Additional depiction of the partitioning and subdivision happening layer after layer. Each unit also introduces a path in the input space which is depicted below the current partitioning with the highlighted path linked via a dotted line.\normalsize}
    \label{fig:additional_detail}
\end{figure}
\newpage

\begin{figure}[H]
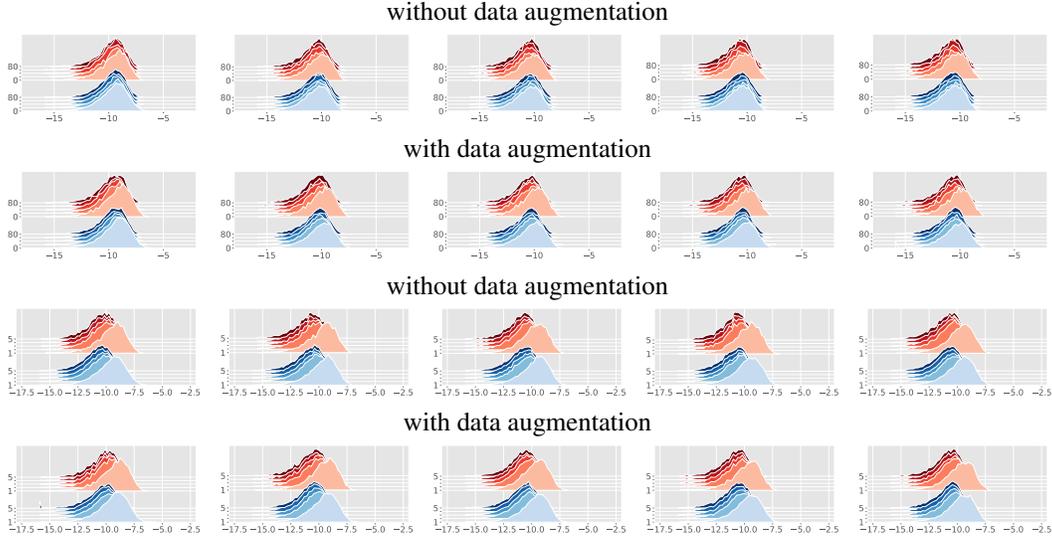

    \centering
    \begin{minipage}{1\linewidth}
    \centering
    without data augmentation
    \\
    \foreach \layer in {0,1,2,3,4}
        {
            \includegraphics[width=0.19\linewidth]{DISTANCES/loghistogram_epochs_save_test_v2_cifar10_False_l\layer.pdf}
        }
    \\
    with data augmentation
    \\
    \foreach \layer in {0,1,2,3,4}
        {
            \includegraphics[width=0.19\linewidth]{DISTANCES/loghistogram_epochs_save_test_v2_cifar10_True_l\layer.pdf}
        }
    \\
    without data augmentation
    \\
    \foreach \epoch in {0,20,40,60,80}
        {
            \includegraphics[width=0.19\linewidth]{DISTANCES/loghistogram_layers_save_test_v2_cifar10_False_e\epoch.pdf}
        }
    \\
    with data augmentation
    \\
    \foreach \epoch in {0,20,40,60,80}
        {
            \includegraphics[width=0.19\linewidth]{DISTANCES/loghistogram_layers_save_test_v2_cifar10_True_e\epoch.pdf}
        }
    
    \end{minipage}
    \caption{\small Additional depiction of the distances distribution.\normalsize}
    \label{fig:additional_detail3}
\end{figure}

\begin{figure}[H]
    \centering
    \hspace{0.1cm}\underline{\hspace{0.1cm}Layer 1\hspace{0.1cm}} \hspace{0.06cm}$\rightarrow$\hspace{0.06cm} \underline{\hspace{2.05cm}Each Layer $1$ region $q^{(1)}$ leads a different PD\hspace{2.05cm}}\hspace{0.08cm}
    \\
    \begin{minipage}{1\linewidth}
    \centering
    \includegraphics[width=0.1050\linewidth]{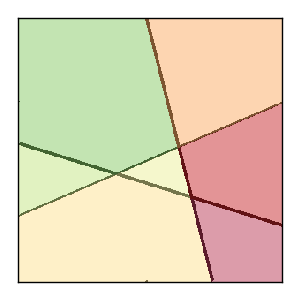}
    \includegraphics[width=0.1050\linewidth]{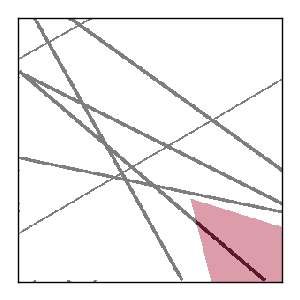}
    \includegraphics[width=0.1050\linewidth]{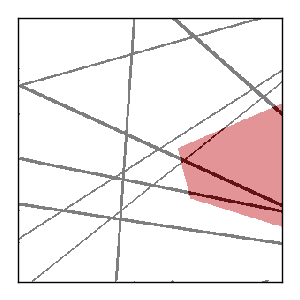}
    \includegraphics[width=0.1050\linewidth]{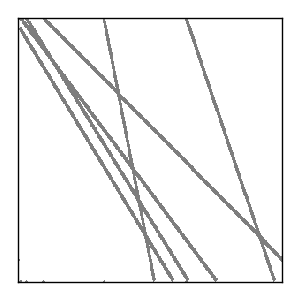}
    \includegraphics[width=0.1050\linewidth]{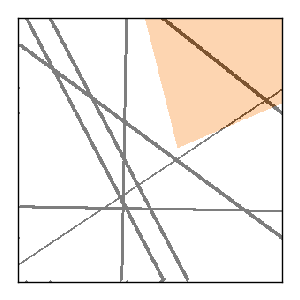}
    \includegraphics[width=0.1050\linewidth]{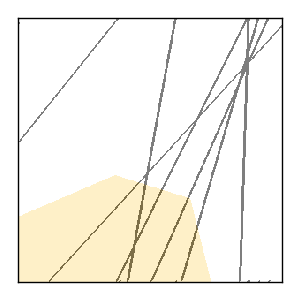}
    \includegraphics[width=0.1050\linewidth]{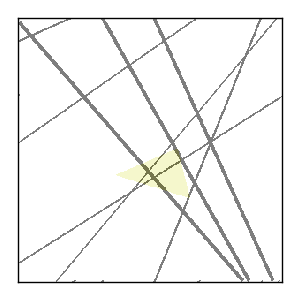}
    \includegraphics[width=0.1050\linewidth]{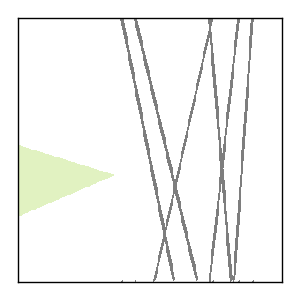}
    \includegraphics[width=0.1050\linewidth]{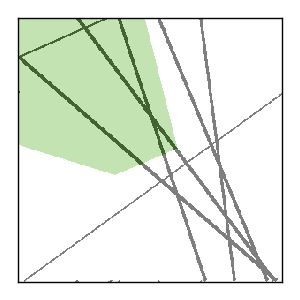}
    \end{minipage}\\
    \hspace{3.6cm}$\downarrow$\\
    \hspace{0.1cm}\underline{Layer 1 and 2} \hspace{0.06cm}$\leftarrow$\hspace{0.06cm} \underline{\hspace{2.05cm}Sub-division of each region with respective PD\hspace{2.05cm}}\hspace{0.08cm}
    \\
    \begin{minipage}{1\linewidth}
    \centering
    \includegraphics[width=0.1050\linewidth]{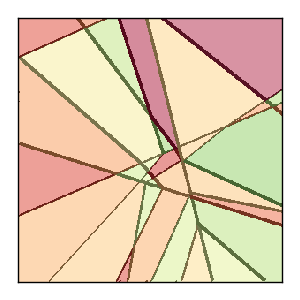}
    \includegraphics[width=0.1050\linewidth]{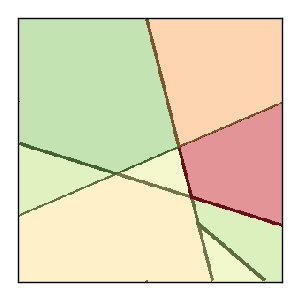}
    \includegraphics[width=0.1050\linewidth]{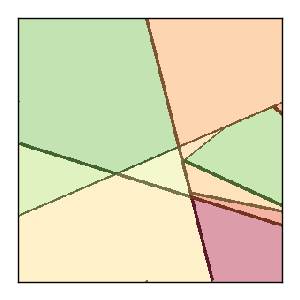}
    \includegraphics[width=0.1050\linewidth]{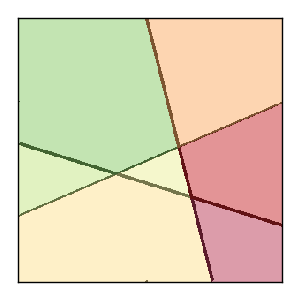}
    \includegraphics[width=0.1050\linewidth]{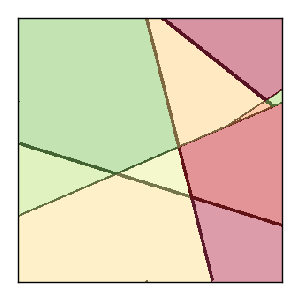}
    \includegraphics[width=0.1050\linewidth]{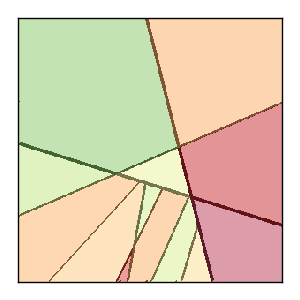}
    \includegraphics[width=0.1050\linewidth]{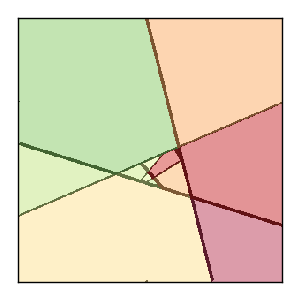}
    \includegraphics[width=0.1050\linewidth]{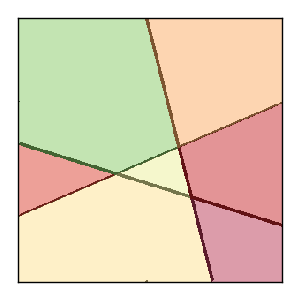}
    \includegraphics[width=0.1050\linewidth]{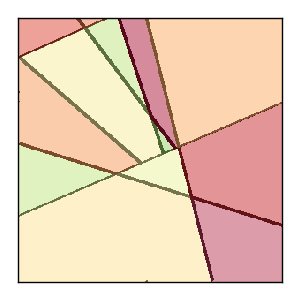}
    \end{minipage}
    \caption{\small Visual depiction of Cor.~\ref{cor:2layer} for a $2$-layer DN with $3$ units at the first layer (leading to $4$ regions) and $8$ units at the second layer with random weights and biases. The colors are the DN input space partitioning w.r.t. the first layer. Then for each color (or region) the layer1-layer2 defines a specific Power Diagram that will sub-divide this aforementioned region (this is the first row) where the region is colored and the Power Diagram is depicted for the whole input space. Then this sub-division is applied onto the first layer region only as it only sub-divides its region (this is the second row on the right). And finally grouping together this process for each of the $4$ region, we obtain the layer-layer 2 space partitioning (second row on the left).\normalsize}
    \label{fig:2layer3unit_partitioning}
\end{figure}

\section{Proofs}

\subsection{Proof of Lemma~\ref{prop:vertical_unit}: Single Unit Projection}
\label{proof:1}
Follows from \cite{johnson1960advanced} that demonstrates that the boundaries in the input space $\mathcal{X}$ defining the regions of the unit PD are the vertical projections of the polytope ($\mathcal{P}_k$) face intersections defined as $\Epsilon_{k,r}(\mathcal{X}) \cap \Epsilon_{k,r*}(\mathcal{X})$ for neighbouring faces $r$ and $r^*$.

\subsection{Proof of Lemma~\ref{lemma:vertical_layer}: Single Layer Projection}
\label{proof:2}
Follows from the previous section in which it is demonstrated that the boundaries of a single unit PD is obtained by vertical projection of the polytope edges. In the layer case, the edges of $\P$ correspond to all the points in the input space $\mathcal{X}$ s.t. $\bz$ belongs to an edge of at least one of the polytopes $\mathcal{P}_k,\forall k$ making up $\P$. The layer PD having for boundaries the union of all the per unit PD boundaries, it follows directly that the vertical projection of the edges of $\P$ form the layer PD boundaries.

\subsection{Proof Complexity}
\label{proof:layer_complexity}
Recalling Section~\ref{sec:back}, a layer $\ell$ MASO produces its output by: first inferring in which cell $\br(\bz^{(\ell-1)}(\bx))$ lies in the layer PD partitioning from Theorem~\ref{thm:layer_PD}; and then affinely transforming the input via $A^{(\ell)}_{\br(\bz^{(\ell-1)}(\bx))} \bz^{(\ell-1)} + B^{(\ell)}_{\br(\bz^{(\ell-1)}(\bx))}$. 
The inference problem of determining in which power diagram cell an input $\bx$ falls 
%

\subsection{Proof of Theorem~\ref{thm:unitPD}: Single Unit Power Diagram}
\label{sec:proof_one_layer}

Let first consider the case of $2$ units.

\begin{lemma}
\label{lemma:2units}
The layer input space of the $[1^{th},2^{th}]$-MASO units at layer $l$ is a weighted Voronoi Diagram with a maximum of $R^{\ell} \times R^{\ell}$
regions, centroids 
$
\bA \{[t]_{1},[t]_{2}\} =    [\bA]_{1,[t]_{1},\bigcdot} + [\bA]_{2,[t]_{2},\bigcdot}$, and biases $\bb \{ [t]_{1},[t]_{2} \} = [\bb]_{1,[t]_{1}}+ [\bb]_{2,[t]_{2}} -2 \langle [\bA]_{1,[t]_{1},\bigcdot} , [\bA]_{2,[t]_{2},\bigcdot} \rangle$.
\end{lemma}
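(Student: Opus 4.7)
The plan is to reduce the two-unit partition to the single-unit power diagram of Theorem~\ref{thm:unitPD} by bundling the two units into one auxiliary MAS unit with $R^2$ affine pieces. The key enabling observation is that the $\max$ in each unit ranges over its own region index, so the joint argmax decomposes as a product of the per-unit argmaxes.

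First I would write
\begin{align*}
z_1(\bx)+z_2(\bx) \;=\; \max_{(t_1,t_2)\in\{1,\dots,R\}^2}\Big(\langle [\bA]_{1,t_1,\bigcdot}+[\bA]_{2,t_2,\bigcdot},\,\bx\rangle + [\bb]_{1,t_1}+[\bb]_{2,t_2}\Big),
\end{align*}
and note that the pair $(t_1^\star,t_2^\star)$ attaining this joint maximum coincides with the pair of per-unit argmaxes $(r_1(\bx),r_2(\bx))$ from (\ref{eq:r_unit}). Hence the joint partition cell indexed by $(t_1,t_2)$ equals the $(t_1,t_2)$-cell of a single auxiliary MAS unit with slope $\bA\{t_1,t_2\}=[\bA]_{1,t_1,\bigcdot}+[\bA]_{2,t_2,\bigcdot}$ and affine offset $\bb\{t_1,t_2\}=[\bb]_{1,t_1}+[\bb]_{2,t_2}$ defined over $R^2$ pieces. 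This immediately yields the upper bound of $R\times R$ regions and the stated centroid formula.

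Second, I would apply Theorem~\ref{thm:unitPD} to this auxiliary unit. The theorem produces a power (weighted Voronoi) diagram whose centroid equals the slope $\bA\{t_1,t_2\}$ and whose weight equals $2\bb\{t_1,t_2\}+\|\bA\{t_1,t_2\}\|^2$. Expanding
\begin{align*}
\|[\bA]_{1,t_1,\bigcdot}+[\bA]_{2,t_2,\bigcdot}\|^2 = \|[\bA]_{1,t_1,\bigcdot}\|^2+\|[\bA]_{2,t_2,\bigcdot}\|^2+2\langle[\bA]_{1,t_1,\bigcdot},[\bA]_{2,t_2,\bigcdot}\rangle,
\end{align*}
and absorbing each per-unit self-norm into the corresponding single-unit radius of Theorem~\ref{thm:unitPD} yields the claimed bias $[\bb]_{1,t_1}+[\bb]_{2,t_2}$ together with a cross-term $\pm 2\langle[\bA]_{1,t_1,\bigcdot},[\bA]_{2,t_2,\bigcdot}\rangle$; this interaction records the coupling between the two units' slopes and is precisely what distinguishes the two-unit PD from a naive product of per-unit PDs.

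The bulk of the work is straightforward algebra. The main subtlety I anticipate is translating between the two equivalent conventions for specifying a power diagram, namely $\argmin_r(\|\bx-\mu_r\|^2-{\rm rad}_r)$ versus $\argmax_r(\langle\mu_r,\bx\rangle+b_r)$, via the identity $\argmax_r(\langle\mu_r,\bx\rangle+b_r)=\argmin_r\bigl(\|\bx-\mu_r\|^2-(2b_r+\|\mu_r\|^2)\bigr)$. Once this identification is fixed, the bias formula stated in the lemma drops out of the single-unit case applied to the auxiliary $R^2$-piece MAS, and no further geometric argument is needed.
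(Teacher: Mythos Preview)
Your proposal is correct and follows essentially the same route as the paper: both arguments hinge on the separability observation that the joint $\argmax$ over $(t_1,t_2)$ of the sum $\langle[\bA]_{1,t_1,\bigcdot},\bx\rangle+[B]_{1,t_1}+\langle[\bA]_{2,t_2,\bigcdot},\bx\rangle+[B]_{2,t_2}$ coincides with the pair of per-unit argmaxes, followed by the $\argmax\leftrightarrow\argmin$ conversion and a completion of the square. The only difference is packaging: the paper carries out the algebra directly from $\mathcal{V}([t]_1)\cap\mathcal{V}([t]_2)$, whereas you bundle the two units into an auxiliary $R^2$-piece MAS and invoke Theorem~\ref{thm:unitPD}, which is a slightly more modular but equivalent derivation.
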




\small
\begin{proof}
\begin{align*}
 \mathcal{V}(&[t]_{1},[t]_{2})=  \mathcal{V}([t]_{1})  \cap \mathcal{V}([t]_{2})\\
& =  \{ \bx \in \mathbb{R}^{D}|\argmax_{i} \langle \bx,[\bA]_{1,i,\bigcdot}  \rangle +[B]_{1,i} = [t]_{1} \}  \cap \{ \bx \in \mathbb{R}^{D}|\argmax_{j} \langle \bx,[\bA]_{2,j,\bigcdot}  \rangle +[B]_{2,j} = [t]_{2} \}\\
& =  \{ \bx \in \mathbb{R}^{D}| \argmin\limits_{i} \left \| \bx- [\bA]_{1,i,\bigcdot} \right \|^2 + [\bb]_{1,i} = [t]_{1} \}  \cap  \{ \bx \in \mathbb{R}^{D}| \argmin\limits_{j} \left \| \bx- [\bA]_{2,j,\bigcdot} \right \|^2 + [\bb]_{2,j} = [t]_{2} \} \nonumber \\
& =  \{ \bx \in \mathbb{R}^{D}| \argmin\limits_{i,j} \left \| \bx- [\bA]_{1,i,\bigcdot} \right \|^2 + [\bb]_{1,i}   +  \left \| \bx- [\bA]_{2,j,\bigcdot} \right \|^2 + [\bb]_{2,j} = ([t]_{1},[t]_{2}) \} \\ \nonumber
& = \{ \bx \in \mathbb{R}^{D}| \argmin\limits_{i,j} 2\left \| \bx \right \|^{2} + \left \| [\bA]_{1,i,\bigcdot} \right \|^2  -2\langle  \bx , [\bA]_{1,i,\bigcdot}  \rangle  + [\bb]_{1,i}
   + \left \| [\bA]_{2,j,\bigcdot} \right \|^2 -2 \langle  \bx ,  [\bA]_{2,j,\bigcdot} \rangle + [\bb]_{2,j} = ([t]_{1},[t]_{2}) \} \\ \nonumber
& = \{ \bx \in \mathbb{R}^{D}| \argmin\limits_{i,j} 2\left \| \bx \right \|^{2}    -2\langle  \bx , [\bA]_{1,i,\bigcdot}+[\bA]_{2,j,\bigcdot}  \rangle  + \left \| [\bA]_{1,i,\bigcdot} \right \|^2+\left \| [\bA]_{2,j,\bigcdot} \right \|^2 + [\bb]_{1,i}
    + [\bb]_{2,j} = ([t]_{1},[t]_{2}) \} \\ \nonumber
& = \{ \bx \in \mathbb{R}^{D}| \argmin\limits_{i,j} \left \| \bx -([\bA]_{1,i,\bigcdot} + [\bA]_{2,j,\bigcdot}  ) \right \|^2 +  [\bb]_{1,i}+ [\bb]_{2,j} -2 \langle [\bA]_{1,i,\bigcdot} , [\bA]_{2,j,\bigcdot} \rangle  = ([t]_{1},[t]_{2})  \} \\ \nonumber
&  = \{ \bx \in \mathbb{R}^{D}| \argmin\limits_{i,j} \left \| \bx -\bA \{ i,j \}  \right \|^2 + \bb \{ i,j \} = ([t]_{1},[t]_{2})  \},
\end{align*}
where, $ \bb \{i,j \} =\| [\bA]_{1,i,\bigcdot}\|^2+2[B]_{1,i}+ \| [\bA]_{2,j,\bigcdot}\|^2+2[B]_{2,j} +2 \langle [\bA]_{1,i,\bigcdot} , [\bA]_{2,j,\bigcdot} \rangle $ and, $\bA \{ i,j \} =    [\bA]_{1,i,\bigcdot} + [\bA]_{2,j,\bigcdot}$.
\\

\end{proof}
\normalsize

The $D$ units case: apply recursively Lemma \ref{lemma:2units}.

\subsection{Proof of Theorem~\ref{thm:layerPD}: Single Layer Power Diagram}
\label{appendix:proof_affine}
We first derive a preliminary result in which a layer follows an affine transformation to then generalize by considering how each region of the previously built partitioning transforms the inputs lying in it linearly.

\paragraph{Input Space Partitioning of a Single Layer Following an Affine Transform}

Consider a layer with input an affine transformation of $\bx \in \mathcal{X}$ as $G\bx+\bh$ where $G$ is an arbitrary matrix and $\bh$ an arbitrary vector. We consider this affine transformation as a linear DN layer. We now express the layer PD w.r.t. the input space as $\Omega^{(2)}(\mathcal{X}^{(0)})$.
Define the centroids and radius
\begin{align}
    \mu^{(1\leftarrow 2)}_{\br^{(2)}} = & G^\top \sum_{k=1}^{D^{(2)}} [A^{(2)}]_{k,[\br^{(2)}]_{k},\bigcdot}=G^\top \mu^{(2)}_{\br^{(2)}}\\
    {\rm rad}^{(1\leftarrow 2)}_{\br^{(2)}} = &-\| G^\top \mu^{(1\leftarrow 2)}_{\br^{(2)}}\|^2-2 {\mu^{(1\leftarrow 2)}_{\br^{(\ell)}}}^\top \bh-2\sum_{k=1}^{D^{(2)}} [B^{(2)}]_{k,[\br]_{k}}
\end{align}
where $\mu^{(2)}_{\br^{(2)}}$ is as defined in Theorem~\ref{thm:layerPD}.
\begin{lemma}
\label{thm:layer_affine}
The input space partitioning of a 2-layer DN with first layer linear is given by \\$\Omega^{(1, 2)}(\mathcal{X}^{(0)})=\PD(\mathcal{X};\{(\mu_{\br},{\rm rad}_{\br}),\forall \br \})$.
\end{lemma}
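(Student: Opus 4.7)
The plan is to pull back the layer-2 power diagram on $\mathcal{X}^{(1)}$ through the affine map $\bx \mapsto G\bx+\bh$ and show that the resulting partition of $\mathcal{X}^{(0)}$ is itself a power diagram with the claimed centroids and radii. The proof is essentially a one-shot computation: apply Theorem~\ref{thm:layerPD} to layer 2 in its own input space, substitute $\by = G\bx+\bh$ into the Laguerre-distance argmin, expand, and re-complete the square to identify the pulled-back parameters.

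Concretely, Theorem~\ref{thm:layerPD} gives, for any $\by \in \mathcal{X}^{(1)}$, the cell assignment
\begin{equation*}
    \br^{(2)}(\by) \;=\; \argmin_{\br}\; \|\by-\mu^{(2)}_{\br}\|^2 \;-\; {\rm rad}^{(2)}_{\br},
\end{equation*}
with $\mu^{(2)}_{\br}=\sum_k [A^{(2)}]_{k,[\br]_k,\bigcdot}$ and ${\rm rad}^{(2)}_{\br}$ as in Theorem~\ref{thm:layerPD}. Substituting $\by=G\bx+\bh$ and expanding yields
\begin{equation*}
    \|G\bx+\bh-\mu^{(2)}_{\br}\|^2 \;=\; \|G\bx\|^2+\|\bh\|^2+2\langle G\bx,\bh\rangle - 2\langle \bx,G^\top \mu^{(2)}_{\br}\rangle - 2\langle \bh,\mu^{(2)}_{\br}\rangle+\|\mu^{(2)}_{\br}\|^2.
\end{equation*}
The first three terms do not depend on $\br$ and can be dropped inside the argmin. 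What remains, after adding and subtracting $\|\bx\|^2$ and $\|G^\top\mu^{(2)}_{\br}\|^2$ to complete the square in $\bx$, is an argmin of the form $\|\bx-\mu^{(1\leftarrow 2)}_{\br}\|^2-{\rm rad}^{(1\leftarrow 2)}_{\br}$ with $\mu^{(1\leftarrow 2)}_{\br}=G^\top \mu^{(2)}_{\br}$ and a radius that collects the $\br$-dependent constants. This matches the claimed centroid formula exactly, and the radius agrees with the stated expression once the global-shift invariance of PDs (noted after the definition of PD) is used to discard any $\br$-independent offset.

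The main obstacle is purely bookkeeping: keeping the sign conventions straight when folding ${\rm rad}^{(2)}_{\br}=2\langle \mathbf{1},B^{(2)}_{\br}\rangle+\|\mu^{(2)}_{\br}\|^2$ together with the cross term $-2\langle \bh,\mu^{(2)}_{\br}\rangle$ and the newly-introduced $\|G^\top\mu^{(2)}_{\br}\|^2=\|\mu^{(1\leftarrow 2)}_{\br}\|^2$ so that the final ${\rm rad}^{(1\leftarrow 2)}_{\br}$ is consistent with the recursion kernel used in Theorem~\ref{thm:subdivi2} (with $\bh$ playing the r\^{o}le of the accumulated bias $B^{(1\rightarrow 1)}_{\br^{(1)}}$ and $G$ the accumulated slope $A^{(1\rightarrow 1)}_{\br^{(1)}}$). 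Once this identification is in place, the lemma follows immediately, and it will serve as the one-step inductive template from which the full $L$-layer recursion in Theorem~\ref{thm:subdivi2} is assembled: inside any single cell $\omega^{(1,\dots,\ell-1)}_{\br^{(1)},\dots,\br^{(\ell-1)}}$ the composition $f^{(1\to\ell-1)}$ restricts to an affine map $\bx\mapsto A^{(1\to\ell-1)}_{\br^{(1)},\dots,\br^{(\ell-1)}}\bx + B^{(1\to\ell-1)}_{\br^{(1)},\dots,\br^{(\ell-1)}}$ so the present lemma applies verbatim with $(G,\bh)$ equal to this affine pair.
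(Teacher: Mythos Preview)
Your proposal is correct and follows essentially the same algebraic route as the paper: substitute the affine map into the Laguerre-distance argmin, expand, discard $\br$-independent terms, and complete the square to read off the pulled-back centroids $G^\top\mu^{(2)}_{\br}$ and radii. The only organizational difference is that you invoke Theorem~\ref{thm:layerPD} as a black box and do a single layer-level pullback, whereas the paper re-runs the two-unit intersection argument (Lemma~\ref{lemma:2units}) with $G\bx+\bh$ inserted from the start and then recurses over units; your packaging is slightly more modular but the computation is the same.
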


\begin{lemma}
\label{lemma:2layers}
The layer input space of the $[1^{th},2^{th}]$-MASO units at layer $l$ is a weighted Voronoi Diagram with a maximum of $R^{\ell} \times R^{\ell}$
regions, centroids 
$
\bA \{[t]_{1},[t]_{2}\} =   G^T [\bA]_{1,[t]_{1},\bigcdot} + G^T[\bA]_{2,[t]_{2},\bigcdot}$, and biases $\bb \{[t]_{1},[t]_{2} \} = [\bb]_{1,[t]_{1}}+ [\bb]_{2,[t]_{2}} -2 \langle G^T [\bA]_{1,[t]_{1},\bigcdot} , G^T [\bA]_{2,[t]_{2},\bigcdot} \rangle$.
\end{lemma}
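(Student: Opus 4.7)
The plan is to reduce Lemma~\ref{lemma:2layers} to Lemma~\ref{lemma:2units} by observing that pre-composing a MASO with an affine transform yields another MASO with modified slope and bias parameters. Once the pre-transform is absorbed into effective MAS parameters, the combinatorial structure of the input-space partition reduces to the already-analyzed two-unit case.

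First I would rewrite each MAS unit acting on $G\bx + \bh$ as an equivalent MAS unit acting on $\bx$ directly. Using linearity of the inner product,
\begin{align*}
\langle G\bx + \bh,\, [\bA]_{k,r,\bigcdot}\rangle + [B]_{k,r} \;=\; \langle \bx,\, G^\top [\bA]_{k,r,\bigcdot}\rangle + \bigl([B]_{k,r} + \langle \bh, [\bA]_{k,r,\bigcdot}\rangle\bigr),
\end{align*}
so the composition coincides with a MAS having effective slopes $\widetilde{\bA}_{k,r} := G^\top [\bA]_{k,r,\bigcdot}$ and effective offsets $\widetilde{B}_{k,r} := [B]_{k,r} + \langle \bh, [\bA]_{k,r,\bigcdot}\rangle$. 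Because this rewriting is pointwise in $r$ and preserves the $\max_r$ operation unit by unit, the induced input-space partition is exactly the partition produced by a standard MASO with parameters $(\widetilde{\bA}, \widetilde{B})$.

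Next I would invoke Lemma~\ref{lemma:2units} directly on these effective parameters. It yields a weighted Voronoi diagram with at most $R^\ell \times R^\ell$ cells indexed by $([t]_1,[t]_2)$, centroids $\widetilde{\bA}_{1,[t]_1} + \widetilde{\bA}_{2,[t]_2} = G^\top[\bA]_{1,[t]_1,\bigcdot} + G^\top[\bA]_{2,[t]_2,\bigcdot}$, and biases
\begin{align*}
\bb\{[t]_1,[t]_2\} \;=\; \widetilde{\bb}_{1,[t]_1} + \widetilde{\bb}_{2,[t]_2} \;-\; 2\bigl\langle \widetilde{\bA}_{1,[t]_1},\, \widetilde{\bA}_{2,[t]_2}\bigr\rangle,
\end{align*}
where $\widetilde{\bb}_{k,t}$ denotes the single-unit Voronoi bias obtained from $\widetilde{\bA}_{k,t}$ and $\widetilde{B}_{k,t}$ via the max-to-quadratic conversion used in the proof of Lemma~\ref{lemma:2units}. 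Expanding the cross term reproduces the stated $-2\langle G^\top[\bA]_{1,[t]_1,\bigcdot}, G^\top[\bA]_{2,[t]_2,\bigcdot}\rangle$, and the $\bh$-dependent shifts are absorbed into the per-unit biases $[\bb]_{k,[t]_k}$ of the statement.

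The only delicate point is confirming that the cross term depends solely on the effective slopes and carries no $\bh$-contribution; this is automatic because the cross term arises from expanding $\|\widetilde{\bA}_{1,[t]_1} + \widetilde{\bA}_{2,[t]_2}\|^2$ in Lemma~\ref{lemma:2units} and involves no offsets at all. The upper bound of $R^\ell \times R^\ell$ cells is immediate: the partition is the common refinement of two per-unit partitions, each with at most $R^\ell$ cells. Convexity of the cells follows from the corollary to Theorem~\ref{thm:layerPD}, completing the reduction.
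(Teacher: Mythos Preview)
Your proposal is correct. The paper's own proof does not invoke Lemma~\ref{lemma:2units}; instead it redoes the entire intersection-of-regions computation from scratch with $G\bx+\bh$ carried along explicitly---writing out $\mathcal V([t]_1)\cap\mathcal V([t]_2)$, converting each $\argmax$ to an $\argmin$ of a squared distance, combining the two, and simplifying. Your route is the same algebra but organized as a genuine reduction: absorb the affine pre-composition into effective MAS parameters $(\widetilde{\bA},\widetilde{B})$, then apply Lemma~\ref{lemma:2units} verbatim. This is cleaner and makes transparent that nothing new happens beyond the two-unit case once slopes become $G^\top[\bA]_{k,r,\bigcdot}$ and offsets become $[B]_{k,r}+\langle \bh,[\bA]_{k,r,\bigcdot}\rangle$; the paper's direct computation, by contrast, re-derives the cross term and the $\bh$-shifts in situ. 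Your observation that the cross term $-2\langle G^\top[\bA]_{1,[t]_1,\bigcdot},G^\top[\bA]_{2,[t]_2,\bigcdot}\rangle$ carries no $\bh$-dependence because it comes purely from the squared-norm expansion is exactly the point that makes the reduction go through, and matches what the paper computes line by line.
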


\small
\begin{proof}
\begin{align*}
 \mathcal{V}(&[t]_1,[t]_2)=  \mathcal{V}([t]_{1})  \cap \mathcal{V}([t]_{2})\\
& =  \{ \bx \in \mathbb{R}^{D}|\argmax_{i} \langle G\bx+h,[\bA]_{1,i,\bigcdot}  \rangle +[B]_{1,i} = [t]_{1} \}  \cap \{ \bx \in \mathbb{R}^{D}|\argmax_{j} \langle G\bx+h,[\bA]_{2,j,\bigcdot}  \rangle +[B]_{2,j} = [t]_{2} \}\\
& =  \{ \bx \in \mathbb{R}^{D}|\argmax_{i} \langle \bx,G^T [\bA]_{1,i,\bigcdot}  \rangle +\langle [\bA]_{1,i,\bigcdot},h\rangle+[B]_{1,i} = [t]_{1} \}  \cap \{ \bx \in \mathbb{R}^{D}|\argmax_{j} \langle \bx,G^T [\bA]_{2,j,\bigcdot}  \rangle +\langle [\bA]_{2,j,\bigcdot},h\rangle +[B]_{2,j} = [t]_{2} \}\\
& =  \{ \bx \in \mathbb{R}^{D}| \argmin\limits_{i} \left \| \bx- G^T [\bA]_{1,i,\bigcdot} \right \|^2 + [\bb]_{1,i} = [t]_{1} \}  \cap  \{ \bx \in \mathbb{R}^{D}| \argmin\limits_{j} \left \| \bx- G^T [\bA]_{2,j,\bigcdot} \right \|^2 + [\bb]_{2,j} = [t]_{2} \} \nonumber \\
& =  \{ \bx \in \mathbb{R}^{D}| \argmin\limits_{i,j} \left \| \bx- G^T [\bA]_{1,i,\bigcdot} \right \|^2 + [\bb]_{1,i}   +  \left \| \bx- G^T [\bA]_{2,j,\bigcdot} \right \|^2 + [\bb]_{2,j} = ([t]_{1},[t]_{2}) \} \\ \nonumber
& = \{ \bx \in \mathbb{R}^{D}| \argmin\limits_{i,j} 2\left \| \bx \right \|^{2} + \left \| G^T [\bA]_{1,i,\bigcdot} \right \|^2  -2\langle  \bx , G^T [\bA]_{1,i,\bigcdot}  \rangle  + [\bb]_{1,i}
   + \left \| G^T [\bA]_{2,j,\bigcdot} \right \|^2 -2 \langle  \bx ,  G^T [\bA]_{2,j,\bigcdot} \rangle + [\bb]_{2,j} = ([t]_{1},[t]_{2}) \} \\ \nonumber
& = \{ \bx \in \mathbb{R}^{D}| \argmin\limits_{i,j} 2\left \| \bx \right \|^{2}    -2\langle  \bx , G^T [\bA]_{1,i,\bigcdot}+G^T [\bA]_{2,j,\bigcdot}  \rangle  + \left \| G^T [\bA]_{1,i,\bigcdot} \right \|^2+\left \| G^T [\bA]_{2,j,\bigcdot} \right \|^2 + [\bb]_{1,i}
    + [\bb]_{2,j} = ([t]_{1},[t]_{2}) \} \\ \nonumber
& = \{ \bx \in \mathbb{R}^{D}| \argmin\limits_{i,j} \left \| \bx -(G^T [\bA]_{1,i,\bigcdot} + G^T [\bA]_{2,j,\bigcdot}  ) \right \|^2 +  [\bb]_{1,i}+ [\bb]_{2,j} -2 \langle G^T [\bA]_{1,i,\bigcdot} , G^T [\bA]_{2,j,\bigcdot} \rangle  = ([t]_{1},[t]_{2})  \} \\ \nonumber
&  = \{ \bx \in \mathbb{R}^{D}| \argmin\limits_{i,j} \left \| \bx -G^T\bA \{i,j\}  \right \|^2 + \bb \{i,j \} = ([t]_{1},[t]_{2})  \},
\end{align*}
where, $ \bb\{i,j\} = [\bb]_{1,i}+ [\bb]_{2,j} -2 \langle G^T [\bA]_{1,i,\bigcdot} , G^T [\bA]_{2,j,\bigcdot} \rangle $ and, $\bA \{i,j \} =    [\bA]_{1,i,\bigcdot} + [\bA]_{2,j,\bigcdot}$\\and $[\bb]_{1,i}=-\|G^T [\bA]_{1,i,\bigcdot}\|^2-2\langle [\bA]_{1,i,\bigcdot},h \rangle-2[B]_{1,i}$. 
\\
We thus have $b\{i,j\}=-2B\{i,j\}-2A\{i,j\}^Th-\|G^TA\{i,j\} \|^2$.
\end{proof}
\normalsize


\end{document}